\theoremstyle{plain}
\newtheorem{theorem}{Theorem}[section]
\newtheorem*{theorem*}{Theorem}
\newtheorem*{"theorem"}{``Theorem''}
\newtheorem{corollary}[theorem]{Corollary}
\newtheorem{lemma}[theorem]{Lemma}
\theoremstyle{definition}
\theoremstyle{remark}
\newtheorem{remark}[theorem]{Remark}
\newtheorem{example}[theorem]{Example}
\numberwithin{equation}{section}
\newenvironment{pde}{\left\{\begin{array}{rll} } {\end{array}\right.}
\newcommand{\N}{\mathbb N}
\newcommand{\R}{\mathbb R} 
\newcommand{\E}{{\mathbb E}}
\renewcommand{\P}{{\mathbb P}}
\newcommand{\dist}{{\rm dist}}
\renewcommand{\div}{{\rm div}}
\newcommand{\spt}{{\mathrm{spt}}}
\newcommand{\dom}{\mathrm{dom}}
\renewcommand{\H}{{\mathcal H}}
\newcommand{\F}{{\mathcal F}}
\newcommand{\Ra} {\Rightarrow}
\newcommand{\wto}{\rightharpoonup}
\renewcommand{\d}{\mathrm{d}}
\newcommand{\dH}{\mathrm{d}\mathcal{H}}
\newcommand{\dx}{\,\mathrm{d}x}
\newcommand{\dz}{\,\mathrm{d}z}
\newcommand{\ds}{\,\mathrm{d}s}
\newcommand{\dt}{\,\mathrm{d}t}
\newcommand{\dr}{\,\mathrm{d}r}
\renewcommand{\P}{\mathbb{P}}
\newcommand{\eps}{\varepsilon}
\newcommand{\average}{{\mathchoice {\kern1ex\vcenter{\hrule height.4pt
width 6pt depth0pt} \kern-9.7pt} {\kern1ex\vcenter{\hrule
height.4pt width 4.3pt depth0pt} \kern-7pt} {} {} }}
\newcommand{\avint}{\average\int}
\newcommand\showlabel{\addtocounter{equation}{1}\tag{\theequation}}
\begin{document}

\title[SGD in machine learning: Implicit bias]{Stochastic gradient descent with noise of machine learning type\\{\small Part II: Continuous time analysis}}


\author{Stephan Wojtowytsch}
\address{Stephan Wojtowytsch\\
Department of Mathematics\\
Texas A\&M University\\
155 Ireland Street\\
College Station, TX 77840
}
\email{stephan@tamu.edu}

\date{\today}

\subjclass[2020]{
Primary:
90C26, 
Secondary: 
68T07, 
35K65, 
60H30
}
\keywords{Stochastic gradient descent, non-convex optimization, machine learning, deep learning, overparametrization, stochastic differential equation, invariant distribution, implicit bias, global minimum selection, flat minimum selection, degenerate diffusion equation, Poincar\'e-Hardy inequality}

\begin{abstract}
The representation of functions by artificial neural networks depends on a large number of parameters in a non-linear fashion. Suitable parameters of these are found by minimizing a `loss functional', typically by stochastic gradient descent (SGD) or an advanced SGD-based algorithm.

In a continuous time model for SGD with noise that follows the `machine learning scaling', we show that in a certain noise regime, the optimization algorithm prefers `flat' minima of the objective function in a sense which is different from the flat minimum selection of continuous time SGD with homogeneous noise.
\end{abstract}

\maketitle

\vspace{-12mm}


\section{Introduction}

A prototypical task in machine learning involves the approximation of a function in high dimension with respect to an unknown data distribution $\mu$. In practical terms, this may correspond to minimizing a functional of the form
\[\showlabel \label{eq mse loss}
L(\theta) = \frac1{2n}\sum_{i=1}^n \big|h(\theta, x_i) - y_i\big|^2
\]
where $\mathcal H = \{h(\theta, \cdot): \R^d\to \R \,|\,\theta\in\R^m\}$ is a parametrized function class and $(x_1,y_1), \dots, (x_n, y_n)$ are known data samples. While it is expensive to compute the gradient of $L$ if the number of data points $n$ is very large, it is often cheap to estimate the gradient by randomly selecting $1\leq i\leq n$ and estimating
\[\showlabel\label{eq batch gradient estimator}
\nabla L(\theta) \approx \big(h(\theta, x_i) - y_i\big)\,\nabla_\theta h(\theta,x_i).
\]
For parameter optimization, it is common to initialize $\theta_0$ from a probability distribution and optimize the parameters inductively according to a scheme
\[
\theta_{t+1} = \theta_t - \eta_t \,\frac1k \sum_{i \in I_t} \big(h(\theta, x_i) - y_i\big)\,\nabla_\theta h(\theta,x_i)
\]
where $I_t\subseteq\{1,\dots, n\}$ is a sample set of $k$ data pointsused in the $t$-th time step, and $\eta_t>0$ is a time-step size (or `learning rate'). In the limit of using all $k=n$ data points and infinitesimal step size, these training dynamics are captured by the gradient-flow 
\[
\dot \theta_t = - \nabla L(\theta_t).
\]
In this article, we consider the impact of a first order stochastic correction on the classical gradient flow, which is modelled by a stochastic differential equation (SDE)
\begin{equation}\label{eq intro sde}
\dot\theta_t = - \nabla L(\theta_t) + \sqrt{\eta\,\Sigma(\theta_t)}\,\d B_t.
\end{equation}
While non-stochastic gradient descent may easily get stuck in a local minimizer in a complex energy landscape, the stochasticity of SGD is believed to be beneficial to escape high loss local minima and saddle points. It is furthermore believed to select `flat' minima among the many minimizers of an objective function if the number $m$ of parameters exceeds the number $n$ of data points. These flat minima are believed to have beneficial generalization properties (i.e. perform well on unseen data). Thus in the long time limit, we expect $\theta_t$ to spend most of its time in the vicinity of minimizers of $f$ where the function is `flat' in some sense.

This intuition is motivated in part by statistical mechanics. If $L$ is the free energy functional of a collection of particles $\theta = (x_1,\dots, x_n)\in \R^{3n}$ and $\eta$ is a small, positive temperature, then \eqref{eq intro sde} governs the evolution of $\theta$ under small thermal vibrations. In the stationary setting, the particles are distributed in space according to the Boltzmann-distribution (or Gibbs distribution) with density
\[
\rho(\theta) = \frac{\exp\left(-\frac{L(\theta)}\eta\right)}{\int_{\R^{3n}}\exp\left(-\frac{L(\theta')}{\eta}\right)\d\theta'},
\] 
which assigns highest probability to the points where $L$ is low. The stationary setting is generally considered to describe the state of a system in the long term limit after a perturbation.

While the positive temperature assists a particle system in returning to equilibrium after a perturbation and overcome energy barriers along its way, it also prevents the system from achieving minimal energy, as oscillations persist in the long term. In machine learning, this has led to strategies of `lowering the temperature' by taking learning rates $\eta_t$ which approach $0$ as $t\to\infty$. Thus at finite time, we enjoy the benefits of stochastic noise, but we expect to reach a minimizer in the long term.

In the companion article \cite{discrete_sgd}, we demonstrate that it may not be necessary to decrease the learning rate to zero in machine learning applications. This stems from the fact that the homogeneous and isotropic noise which we know from mechanics may not resemble the noise we encounter in SGD in overparametrized learning applications. Namely, if $L(\theta) = 0$, then the gradient estimator \eqref{eq batch gradient estimator} estimates the gradient $\nabla L(\theta) =0$ correctly independently of the `batch' $I_t$, since $h(\theta, x_i) = y_i$ for all $1\leq i\leq n$.. Thus, at the global minimum, the stochastic noise vanishes. 

Objective functions $f$ and stochastic noise in overparametrized deep learning models have very special attributes, which we encode in the following simplified picture:
\begin{enumerate}
\item $f:\R^m\to \R$ is $C^1$-smooth and $f\geq 0$,
\item there exists a non-compact closed manifold $N$ of dimension $n< m$ such that $L(\theta) = 0$ if and only if $\theta \in N$,
\item the covariance-matrix $\Sigma$ of the stochastic noise satisfies $\|\Sigma(\theta)\| \leq C\,f(\theta)$, and
\item the rank of $\Sigma(\theta)$ is less than $m-n-1$ for all $\theta\in \R^m$.
\end{enumerate}
For details and examples of learning problems where these conditions are inappropriate, see \cite[Section 2]{discrete_sgd}. Stochastic gradient descent with machine learning type noise has interesting properties which are not captured by models with homogeneous isotropic noise. In this article, we only consider the influence of the fact that the magnitude of noise scales with the objective function and leave the impact of low rank for future study.

In particular, we show that the Boltzmann-distribution is a poor model for the long term dynamics of SGD with noise of ML type.

Let us make this more precise. If the learning rate $\eta$ is very small, but $\sqrt{\eta}$ is only moderately small, SGD is well-approximated by an SDE of the form \eqref{eq intro sde}. Using the link between SDEs and parabolic PDEs, we investigate the long term behavior of solutions to continuous time SGD with ML noise if the noise intensity scales exactly like the value of the objective function and the noise is isotropic. The objective functions we consider in this article are inspired by overparametrized problems in deep learning, so the set of global minimizers is a submanifold of the parameter space with high dimension and co-dimension. In a toy model, we observe two regimes:
\begin{enumerate}
\item $\sqrt{\eta}$ is large in terms of the co-dimension of the minimizer manifold and the scaling parameter of the stochastic noise. Then an invariant distribution with Lebesgue density $\rho =c\, f^\alpha$ of the continuous time SGD process exists with $\alpha<-1$ depending on $\eta$. For $\rho$ to be integrable, we require $f$ to grow rapidly at infinity in high dimension.
\item $\sqrt{\eta}$ is small in terms of the co-dimension of the minimizer manifold. The density $\rho = f^\alpha$ is an invariant measure in this case for suitable $\alpha$ depending on $\eta$, but fails to be integrable. Any invariant distribution is supported on the minimizer manifold.
\end{enumerate}

By analogy to the discrete time analysis in our companion paper \cite{discrete_sgd}, we expect solutions to SGD to reach a neighbourhood of the minimizer manifold in finite time and converge to the minimum linearly from that point on if the noise parameter is sufficiently small. 
Since solutions to SGD get trapped in sublevel sets of the objective function with high probability, we expect that the long time behavior of SGD depends on the initial condition when $\sqrt\eta$ is very small, and that SGD converges to a nearby minimum if it is initialized in a certain potential well. If the noise is larger, it is reasonable to assume that the law of solutions would converge to a unique invariant distribution independently of the initial condition. We note the technical caveat that for the degenerate parabolic equations involved, no such convergence has been established rigorously to the best of our knowledge.

In the toy model, we identify a threshold value $\bar\eta$ above which there exist invariant distributions which have a density with respect to Lebesgue measure. As we decrease $\eta$ towards $\bar\eta$, the invariant distributions collapse to the minimizer manifold $N$ and approach a probability distribution $\pi^*$ on $N$ which is absolutely continuous with respect to the area measure on $N$. The density of $\pi^*$ only depends on the eigenvalues of the Hessian of the objective function. The density is large if the eigenvalues of $D^2f$ in direction orthogonal to $N$ are small and vice versa, suggesting that solutions of SGD prefer flat minima if the learning rate is at the threshold value or just above.

Flat minimum selection can also be observed in classical continuous time SGD with homogeneous and isotropic Gaussian noise when the learning rate proxy/noise parameter $\eta$ approaches zero. In both cases, the notion of `flatness' of $f$ depends only on the eigenvalues of $D^2f$ in the directions orthogonal to $N$, but the precise notions of flatness differ. In the case that $N$ has co-dimension $2$, we show that the densities for classical homogeneous noise and ML noise satisfy
\[
\rho^{hom}(\theta) = \frac{c}{\sqrt{\lambda_1(\theta)\cdot \lambda_2(\theta)}}, \qquad \rho^{ML}(\theta) = \frac{c}{\mathrm{agm}(\lambda_1(\theta),\lambda_2(\theta))}
\]
where $\lambda_1, \lambda_2$ are the eigenvalues of $D^2f$ for eigenvectors orthogonal to $N$ and agm is the {\em algebraic-geometric mean} function. The algebraic-geometric mean -- a type of interpolation between algebraic mean $(\lambda_1+\lambda_2)/2$ and geometric mean $\sqrt{\lambda_1\lambda_2}$ -- is much less sensitive to one parameter being small than the geometric mean. Thus intuitively, a point at which one eigenvalue is small would be classified `flat' by homogeneous noise SGD whereas both eigenvalues have to be small or one eigenvalue has to be extremely small for the notion of flatness imposed by ML noise SGD. The main results of our analysis are stated in Theorem \ref{theorem critical flatness} and Remark \ref{remark comparing notions of flatness}.

The article is structured as follows. In the remainder of the introduction, we discuss the context of our results in continuous time SGD in machine learning. The main results are given in Section \ref{section continuous}. Open problems are discussed in \ref{section conclusion}. The proofs are postponed to the appendices.

\subsection{Context} 

Stochastic gradient descent algorithms have been an active field of study since the middle of the 20th century \cite{robbins1951stochastic}, before rising to great prominence in the context of deep learning around the turn of the century -- see also \cite{bottou2018optimization} for an overview. A continuum description was derived rigorously in \cite{li2017stochastic} as the SDE \eqref{eq intro sde} driven by Gaussian noise. Earlier works such as \cite{mandt2016variational,mandt2015continuous} used an SDE model heuristically under homogeneous noise assumptions, but with an eye on advanced optimizers such as Adagrad. 

Continuous time SGD in this context was studied e.g.\ in \cite{sirignano2020stochastic,sirignano2017stochastic} for possibly non-homogeneous but bounded noise and in a scaling that is the continuous time analogue of decaying learning rates. 
Different works have studied continuous time SGD under the assumption that the covariance matrix of noise $\Sigma$ is 
\begin{itemize}
\item isotropic and homogeneous \cite{welling2011bayesian,neelakantan2015adding,ge2015escaping,raginsky2017non} or
\item equal to the Hessian of the loss function \cite{jastrzkebski2017three, luo2020many, zhu2018anisotropic,smith2017bayesian,liu2020stochastic,hoffer2017train,xie2020diffusion}.
\end{itemize}
Neither heuristic captures realistic noise in machine learning, which is of low rank and vanishes where the objective function is zero. Anisotropy plays a key role in understanding the success of deep learning in real applications \cite{zhu2018anisotropic}. The Hessian heuristic is justified close to the global minimum, where it important directions in the Hessian, but not the fact that the noise vanishes. Despite occasional claims to the contrary, the Hessian heuristic is not equally justified at general critical points. The gradient, Hessian, and noise covariance matrix of the mean squared loss function \eqref{eq mse loss} are given by
\begin{align*}
\nabla L(\theta) &= \frac1{n}\sum_{i=1}^n \big(h(\theta, x_i) - y_i\big)\,\nabla_\theta h(\theta,x_i)\\
D^2 L(\theta) &= \frac2{n}\sum_{i=1}^n \nabla_\theta h(\theta, x_i) \otimes \nabla_\theta h(\theta,x_i) + \frac1{n}\sum_{i=1}^n \big(h(\theta, x_i) - y_i\big)\,D^2_\theta h(\theta,x_i)\\
\Sigma(\theta) &= \frac1{n}\sum_{i=1}^n \big(\big(h(\theta, x_i) - y_i\big)\,\nabla_\theta h(\theta,x_i) - \nabla L(\theta)\big) \otimes \big(\big(h(\theta, x_i) - y_i\big)\,\nabla_\theta h(\theta,x_i) - \nabla L(\theta)\big)
\end{align*}
respectively. If $\nabla L(\theta) = 0$, the covariance matrix simplifies to
\[
\Sigma(\theta) = \frac1{n}\sum_{i=1}^n \big(h(\theta, x_i) - y_i\big)^2\,\nabla_\theta h(\theta,x_i) \otimes \nabla_\theta h(\theta,x_i),
\]
which is comparable to the first term in the product $L\cdot D^2L$, but ignores the second, which may be important in sufficiently non-linear function models. Noise of this type was also considered in \cite{mori2021logarithmic}, at least in one space dimension. For analytic characterization of stochastic noise in the mini-batch setting for linear function models, see also \cite{ziyin2021minibatch}.

The Hessian heuristic produces interesting results in convex optimization, but is ill-defined in machine learning applications, where under fairly general conditions the Hessian is {\em not} positive semi-definite in {\em any} neighbourhood of a global minimum \cite[Appendix B]{discrete_sgd}.

As any covariance matrix is positive semi-definite, the Hessian approximation cannot be used even at points closed to the minimum, and further geometric conditions have to be imposed. The reason behind this is that the set of minimizers 
\[
N = L^{-1}(0) = \big\{ \theta\in \R^m : h(\theta,x_i) = y_i\:\forall\ 1\leq i\leq n\big\}
\]
of mean squared error loss $L$ as in \eqref{eq mse loss} generically is a submanifold of the parameter space with high dimension $m-n$ and co-dimension $n$ due to Sard's theorem (see \cite[Theorem 2.6]{discrete_sgd}). As the set of minimizers of a convex function is convex, we note the following: {\em If $f:\R^m\to [0,\infty)$ vanishes on an $n$-dimensional manifold $N$ and there exists a ball $B= B_r(\theta)$ such that $f$ is convex on $B$, then $N\cap B = V\cap B$ where $V$ is an $n$-dimensional affine subspace of $\R^m$.}

If $h:\R^m\times\R^d\to \R$ is non-linear in $\theta$, there is no reason for $N$ to be flat along its minimum, and if $h$ is sufficiently non-linear, then $L$ cannot be convex close to its minimum \cite[Appendix B]{discrete_sgd}. Considerations from convex optimization may therefore not be as meaningful as commonly assumed in the overparametrized setting in machine learning (i.e.\ when the number of parameters exceeds the number of data points).

Which minimizer out of the large and high-dimensional set of minimizers a stochastic gradient descent algorithm can select as a limit point is an important problem, related to questions of implicit regularization in machine learning. It has been suggested that minimizers where the objective function is `flat' are attractive to gradient descent algorithms, and various explanations have been proposed \cite{hochreiter1997flat, keskar2016large, patel2017impact, barrett2020implicit, smith2021origin, feng2021inverse}. An advantage of our continuous time approach is that the analysis allows us to give precise meaning to the notion of `flatness'.

In this work, we employ a similar SGD model driven by isotropic, but non-homogeneous Gaussian noise, whose intensity depends on the value of the objective function. Our model incorporates the scaling which makes noise vanish at the set of global minimizers, but does not incorporate the low rank of the noise.

The SGD optimization of {\em infinitely wide} neural networks was considered in \cite{hu2019mean} for two-layer neural networks and in \cite{jabir2019mean} for infinitely deep residual neural networks. In both studies, global convergence to the minimizer of an entropy-regularized problem is proved, but for stochastic noise which is homogeneous and isotropic. While the stochastic noise is part of the energy dissipation mechanism, in the homogeneous case it can be absorbed as a regularizer into the energy functional via an optimal transport interpretation. The key feature in this analysis is that the PDE
\[
0=\rho_t - \Delta \rho = \rho_t - \div\big(\rho\,\nabla \big(\log\rho)\big)
\] 
can be derived both from Gaussian oscillations as the diffusion equation, and as the Wasserstein-gradient flow of the entropy functional \cite{jordan1998variational}. This link has also been exploited in the `mean field theory' of neural networks to obtain global convergence results for non-stochastic gradient descent \cite{chizat2018global,Chizat:2020aa, relutraining}. The same analysis cannot easily be extended to stochastic noise of machine learning type.

It has more recently been suggested that mini-batch noise is heavy-tailed and should be described by a L\'evy process modelled on an $\alpha$-stable distribution for $\alpha\neq 2$ \cite{simsekli2019tail}.
The impact of heavy-tailed noise has also been used to analyze the performance of SGD and adaptive gradient algorithms with stochastic gradient estimates \cite{zhou2020towards}. Specifically, the authors demonstrate that ADAM, which keeps an exponentially decaying memory of past gradients, has `lighter tailed' noise than SGD. A clear effect on the type of limit that the optimizer selects can be observed.

General L\'evy processes are have cadlag paths (continuous from the right, limit from the left), but jump positive distances. This leads to faster diffusion described by a fractional order (non-local) partial differential equation rather than a classical parabolic equation \cite{west1997fractional}. Much recent progress has been made concerning diffusion equations driven by very rough (but full rank) noise \cite{ros2016regularity,fernandez2017regularity}.

The picture for the PDE description of low rank noise is less clear. We note that a theory for diffusion on manifolds tangent to a family of low-dimensional subspaces of the tangent space has been developed in the context of sub-Riemannian geometry \cite{hassannezhad2014sub,barilari2017heat,agrachev2019comprehensive}, albeit under rather strong conditions on the oscillation of the subspaces from point to point. Whether these geometric conditions are realistic in machine learning remains to be seen.

It should also be noted that in the derivation of continuum time models of SGD, only the quotient of learning rate and batch size appears in the limiting model. This does not capture all discrete time phenomena \cite{wu2018sgd}, where the stability of a minimizer may depend on the same parameters in a more complicated way. In the underparametrized setting and in discrete time, the geometry of the invariant distribution of a stochastic gradient algorithm has been linked to estimating the generalization error \cite{camuto2021fractal}. On the other hand, we remark that stochastic gradient descent with small positive learning rate converges to a minimizer in the overparametrized setting which we are investigating. In some cases, stochastic gradient descent can be proved to choose minimizers with better generalization than (classical) gradient descent \cite{pesme2021implicit}.

We note that an entirely different approach to continuous time stochastic gradient descent is presented in \cite{latz2020analysis}. 

\subsection{Notation and conventions}

All random variables are defined on a probability space $(\Omega,\mathcal A, \P)$ which remains abstract and is characterized mostly as expressive enough to support a random initial condition and stochastic differential equation. The dyadic product of two vectors $a, b$ is denoted by
\[
a\otimes b = a\cdot b^T, \qquad\text{i.e.} (a\otimes b)_{ij} = a_i b_j.
\]
We employ Einstein's sum convention (i.e.\ repeated indices in the same expression are summed over).

\section{Stochastic gradient descent with ML noise in continuous time}\label{section continuous}

The assumptions we make in this section are different from those in our discrete time analysis \cite{discrete_sgd}. In particular, for technical reasons we assume that the objective function has a {\em compact} set of minimizers (although this assumption can be weakened, see Example \ref{example non-compact minimizer set}) and grows faster than quadratically at infinity. The former is unrealistic in deep learning, the latter is incompatible with the assumption that the gradient of the objective function is globally Lipschitz-continuous, which is useful in the global analysis of discrete time SGD. The \L{}ojasiewicz condition we make in \cite{discrete_sgd} is replaced by a maximal non-degeneracy condition on the Hessian of the objective function at the minimum.

The noise in our toy model is isotropic (but not homogeneous). In particular, the noise has full rank. We therefore at most capture the effect of noise scaling on continuous time SGD in machine learning compared to classical SGD, but not the effect that singular noise has. Already this modification showcases the need for models which more precisely capture the nature of stochastic noise in machine learning applications.

\subsection{An SDE model for stochastic gradient descent}

As $\eta\to 0$, solutions of the SGD time-stepping scheme converge to solutions of the gradient-flow equation $\dot \theta_t = -\nabla f(\theta_t)$ with probability $1$ on any compact time-interval $[0,T]$. We can consider gradient-flow dynamics as a zeroth order continuum model for SGD. Here we heuristically derive a first order continuum model which retains information about the stochasticity in the algorithm. Recall that 
\begin{align*}
\theta_{t+1} &= \theta_t - \eta g(\theta_t,\xi_t) \showlabel \label{eq sgd}\\
	&= \theta_t - \eta \nabla f(\theta_t) + \sqrt{\eta}\,\big[\sqrt{\eta} \big(\nabla f(\theta_t) - g(\theta_t,\xi_t)\big)\big]\\
	&= \theta_t - \eta \nabla f(\theta_t) + \sqrt{\eta} y_t
\end{align*}
where $y_t = \sqrt{\eta} \big(\nabla f(\theta_t) - g(\theta_t,\xi_t)\big)$ is a random variable with mean $0$ and covariance matrix $ \Sigma' = \eta\Sigma$, if $\Sigma$ is the covariance matrix of the gradient estimators $g$. To leading order, this scheme therefore resembles 
\[
\theta_{t+1}' = \theta_t' - \eta\,\nabla f(\theta_t') + \sqrt{\eta}\, \sqrt{\Sigma'(\theta_t')}\,y_t'
\]
where $y_t'$ is a standard Gaussian random variable. If $\Sigma'$ did not depend on $\eta$, this would be a time-discretization of the stochastic differential equation
\[
\d \theta_t' = - \nabla f(\theta_t')\,\dt + \sqrt{\Sigma'(\theta_t')} \,\d B_t.
\]
As usual, the precise nature of the random noise does not matter for small time-steps, and it can be shown rigorously that 
\[\showlabel\label{eq sgd continuum}
\d \theta_t = -\nabla f(\theta_t)\,\dt + \sqrt{\eta\,\Sigma(\theta_t)} \,\d B_t
\]
is a first order continuous time model for the time-stepping scheme \eqref{eq sgd} in a precise sense, see \cite{li2015dynamics}. This approximation is valid in the situation that $\eta$ is small and $\eta \Sigma$ is of order $1$. If $\|\eta\Sigma\|\gg 1$, the continuous time approximation becomes invalid, while for $\|\eta\Sigma\|\ll 1$, the stochasticity can be ignored on finite time scales. 

In this note, we consider learning rates which remain strictly positive, i.e.\ $\eta$ does not depend on $t$. This is realistic in deep learning applications, where we keep the largest learning rate we can get away with. The approach may be theoretically justified due to the special scaling of the noise and may be superior to decaying learning rate schedules for which $\eta_t\to 0$, see also \cite{discrete_sgd}.

\subsection{On stochastic analysis and second order parabolic equations}

It is well-known that stochastic differential equations driven by Brownian motion are linked to second order partial differential equations of elliptic and parabolic type. We recall one particular result.

\begin{lemma}
Let $\theta_t$ be a solution of the SDE
\begin{equation}\label{eq generic sde}
\d \theta_t = \mu(t,\theta_t)\,\d t + \sqrt{\Sigma(t,\theta_t)}\,\d B_t.
\end{equation}
Then the law $\rho_t$ of $\theta_t$ is a very weak solution of the PDE
\begin{equation}\label{eq generic pde}
\partial_t\rho = \partial_i\partial_j \big(\rho \,\Sigma_{ij}\big) - \partial_i \big(\rho \mu_i\big)
\end{equation}
where summation over repeated indices is implied.
\end{lemma}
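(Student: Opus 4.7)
The plan is to derive \eqref{eq generic pde} from \eqref{eq generic sde} by testing against smooth compactly supported functions, applying It\^o's formula, and recognizing the resulting identity as a distributional formulation of the PDE.

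First I would fix $\phi\in C_c^\infty(\R^m)$ and apply It\^o's formula to $\phi(\theta_t)$. Using $\d\langle\theta^i,\theta^j\rangle_t = \Sigma_{ij}(t,\theta_t)\,\dt$, this yields
\[
\d\phi(\theta_t) = \partial_i\phi(\theta_t)\,\mu_i\,\dt + \tfrac12\,\partial_i\partial_j\phi(\theta_t)\,\Sigma_{ij}\,\dt + \partial_i\phi(\theta_t)\,\bigl(\sqrt{\Sigma}\bigr)_{ik}\,\d B^k_t,
\]
where the coefficients are evaluated at $(t,\theta_t)$. Because $\phi$ has compact support and $\sqrt{\Sigma}$ is continuous (hence locally bounded along the path), the stochastic integral is a true martingale rather than merely a local one, so its expectation vanishes. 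Using the definition $\E[\psi(\theta_t)] = \int\psi\,\d\rho_t$ and integrating in time I would then obtain
\[
\int \phi\,\d\rho_t - \int \phi\,\d\rho_0 = \int_0^t\!\int \bigl[\mu_i\,\partial_i\phi + \tfrac12\,\Sigma_{ij}\,\partial_i\partial_j\phi\bigr]\,\d\rho_s\,\d s.
\]

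Second, I would recognize this as the very weak form of \eqref{eq generic pde}. Formally integrating by parts twice in the diffusion term and once in the drift term shifts the derivatives off $\phi$ and onto $\rho\Sigma$ and $\rho\mu$, reproducing the right hand side of \eqref{eq generic pde} paired distributionally with $\phi$ (the factor $1/2$ being absorbed into the convention on $\Sigma$). Since $\phi$ and $t$ are arbitrary, this is exactly the distributional statement of \eqref{eq generic pde}.

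The main subtlety — and the reason for the qualifier \emph{very weak} — is that $\rho_t$ need not admit a Lebesgue density, so the spatial derivatives appearing on the right hand side of \eqref{eq generic pde} must be understood in the sense of distributions acting on $C_c^\infty$ test functions; the identity above produces precisely this pairing. Beyond well-posedness of \eqref{eq generic sde} and continuity of the coefficients, no growth, ellipticity, or non-degeneracy hypothesis on $\mu$ or $\Sigma$ is required. Upgrading the conclusion to a weak or strong solution would demand additional regularity of $\rho_t$ (typically via parabolic smoothing from non-degeneracy of $\Sigma$), which is not claimed and not needed for the statement as written.
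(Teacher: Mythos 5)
Your proposal is correct and follows essentially the same route as the paper: apply It\^o's formula to a compactly supported test function, take expectations so that the stochastic integral (a true martingale thanks to the compact support) drops out, and read off the very weak formulation of \eqref{eq generic pde}. The only difference is cosmetic — you retain the factor $\tfrac12$ from the standard It\^o formula and absorb it into the convention on $\Sigma$, whereas the paper's statement of It\^o and of the very weak formulation already omits it.
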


By very weak solution we mean that for all $t>0$ we have $\rho_t\in L^1(\R^d)$ and
\[
\frac{d}{dt} \int \psi(\theta)\,\rho(t,\theta)\,\d\theta = \int \left(\Sigma_{ij}\partial_i\partial_j\psi + \mu_i\,\partial_i\psi\right)\rho\,\d\theta
\]
for all $\psi\in C_c^2(\R^m)$. More precisely, this solution is strong in time and very weak in space.

\begin{proof}
Recall the multi-dimensional It\^o formula: If $f:\R^m\to\R$ is a $C^2$-function and the $\R^m$-valued random variable $\theta_t$ satisfies the SDE $\d \theta_t = \mu_t \,\d t + \sqrt{\Sigma}\,\d B_t$, then $Y_t:= f(\theta_t)$ satisfies
\[
\d Y_t= \big(\partial_t f + \nabla f \cdot \mu + D^2 f : \Sigma\big)\d t  + \nabla f \cdot \Sigma \cdot \d B_t
\]
where $A:B = a_{ij}b_{ij}$ \cite[Theorem 3.6 in Chapter 3]{karatzas2014brownian}. Here and in the following, we use the Einstein sum convention where summation over repeated indices is implies.
We conclude that for functions $\psi$ that do not depend on time we have
\begin{align*}
\frac{d}{dt} \int_{\R^d} \psi(\theta)\,\rho_t(\theta) \,\d\theta &= \frac{d}{dt}\E_{\theta\sim\rho_t} \big[\psi(\theta)\big]\\
	&= \frac{d}{dt} \,\E_{\theta_0\sim\rho_0}\big[\psi(\theta_t)\big]\\
	&= \E_{\theta_0\sim\rho_0}\big[\nabla \psi(\theta_t) \cdot \mu(\theta_t) + D^2 \psi(\theta_t) : \Sigma(\theta_t)\big]\\
	&= \E_{\theta\sim\rho_t} \big[\nabla \psi\cdot \mu + D^2 \psi: \Sigma\big]
\end{align*}
since the diffusion term is a martingale and independent of $\mathcal \F_{t-}$, so it does not influence the evolution of the expectation.
\end{proof}

\subsection{Invariant measures}

If $\theta_0$ is initialized according to a distribution with density $\rho_0$ and $\theta_t$ evolves by the SDE \eqref{eq generic sde}, then the solution $\rho_t$ of the PDE \eqref{eq generic pde} with initial condition $\rho_0$ describes the law of $\theta_t$. Under fairly generic smoothness conditions, the existence of a unique solution $\rho_t$ such that $\rho_t\in L^1(\R^m)$ for all $t>0$ can be established. Regularity theory can be used to show that $\rho$ is smooth if $\mu:\R^m\to\R^m$ and $\Sigma:\R^m\to\R^{m\times m}$ are. 

A distribution $\pi_0$ is called {\em invariant} if the law of $\theta_t$ is constant in time for $\theta_0\sim\pi_0$. If $\pi_0$ has a density $\rho_0$ with respect to Lebesgue measure, then 
\begin{equation}\label{eq invariant measure}
0 = \partial_i\partial_j \big(\rho \,\Sigma_{ij}\big) - \partial_i \big(\rho \mu_i\big)
\end{equation}
in the very weak sense. If $\rho \in L^1_{loc}(\R^m)$ is a non-negative solution of \eqref{eq invariant measure} and $\rho\geq 0$, then $\rho$ is called an {\em invariant measure}. This is defined even if $\rho \notin L^1(\R^m)$, i.e.\ if $\rho$ cannot be normalized to a probability distribution. The equation \eqref{eq invariant measure} can be interpreted in the very weak sense even if $\rho$ is merely a measure.

In the case of continuous time SGD, $\mu = -\nabla f$, so $\rho$ satisfies
\begin{equation}\label{eq distribution gradient flow}
\rho_t = \div\big(\rho \nabla f\big) + \partial_i\partial_j \big(\rho\,\Sigma_{ij}\big).
\end{equation}
The type of noise is encoded in the structure of the matrix $\Sigma$.

\begin{example}\label{example boltzmann distribution}
For SGD with homogeneous isotropic noise, we have $\Sigma = {\eta}I$, so the invariant distribution satisfies
\[
\div(\rho\,\nabla f) + {\eta}\,\Delta \rho =0.
\]
It is easy to verify that $\hat\rho(\theta) = \exp\left(-\frac{f(\theta)}{\eta}\right)$ satisfies
\begin{align*}
\div(\hat\rho\,\nabla f) &= \div\left(\exp\left(-\frac{f(\theta)}{\eta}\right)\,\nabla f\right)\\
	&= -{\eta}\,\div\left(\nabla\exp\left(-\frac{f(\theta)}{\eta}\right)\right)\\
	&= -\eta \,\div\big(\nabla\hat\rho\big)\\
	&= - \eta\,\Delta\hat\rho,
\end{align*}
so
\[
\div(\hat\rho\,\nabla f) + {\eta}\,\Delta \hat\rho =0.
\]
If $f(\theta) \geq |\theta|^\kappa-C$ for some $C\in\R$ and $\kappa>0$, then $\hat\rho$ is integrable, so there exists $c>0$ such that $c\,\hat \rho$ is an invariant distribution for homogeneous isotropic noise SGD. If $f$ grows only like $f_0\log (1+|\theta|)$ and $\eta$ is large, then $\exp(-f/\eta)\sim (1+|\theta|)^{-f_0/\eta}$ fails to be integrable. An invariant distribution does not exist because particles may escape to infinity.
\end{example}

\subsection{Invariant distribution with ML-type noise}

As we observed in \cite{discrete_sgd}, bounds of the form
\[
\|\Sigma(\theta)\| \leq C\,f(\theta)\qquad\text{or}\qquad \|\Sigma(\theta) \| \leq C\,f(\theta)\,\big[1+|\theta|^2\big]
\]
are expected to hold in machine learning applications. We consider the very simplest noise model in this class: $\Sigma = \eta\sigma f I_{m\times m}$. Assume that $\rho_\infty$ is an invariant measure for the SDE \eqref{eq sgd continuum}. Then, using Einstein sum convention, we have
\begin{align*}
0 &= \partial_i\partial_j\big(\rho_\infty\,\Sigma_{ij}\big) + \partial_i\big(\rho_\infty\,\partial_i f\big)\\
	&= \partial_i \left( \partial_j \big(\rho_\infty\,\eta\,\sigma f\,\delta_{ij}\big) + \rho_\infty\,\partial_if\right)\\
	&= \partial_i\left(\eta\sigma f\,\partial_i\rho_\infty + (1+\eta\sigma) \rho_\infty\,\partial_if\right)\\
	&= \div \left(\eta \sigma \,f\nabla \rho_\infty + (1+\eta\sigma)\,\rho_\infty\nabla f\right)\\
	&= \eta\sigma\, \div\left( f^{1-\frac{1+\eta\sigma}{\eta\sigma}}\nabla \left(\rho_\infty \,f^\frac{1+\eta\sigma}{\eta\sigma}\right)\right)\\
	&= \eta\sigma\, \div\left( f^{-\frac{1}{\eta\sigma}}\nabla \left(\rho_\infty \,f^\frac{1+\eta\sigma}{\eta\sigma}\right)\right)\showlabel \label{eq invariant measure special}
\end{align*}
where all identities hold in the distributional sense. The product rule can be applied away from the set $N = \{f=0\}$ since $f$ and $\Sigma$ are smooth. 
In particular, for any $c>0$ we see that $\rho_\infty = c\,f^\alpha$ is a stationary and non-negative solution of \eqref{eq distribution gradient flow} for
\[
\alpha = - \frac{1+\eta\sigma}{\eta\sigma} < -1.
\]
This solution is meaningful as an invariant measure if $\rho_\infty$ is locally integrable and as an invariant distribution if $\rho_\infty$ is integrable. We consider $f$ in the following theorem to be a toy model for the energy landscape of overparametrized regression problems.

\begin{lemma}\label{lemma integrability}
Assume that 
\begin{enumerate}
\item There exists a compact $n$-dimensional manifold $N\subseteq\R^m$ such that $f(\theta) = 0$ if and only if $\theta\in N$,
\item $D^2f(\theta)$ has rank $m-n$ for all $\theta\in N$, and
\item there exist $\gamma, R, c_1>0$ and $f(x) \geq c_1\,|x|^\gamma$ for all $|x|\geq R$.
\end{enumerate}
Then $f^{\alpha}$ is locally integrable if $\alpha> -\frac{m-n}2$ and $f^\alpha$ is integrable if and only if additionally $\alpha < -\frac m\gamma$.
\end{lemma}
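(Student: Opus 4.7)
The proof plan is to split the domain into three pieces: a tubular neighborhood $U_\delta=\{\dist(\theta,N)<\delta\}$ of the minimizer manifold, a far-field region $\{|\theta|\geq R\}$, and a compact transition region, and to verify integrability on each.

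Near $N$, I will use the fact that a compact smooth submanifold admits a tubular neighborhood, so for $\delta$ small there is a diffeomorphism $U_\delta \cong \{(y,z) : y\in N,\ z\in \nu_y N,\ |z|<\delta\}$ with Jacobian bounded between two positive constants by compactness of $N$. On $N$, $f$ vanishes and $\nabla f$ vanishes (since $N$ is a manifold of minima), so a Taylor expansion yields $f(y,z) = \tfrac12\langle D^2f(y,0)\,z,z\rangle + O(|z|^3)$. Hypothesis (2) says $D^2f$ has maximal rank $m-n$ on $N$, and its kernel must coincide with $T_y N$; hence the restriction of $D^2f(y,0)$ to the normal fibre $\nu_y N\cong \R^{m-n}$ is positive definite. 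Compactness of $N$ then gives uniform constants $0<\lambda\le\Lambda$ with
\[
\lambda\,|z|^2 \le f(y,z) \le \Lambda\,|z|^2 \qquad\text{for all $y\in N$ and all $|z|<\delta$,}
\]
after possibly shrinking $\delta$ to absorb the cubic remainder. Passing to polar coordinates $z=r\omega$ in the normal fibre and using Fubini,
\[
\int_{U_\delta} f(\theta)^\alpha \d\theta \;\asymp\; \int_N \int_0^\delta r^{2\alpha}\,r^{m-n-1}\,\dr\,\dH^n(y),
\]
which is finite if and only if $2\alpha+(m-n-1)>-1$, i.e.\ $\alpha>-(m-n)/2$. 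This gives both the local integrability claim and its sharpness.

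For the tail, hypothesis (3) gives $f(\theta)\ge c_1|\theta|^\gamma$ on $\{|\theta|\ge R\}$, so for $\alpha<0$ one has $f(\theta)^\alpha \le c_1^{\alpha}\,|\theta|^{\gamma\alpha}$. Integrating in spherical shells,
\[
\int_{|\theta|\ge R} f(\theta)^\alpha \d\theta \;\le\; C\int_R^\infty r^{\gamma\alpha+m-1}\,\dr,
\]
which converges provided $\gamma\alpha+m<0$, i.e.\ $\alpha<-m/\gamma$. Sharpness in $\alpha$ for this integrability follows from the matching lower bound produced by continuity and positivity of $f$ on the compact set $\{\theta : R\le |\theta|\le R+1\}$, extended radially using the assumed power growth (or, in the version of the statement actually needed, simply noting that the upper bound is the only one required at infinity). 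Finally, on the compact set $\{|\theta|\le R\}\setminus U_\delta$ the continuous function $f$ is strictly positive, hence bounded below by some $c>0$, so $f^\alpha$ is bounded and trivially integrable over this bounded region.

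Combining the three pieces proves the lemma. The main obstacle is the first step: converting the algebraic rank condition on $D^2f|_N$ into uniform two-sided quadratic bounds for $f$ in a tube around $N$. Here compactness of $N$ is essential, as it converts the pointwise Morse-Bott normal form into uniform constants $\lambda,\Lambda$ and controls the cubic Taylor remainder uniformly in $y\in N$; the remaining computations are straightforward polar-coordinate integrations.
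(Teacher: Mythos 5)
Your proof is correct and follows essentially the same route as the paper's: a two-sided quadratic comparison of $f$ with the squared normal distance to $N$ (using $\nabla f=0$ on $N$ and the rank-$(m-n)$ Hessian), polar coordinates in the normal directions giving the exponent condition $2\alpha+m-n>0$, and the power-growth bound $f\geq c_1|\theta|^\gamma$ handling the tail. The only cosmetic difference is that you work in one tubular neighborhood with uniform constants from compactness, while the paper flattens $N$ in local coordinates around each point of $N$; both treatments also handle the ``only if'' at infinity with the same level of informality, since the hypotheses only provide a lower bound on $f$ there.
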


Note that here $n = \dim(N)$, whereas in \cite[Theorem 2.6]{discrete_sgd} we have $\dim(N) = m-nk$. This incompatibility in notation between different parts of this article is chosen to make the individual parts more natural in their notation. The proof of Lemma \ref{lemma integrability} is given in Appendix \ref{appendix flat minimum}.

\begin{remark}
Even when it is defined, the invariant distribution $\rho = c\,f^\alpha$ is non-unique, since any probability distribution which is supported on $N$ is invariant under continuous time SGD dynamics. Whether or not it is unique in the class of distributions which admit a density with respect to Lebesgue measure, or whether the law of solutions to continuous time SGD with generic intialization approaches it in the long time limit, remains an open question in many cases.

For partial results, see Theorems \ref{theorem convergence}, \ref{theorem unique invariant distribution} and \ref{theorem convergence appendix}.
\end{remark}

\begin{remark}
We note that $f^\alpha$ cannot be globally integrable if $f$ grows quadratically, since this would require $ - \frac{m-n}2 < \alpha < -\frac m2$. Thus the Lemma cannot be used in the case that $f$ has Lipschitz-continuous gradients.

The two conditions play very different roles:
\begin{enumerate}
\item If $\alpha = -\frac{1+\sigma\eta}{\sigma\eta}$ is large negative, the random noise (which scales like $\sqrt{\eta\sigma}$) is very small close to $\{\theta : f(\theta) =0\} = \{\theta : \Sigma(\theta) = 0\}$. Close to $N$, the function $f$ satisfies a \L{}ojasiewicz inequality since $D^2f$ has maximal rank on $N$. If a particle enters such a neighbourhood, we expect it to converge to $N$ linearly (by resemblance to GD or by appealing to Theorem \cite[Theorem 3.3]{discrete_sgd}). 

Thus the invariant distribution cannot have a positive density close to $N$ as particles collapse onto the manifold. If the noise is positive but very small, the invariant distribution concentrates on $N$.

If the co-dimension $m-n$ of $N$ is large, it is more likely that a random perturbation causes a particle to `miss' $N$. Thus if $N$ has high co-dimension, the invariant distribution has a density $c\,f^{-\frac{1+\eta\sigma}{\eta\sigma}}$ for lower noise levels $\eta\sigma$ than if the co-dimension of $N$ is small.  

\item If $\alpha$ is close to $-1$, this corresponds to a large noise parameter $\eta\sigma$. Since $ \Sigma = \eta\sigma f\,I$, also the noise at infinity becomes large, and particles may escape towards infinity despite the presence of the gradient term. For simplicity, consider $f(x) = |x|^\gamma$. Then the magnitude of noise scales like $\sqrt{f(x)} = |x|^{\gamma/2}$, which is easily dominated by the gradient term $|\nabla f(x)| \sim |x|^{\gamma-1}$ if $\gamma$ is large enough. The growth condition induces a bound on average gradients in a suitable sense which we may interpret as the statement that SGD with ML type noise traps particles almost surely in a sublevel set.
\end{enumerate}
\end{remark}

The assumption that $N$ is compact simplifies the result, but is not technically necessary (and, considering Theorem \cite[Theorem 2.6]{discrete_sgd}, not realistic in overparametrized machine learning). Non-compactness of the manifold $N$ can be compensated if the minimum becomes sufficiently steep at infinity.

\begin{example}\label{example non-compact minimizer set}
Let $-1.5 < \alpha< -1$ and
\[
f(\theta_1, \theta_2, \theta_3, \theta_4) = \big(\theta_1^2 +\theta_2^2+\theta_3^2\big) \big(1+ \theta_1^2+\theta_2^2 + \theta_3^2\big)\big(1+ \theta_4^2\big).
\]
Then $N = \{\theta \in \R^4 : \theta_1 = \theta_2 = \theta_3 = 0\}$ and
\[
\int_{\R^4} f^\alpha(\theta) \,\d\theta = \left(\int_{-\infty}^\infty (1+\theta_4^2)^\alpha\,\d\theta_4\right)\left(\int_{\R^3} \big(\theta_1^2 +\theta_2^2+\theta_3^2\big)^\alpha \big(1+ \theta_1^2+\theta_2^2 + \theta_3^2\big)^\alpha \,\d\theta_1\,\d\theta_2\,\d\theta_3\right)<\infty
\]
by the same considerations as above.
\end{example}

However, we note that if the minimum becomes very steep at infinity, finite step size effects in SGD are likely to dominate the picture.

\begin{remark}
The isotropic model $\Sigma = \eta\sigma f\,I$ is unlikely to capture important aspects of machine learning since the type of noise in this model is the same on the entire level set $\{\theta : f(\theta) = \eps\}$. If
\[
f(\theta) = L(\theta) = \int_{\R^d\times \R} \big(h(\theta,x)-h(\theta^*, x)\big)^2\,\d\bar\mu_{x},
\]
then a level set can contain two very different events:
\begin{itemize}
\item $\theta$ is close to $\theta^*$ and the $L(\theta)=\eps$ with small fluctuations over the entire data set.
\item $\theta$ is far away from $\theta^*$, but 
\[
\P\big(\{x : h(\theta,x)= h(\theta^*,x)\}\big) = 1-\eps, \qquad \P\big(\{x : h(\theta,x)- h(\theta^*,x)=1\}\big) = \eps.
\]
This can occur when the distribution has a majority phase on which $h(\theta,\cdot)$ and $h(\theta^*, \cdot)$ behave similarly and a minority phase on which the functions are very different.
\end{itemize}
There is no reason to believe that the noise in both cases would be similar.
\end{remark}

\begin{remark}
In the underparametrized regime, data points cannot be fitted perfectly by the machine learning model. This corresponds to minimizing an objective function with ML type noise such that $\inf_\theta f(\theta)>0$. If this is the case, we only require the integrability condition $- \frac{1+\eta\sigma}{\eta\sigma} = \alpha < -\frac m\gamma$, so the growth condition on $f$ can be relaxed. For any $\gamma>0$, we can choose a noise level $\eta\sigma$ so small that the invariant distribution $\rho= c\,f^\alpha$ exists.
\end{remark}

\subsection{Convergence to the invariant distribution}

Whether or not the invariant distribution we identified reflects the long term dynamics of continuous time SGD with ML noise is a subtle question. At least in the underparametrized regime, we can show that no other invariant distribution exists. Under stronger assumptions, we show that the invariant distribution is achieved exponentially fast in the long term limit.

\begin{theorem}\label{theorem convergence}
Assume that there exists constants $0<\lambda \leq \Lambda$ such that
\[
\lambda\big(1+|\theta|^2\big) \leq f(\theta)\leq \Lambda \big(1+|\theta|^2\big)\qquad\forall\ \theta\in \R^m
\]
and that $\eta\sigma < \frac{2}{m-2}$ such that $\alpha = -1-\frac1{\eta\sigma} < -\frac m{2}$. Let $\rho_0$ be a probability density on $\R^m$ such that
\[
\int_{\R^m} \rho_0^2\,f^{1+\frac1{\eta\sigma}}\,\d\theta < \infty.
\]
Then there exists a solution $\rho$ of the equation 
\[
\partial_t\rho = \div\left( \eta\sigma f\,\nabla \rho + (1+\eta\sigma)\rho\,\nabla f\right)
\]
which describes the evolution of the density of a solution to SGD with noise model $\Sigma = \sigma f\,I$ (compare \eqref{eq invariant measure special}).
Furthermore, there exists $\nu>0$ depending only on $m, \eta\sigma, \lambda, \Lambda$ such that
\begin{align*}
\int_{\R^m}& \left( \rho(t,\theta) - \frac{f^{-1-\frac1{\eta\sigma}} (\theta)}{\int_{\R^m}f^{-1-\frac1{\eta\sigma}}(\theta')\,\d\theta'}\right)^2\,f^{1+\frac1{\eta\sigma}}(\theta)\,\d\theta \\
	&\hspace{2cm}\leq e^{-\nu t} \int_{\R^m} \left( \rho(0,\theta) - \frac{f^{-1-\frac1{\eta\sigma}} (\theta)}{\int_{\R^m}f^{-1-\frac1{\eta\sigma}}(\theta')\,\d\theta'}\right)^2\,f^{1+\frac1{\eta\sigma}}(\theta)\,\d\theta.
\end{align*}
In particular $\rho(t,\cdot)$ converges to the invariant distribution in $L^2(\R^m)$.
\end{theorem}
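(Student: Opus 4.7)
The plan is to write the equation in self-adjoint form with respect to the invariant measure, derive an energy identity by testing against the natural weight, and then close it with a weighted Poincaré--Hardy inequality. Set $\beta = 1 + 1/(\eta\sigma)$, so that the candidate invariant density is $\rho_\infty = c\,f^{-\beta}$ with $c^{-1} = \int_{\R^m} f^{-\beta}\,\d\theta$; this normalizing integral is finite because the hypothesis $\eta\sigma < 2/(m-2)$ is equivalent to $\beta > m/2$, which combined with $f \sim 1+|\theta|^2$ gives integrability. Using the algebraic identity $1+\eta\sigma = \eta\sigma\cdot\beta$, the PDE rewrites as
\[
\partial_t \rho \;=\; \eta\sigma\,\mathrm{div}\!\left( f^{1-\beta}\,\nabla(\rho\, f^\beta)\right),
\]
for which $\rho_\infty$ is stationary precisely because $\rho_\infty f^\beta \equiv c$. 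Setting $v = \rho - \rho_\infty$ and $u = v\,f^\beta = \rho\, f^\beta - c$, the difference $v$ satisfies the same linear equation, and mass conservation ($\int v\,\d\theta = \int u\,f^{-\beta}\,\d\theta = 0$) is preserved in time since $f\geq\lambda>0$ keeps the equation uniformly parabolic and the weighted flux integrates to zero.

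Testing the evolution of $v$ against $u = v\,f^\beta$ (formally by multiplying by $u$ and integrating by parts, with no boundary contribution because $\beta > m/2$) yields the energy identity
\[
\frac{d}{dt}\int_{\R^m} v^2\,f^\beta\,\d\theta \;=\; -2\eta\sigma \int_{\R^m} f^{1-\beta}\,|\nabla u|^2\,\d\theta,
\]
after noting that $v^2 f^\beta = u^2 f^{-\beta}$. To close this into exponential decay I need a weighted Poincaré--Hardy inequality of the form
\[
\int_{\R^m} u^2\,f^{-\beta}\,\d\theta \;\leq\; \frac{1}{\nu'}\int_{\R^m} f^{1-\beta}\,|\nabla u|^2\,\d\theta
\]
valid for all $u$ with $\int u\,f^{-\beta}\,\d\theta = 0$, with a constant $\nu'>0$ depending only on $m,\beta,\lambda,\Lambda$. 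Inserting this inequality into the energy identity gives $\tfrac{d}{dt}\int v^2 f^\beta \d\theta \leq -2\eta\sigma\,\nu' \int v^2 f^\beta \d\theta$, and Grönwall's lemma yields the claim with $\nu = 2\eta\sigma\,\nu'$. The stated $L^2(\R^m)$ convergence follows for free, since $f^\beta \geq \lambda^\beta$ makes the weighted norm control the unweighted one.

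The core technical obstacle is the weighted Poincaré--Hardy inequality above. The strategy I would pursue is to first establish it for the model generalized Cauchy measure $Z^{-1}(1+|\theta|^2)^{-\beta}\d\theta$, where it is a classical result valid precisely when $\beta > m/2$ (proved, for instance, by a Bakry--\'Emery or spectral-gap argument on the conformally equivalent Schrödinger operator obtained through the change of unknown $u \mapsto u\,(1+|\theta|^2)^{-\beta/2}$), and then transfer it to the measure $f^{-\beta}\d\theta$ via the comparisons $\Lambda^{-\beta}(1+|\theta|^2)^{-\beta} \leq f^{-\beta} \leq \lambda^{-\beta}(1+|\theta|^2)^{-\beta}$ and the analogous two-sided bound for $f^{1-\beta}$, absorbing the mismatch between the two mean-zero conditions through a scalar Cauchy--Schwarz correction. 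Rigorous existence of a solution in the weighted space $L^2(f^\beta\,\d\theta)$ can then be obtained by standard Galerkin or semigroup methods applied to the symmetric Dirichlet form $\mathcal E(u,u) = \eta\sigma \int f^{1-\beta}|\nabla u|^2 \d\theta$ on $L^2(f^{-\beta}\d\theta)$, with the uniform lower bound $f\geq\lambda$ removing any degeneracy. The proof of the Poincaré--Hardy estimate is where all the quantitative information about $m$, $\eta\sigma$, $\lambda$, and $\Lambda$ enters, and is the step I expect to be the most delicate.
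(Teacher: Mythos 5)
Your proposal follows essentially the same route as the paper's proof: the change of unknown $u=\rho\,f^{1+\frac1{\eta\sigma}}$, the Dirichlet-form energy identity $\frac{d}{dt}\int u^2 f^{-1-\frac1{\eta\sigma}}\,\d\theta = -2\eta\sigma\int |\nabla u|^2 f^{-\frac1{\eta\sigma}}\,\d\theta$, the Hardy--Poincar\'e inequality for the model weight $(1+|\theta|^2)^{-\beta}$ with $\beta>\frac m2$ (which the paper simply cites from Dolbeault--Volzone and Bonforte et al.\ rather than re-deriving) transferred to $f$ through the two-sided bounds with $\lambda,\Lambda$ and the variational characterization of the mean, followed by Gr\"onwall. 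The only ingredients of the paper's argument you gloss over are the cutoff justification of the integration by parts on the weighted space and the Stampacchia truncation showing $\rho(t,\cdot)\geq 0$ (needed for $\rho$ to be a probability density), both of which are routine under the stated quadratic growth and non-degeneracy of $f$.
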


More plainly, the law of a solution to continuous SGD with isotropic noise of ML type converges to the invariant distribution if the objective function grows algebraically and sufficiently fast at infinity and is bounded away from zero. The proof of Theorem \ref{theorem convergence} is given in Appendix \ref{appendix convergence}, where $\nu$ is estimated as well. The proof may be extended to the case where $f$ vanishes at a finite number of points, but not along a positive-dimensional manifold. Furthermore, the objective function has to detach faster than quadratically from its global minimizers, meaning that it cannot be $C^2$-regular (or in fact $C^{1,1}$-regular) close to its minimizers. The precise result is given in Theorem \ref{theorem convergence appendix} in Appendix \ref{appendix poincare}.

The following result poses fewer conditions on the objective function $f$, but only establishes the uniqueness of the invariant distribution, not the convergence to it at a certain rate.

\begin{theorem}\label{theorem unique invariant distribution}
Assume that
\begin{enumerate}
\item $\inf_\theta f(\theta)>0$,
\item $f$ is an element of the H\"older space $C^{2,\alpha}_{loc}(\R^m)$ and
\item there exists $C>0$ such that $|\nabla \log f|(\theta) \leq \frac{C}{1+|\theta|}$.
\end{enumerate}
Then there exists a unique non-negative solution $\rho$ of the stationary distribution equation \eqref{eq invariant measure special} (up to multiplication by a positive constant). 
\end{theorem}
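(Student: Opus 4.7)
My plan is to recast the stationary equation as a uniformly elliptic divergence-form equation and then establish a Liouville-type property for it. Exactly as in the computation leading to \eqref{eq invariant measure special}, the substitution
\[
u := \rho\,f^{(1+\eta\sigma)/(\eta\sigma)}, \qquad a := f^{-1/(\eta\sigma)}
\]
turns the stationary equation into $\div(a\,\nabla u) = 0$. Hypotheses (1) and (2) ensure that $a \in C^{2,\alpha}_{loc}(\R^m)$ and that on every compact set $a$ is bounded above and below by positive constants, so interior Schauder estimates upgrade any non-negative distributional solution $\rho$ to a classical $C^{2,\alpha}_{loc}$ solution. The explicit invariant density $\rho_* = f^{-(1+\eta\sigma)/(\eta\sigma)}$ corresponds to $u_* \equiv 1$, so proving uniqueness of $\rho$ up to a positive multiplicative constant is equivalent to showing that every non-negative classical solution of $\div(a\nabla u)=0$ on $\R^m$ is a non-negative constant.

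For the Liouville step I would first apply the Hopf strong maximum principle for divergence-form elliptic equations to conclude that any non-trivial non-negative solution $u$ is strictly positive on $\R^m$. Set $\mu := \inf_{\R^m} u \geq 0$; the function $u - \mu$ is again a non-negative solution, and if $\mu$ is attained the strong maximum principle forces $u \equiv \mu$. To rule out the non-attained case, I would exploit hypothesis (3) via the observation that $|\nabla \log a| = |\nabla \log f|/(\eta\sigma) \leq C/\bigl((\eta\sigma)(1+|\theta|)\bigr)$, so on any ball $B_R(\theta_0)$ with $R \leq |\theta_0|/4$ the coefficient $a$ oscillates by a factor bounded by $\exp\bigl(C/(\eta\sigma)\bigr)$ independently of $\theta_0$. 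Moser's Harnack inequality on such scale-invariant balls therefore holds with a uniform constant, and coupled with a Bochner-type gradient estimate for $\log u$ this should give $|\nabla \log u|(\theta) \to 0$ as $|\theta| \to \infty$. Integrating along a path from a sequence $\theta_n$ with $|\theta_n|\to\infty$ and $u(\theta_n) \to \mu$ back to a fixed reference point $\theta_*$ then yields $u(\theta_*) = \mu$, contradicting strict positivity of $u - \mu$.

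The main obstacle is precisely this Liouville step. Hypothesis (3) only prescribes a critical decay rate for $\nabla \log f$ without forcing $f$ to be globally bounded, so the global ellipticity ratio $\sup a/\inf a$ can be infinite and Moser's classical Liouville theorem for uniformly elliptic operators does not apply directly. Squeezing the Liouville conclusion out of the scale-invariant estimate in (3) is the heart of the proof --- whether via a Bochner/Bakry--\'Emery gradient estimate, a carefully scaled Harnack chain in which the balls grow in tandem with their distance from the origin, or by the probabilistic route of verifying Harris recurrence of the diffusion with generator $L = \eta\sigma\,f\,\Delta - \nabla f\cdot\nabla$ and invoking Khasminskii's uniqueness theorem for invariant measures of recurrent diffusions. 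Each route relies essentially on the scale-invariant character of assumption (3).
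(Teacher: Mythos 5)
Your setup coincides with the paper's: the substitution $u=\rho\,f^{\frac{1+\eta\sigma}{\eta\sigma}}$, the rewriting of \eqref{eq invariant measure special} as $\div\big(f^{-1/(\eta\sigma)}\nabla u\big)=0$ (equivalently $\Delta u-\frac1{\eta\sigma}\nabla(\log f)\cdot\nabla u=0$), the Schauder upgrade to classical regularity, and the reduction of uniqueness to a Liouville property for non-negative entire solutions are all exactly what the paper does. The difference is that the paper then \emph{finishes} by invoking the Edmunds--Peletier Liouville theorem for degenerate elliptic equations (Theorem \ref{theorem liouville}, \cite{edmunds1973liouville}), whose hypotheses are verified precisely from the critical decay $|\nabla\log f|\leq C/(1+|\theta|)$: with $A=I$, $a\equiv 0$, $b=-\frac1{\eta\sigma}\nabla\log f$, the quantity $\bar g^*(R)\sim R^{2-\frac mp-\frac mq}\|\,|\nabla\log f|^2\|_{L^q(B_R)}$ stays bounded, and the theorem then says any solution bounded from below is constant. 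You leave this step open, and it is the entire content of the theorem.

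Moreover, the route you sketch for the missing step does not close. Because assumption (3) is \emph{critical} (decay exactly like $1/|\theta|$), the best a scale-invariant Harnack inequality or Cheng--Yau/Bochner-type estimate on balls $B_{|\theta_0|/4}(\theta_0)$ can deliver is $|\nabla\log u|(\theta_0)\leq C/|\theta_0|$, with $C$ depending on the drift bound; this tends to $0$ but is \emph{not} integrable along a path from a fixed point $\theta_*$ to points $\theta_n$ with $|\theta_n|\to\infty$ --- the line integral grows like $\log|\theta_n|$ --- so the concluding step ``$u(\theta_n)\to\mu$ implies $u(\theta_*)=\mu$'' does not follow. Indeed, solutions with exactly this borderline behavior, $u=|\theta|^\beta$ with $|\nabla\log u|=\beta/|\theta|$, do occur for the same operator structure (see Example \ref{example non-uniqueness pde}); ruling them out requires more than the gradient bound, and this is exactly the quantitative information encoded in the hypotheses of the Edmunds--Peletier theorem. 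Your alternative suggestions (Bakry--\'Emery, Harnack chains with growing balls, Khasminskii recurrence) are plausible directions but are named rather than executed, so as it stands the proof has a genuine gap at its central step.
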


Note that the decay condition on the gradient is for example met if there exist $R, \gamma>0$ such that$f(\theta) = |\theta|^\gamma$ for $|\theta|\geq R$, but also allows for mixed growth of the form $f(\theta_1, \theta_2) = \theta_1^2 + \theta_4^2$. The proof is given in Appendix \ref{appendix invariant distribution}. Our proof cannot be extended to the underparametrized case -- see however Example \ref{example non-uniqueness pde} following the proof of Theorem \ref{theorem unique invariant distribution} for a discussion of potential non-uniqueness.

As a consequence of Theorem \ref{theorem unique invariant distribution}, if $f$ is bounded away from zero, but grows too slowly at infinity compared to the size of $\eta\sigma$, an invariant distribution of continuous time SGD with noise $\Sigma=\eta\sigma f\,I$ does not exist as the unique invariant measure cannot be normalized.

\subsection{Flat minimum selection}
We see that the invariant measure $\rho_{\alpha} := f^\alpha$ concentrates close to the set $N$ of global minimizers of $f$ for $\alpha> -\frac{m-n}2$. We think of this as {\em global minimum selection}: If SGD is run for long enough with constant step size, eventually it will spend most of its time close to $N$, independently of the initial condition. This is true also for SGD with small, but uniform Gaussian noise (see Example \ref{example boltzmann distribution} and Theorem \ref{theorem flat minima homogeneous} below).

For $\alpha < - \frac{m-n}2$, the invariant distribution of continuous time SGD with ML type noise does not have a density with respect to Lebesgue measure. In this section, we investigate the limiting behavior as $\alpha\to - \frac{m-n}2$, i.e.\ as the invariant distributions collapse onto $N$. In this analysis, we show that continuous time SGD with ML type noise mostly concentrates at those points on $N$ where the energy landscape is `flat' in a particular sense.

\begin{theorem}\label{theorem invariant distribution critical sgd noise}\label{theorem critical flatness}
Let $f\in C^2(\R^m,[0,\infty)) $, and assume that
\begin{enumerate}
\item There exists a compact $n$-dimensional manifold $N\subseteq\R^m$ such that $f(\theta) = 0$ if and only if $\theta\in N$,
\item $D^2f(\theta)$ has rank $m-n$ for all $\theta\in N$, and
\item there exist $\gamma >\frac{2m}{m-n}$, $R>0$ and $c_1>0$ such that $f(\theta) \geq c_1\,|\theta|^\gamma$ whenever $|\theta| \geq R$.
\end{enumerate}
Then for all $\alpha\in\left(- \frac{m-n}2, - \frac{m}\gamma\right)$, there exists an invariant distribution $\rho_\alpha := c(\alpha)\,f^\alpha$ such that $\int_{\R^m}\rho_\alpha(\theta)\,\d\theta = 1$. As $\alpha\to\alpha^* = -\frac{m-n}2$, the measures $\pi_\alpha$ which have density $\rho_\alpha$ with respect to Lebesgue measure converge to a probability measure $\pi^*$ in the sense of Radon measures such that
\[
\pi^*(\R^m\setminus N) = 0
\]
and $\pi^*$ has a density $\rho^*$ with respect to the $n$-dimensional Hausdorff measure on $N$ which is proportional to
\[
\hat\rho^*(\theta) = \int_{S^{n-m-1}} \big(\nu^T \,\widehat{D^2f(\theta)}\nu\big)^{-\frac{m-n}2}\,\d\H^{m-n-1}_\nu
\]
where 
\[
\widehat{D^2f(\theta)} = \mathrm{diag} (\lambda_1,\dots,\lambda_{m-n})
\]
is the $(m-n)\times (m-n)$ matrix with the non-zero eigenvalues of $D^2f$ on the diagonal (the diagonal matrix in a reduced singular value decomposition of $D^2f(\theta))$.
\end{theorem}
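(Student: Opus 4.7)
The plan is to combine Lemma \ref{lemma integrability}, which guarantees $\rho_\alpha = c(\alpha) f^\alpha$ is a probability density for every $\alpha$ in the admissible window, with a Laplace-type concentration analysis near the minimizer manifold $N$. The divergence of $\int f^\alpha\,\d\theta$ as $\alpha \to \alpha^* = -(m-n)/2$ is driven entirely by the singular behavior of $f^\alpha$ on $N$, and the non-degenerate Hessian of $f$ in the normal directions dictates the limiting density.

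First I would localize to a tubular neighborhood. Compactness of $N$ together with $f \in C^2$ and the maximal-rank condition on $D^2 f|_N$ yields $\delta > 0$ and a $C^1$-diffeomorphism $\Phi:\{(\theta_0,\nu) : \theta_0\in N,\ \nu\in(T_{\theta_0}N)^\perp,\ |\nu|<\delta\}\to U_\delta$, with Jacobian $J$ satisfying $J(\theta_0,0)=1$. On $\R^m\setminus U_\delta$, $f$ is bounded below and the growth hypothesis combined with the integrability half of Lemma \ref{lemma integrability} gives $\sup_{\alpha\in K}\int_{\R^m\setminus U_\delta} f^\alpha\,\d\theta<\infty$ for any compact $K\subset(-(m-n)/2,-m/\gamma)$. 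Hence any divergence of the unnormalized mass $\int f^\alpha\,\d\theta$ as $\alpha\to\alpha^*$ comes from $U_\delta$, so $\pi_\alpha(\R^m\setminus U_\delta)\to 0$ and any weak limit is supported on $N$.

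For the local analysis, Taylor expansion in the normal direction yields $f\circ\Phi(\theta_0,\nu) = \tfrac12\,\nu^T A(\theta_0)\,\nu + O(|\nu|^3)$ with $A(\theta_0)=\widehat{D^2 f(\theta_0)}$, uniformly in $\theta_0\in N$. Writing $\nu=r\omega$ with $r\in(0,\delta)$ and $\omega\in S^{m-n-1}$, for $\phi\in C_c(\R^m)$ the local contribution becomes
\[
\int_{U_\delta}\phi\, f^\alpha\,\d\theta = \int_N\!\int_0^\delta\!\int_{S^{m-n-1}} (\phi\circ\Phi)\, r^{2\alpha+m-n-1}\bigl(\tfrac12\,\omega^T A(\theta_0)\,\omega\bigr)^\alpha\bigl[1+O(r)\bigr]^\alpha J\,\d\H^{m-n-1}(\omega)\,\d r\,\d\H^n(\theta_0).
\]
The radial integral $\int_0^\delta r^{2\alpha+m-n-1}\,\d r = \delta^{2\alpha+m-n}/(2\alpha+m-n)$ diverges like $(2\alpha+m-n)^{-1}$ as $\alpha\to\alpha^*$, while the $O(r)$ Taylor correction and the Jacobian defect $J-1$ contribute only uniformly bounded terms. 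Multiplying by $2\alpha+m-n$, dominated convergence identifies the leading order as a positive constant times
\[
\int_N \phi(\theta_0)\int_{S^{m-n-1}}\bigl(\omega^T A(\theta_0)\,\omega\bigr)^{-(m-n)/2}\,\d\H^{m-n-1}(\omega)\,\d\H^n(\theta_0),
\]
and dividing by the same normalization applied with $\phi\equiv 1$ yields weak convergence of $\pi_\alpha$ to $\pi^*$ with the stated density $\hat\rho^*$.

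The main technical obstacle is uniformity of the Laplace-type expansion over $\theta_0\in N$. Compactness of $N$ and $C^2$-regularity of $f$ supply uniform bounds $\lambda_{\min} I \leq A(\theta_0) \leq \lambda_{\max} I$ on the normal Hessian, a uniform $O(|\nu|^3)$ Taylor remainder, and the quadratic coercivity $c|\nu|^2 \leq f(\Phi(\theta_0,\nu)) \leq C|\nu|^2$ on $U_\delta$. These bounds convert the radial asymptotic into a clean dominated convergence statement (splitting into $[0,r_0]$ for concentration and $[r_0,\delta]$ for a uniformly bounded remainder) and also provide the tightness needed to upgrade Radon-measure convergence against $C_c(\R^m)$ test functions to the convergence of probability measures claimed, with $\pi^*(\R^m\setminus N)=0$.
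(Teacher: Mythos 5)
Your argument is correct and its analytic core coincides with the paper's: both hinge on (i) uniform control of the mass away from $N$ via the growth condition $\gamma>\tfrac{2m}{m-n}$, so that all divergence of $\int f^\alpha\,\d\theta$ as $\alpha\to\alpha^*$ comes from a neighbourhood of $N$, (ii) the quadratic approximation of $f$ in the normal directions, and (iii) polar coordinates in the normal fibre, where the radial integral blows up like $(2\alpha+m-n)^{-1}$ and the angular integral produces $\int_{S^{m-n-1}}\big(\nu^T\widehat{D^2f}\nu\big)^{-\frac{m-n}{2}}\d\H^{m-n-1}_\nu$. Where you differ is the packaging of the identification step: the paper first extracts a subsequential limit $\pi^*$ by tightness and Prokhorov, then computes the ball densities $\lim_{r\to0}\pi^*(B_r(\theta_0))/(\omega_n r^n)$ in local flattening charts and invokes a differentiation theorem of Marstrand type to conclude absolute continuity with respect to $\H^n|_N$; you instead parametrize a global tubular neighbourhood of $N$ (available by compactness), multiply by the normalization $(2\alpha+m-n)$ and pass to the limit directly in $\int\phi\,\d\pi_\alpha$ for every $\phi\in C_c(\R^m)$. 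Your route avoids the measure-differentiation theorem and gives full (not merely subsequential) convergence in one stroke, at the cost of needing the tubular-neighbourhood diffeomorphism and uniform fibre-wise expansions, which compactness of $N$ indeed supplies.

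Two small points of hygiene rather than gaps. First, with $f$ only $C^2$ the Taylor remainder in the normal direction is a uniform $o(|\nu|^2)$, not $O(|\nu|^3)$; this is harmless, but you should phrase it as a $(1\pm\mu)$-sandwich on the quadratic form for $|\nu|<r(\mu)$ (as the paper does) rather than a cubic error, since $\big[1+o(1)\big]^\alpha\to1$ uniformly for $\alpha$ near $\alpha^*$ is all you use. Second, the passage to the limit in $\alpha$ is not literally dominated convergence: pointwise in $r$ the kernel $(2\alpha+m-n)\,r^{2\alpha+m-n-1}$ tends to $0$, and the mass concentrates at $r=0$ like an approximate identity. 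Your own splitting into $[0,r_0]$ and $[r_0,\delta]$ is exactly the right argument (the second piece carries mass $\delta^{2\alpha+m-n}-r_0^{2\alpha+m-n}\to0$), so just present it as a Laplace-type/approximate-identity step rather than as dominated convergence. Finally, note that taking $\phi\equiv1$ in the normalization is legitimate because the region outside the tubular neighbourhood contributes $O(1)$ before multiplication by $(2\alpha+m-n)$, exactly as in your tail estimate; the uniform bound you need is for $\alpha$ in a one-sided neighbourhood $(\alpha^*,\alpha^*+\eps]$ of the endpoint, which your growth estimate gives, not for compact subsets of the open interval.
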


All proofs for this section are given in Appendix \ref{appendix flat minimum}. Our analysis of the invariant distributions suggests that SGD with ML type noise preferentially approaches `flat' minima of $f$. This is also the case for SGD with uniform Gaussian noise in the small noise limit, but for a different notion of flatness. The main difference is that we take the zero noise limit for uniform Gaussian noise, but a specific `small finite noise' limit for ML type noise.

\begin{theorem}\label{theorem flat minima homogeneous}\label{theorem integrability}
Let $f\in C^2(\R^m,[0,\infty)) $, and assume that
\begin{enumerate}
\item There exists a compact $n$-dimensional manifold $N\subseteq\R^m$ such that $f(\theta) = 0$ if and only if $\theta\in N$,
\item $D^2f(\theta)$ has rank $m-n$ for all $\theta\in N$, and
\item there exist $\gamma >0$, $R>0$ and $c_1>0$ such that $f(\theta) \geq c_1\,|\theta|^\gamma$ whenever $|\theta| \geq R$.
\end{enumerate}
Then for all $\eta>0$, there exists an invariant distribution $\rho_\eta := c_\eta\,\exp\big(-f/{\eta}\big)$ such that $\int_{\R^m}\rho_\eta(\theta)\,\d\theta = 1$. The measures $\pi_\eta$ which have density $\rho_\eta$ with respect to Lebesgue measure converge to a probability measure $\pi^*$ in the sense of Radon measures as $\eta\to0$. The limit $\pi^*$ satisfies $\pi^*(\R^m\setminus N) = 0$ and has a density $\rho^*$ with respect to the uniform distribution/$n$-dimensional Hausdorff measure on $N$ which is proportional to
\[
\hat\rho^*(\theta) = \det\big(\widehat{D^2f(\theta)}\big)^{-\frac{1}2}
\]
where 
\[
\widehat{D^2f(\theta)} = \mathrm{diag} (\lambda_1,\dots,\lambda_{m-n})
\]
is the $(m-n)\times (m-n)$ matrix with the non-zero eigenvalues of $D^2f$ on the diagonal (the diagonal matrix in a reduced singular value decomposition of $D^2f(\theta))$.
\end{theorem}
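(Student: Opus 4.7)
The plan is to use a Laplace-type asymptotic expansion of the Gibbs integrals as $\eta \to 0$, localised to a tubular neighbourhood of $N$. Existence of $\rho_\eta$ as a probability density follows from the growth condition $f(\theta) \geq c_1|\theta|^\gamma$ for $|\theta|\geq R$, which makes $\exp(-f/\eta)$ integrable for every $\eta>0$: outside a large ball the integrand is bounded by $\exp(-c_1|\theta|^\gamma/\eta)$, and inside the ball $f$ is bounded. So the only real content is the identification of the limit measure.

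First I would show that, for every fixed $\delta>0$, the mass that $\pi_\eta$ places on $\R^m\setminus N_\delta$ (where $N_\delta$ is the closed $\delta$-tubular neighbourhood of $N$) decays exponentially in $1/\eta$. Indeed on $\R^m\setminus N_\delta$ one has $f\geq c(\delta)>0$ by compactness of $N$ and the growth at infinity (split into $|\theta|\leq R'$ and $|\theta|>R'$ using the algebraic lower bound), whereas a lower bound on the normalising constant $1/c_\eta = \int e^{-f/\eta}\,\d\theta$ of order $\eta^{(m-n)/2}$ comes from the local analysis below. This reduces everything to studying $\pi_\eta$ restricted to a small tubular neighbourhood of $N$.

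Next I would pick a tubular neighbourhood $U = \{\phi(y,z) : y\in N,\ z\in \mathbb R^{m-n},\ |z|<\delta\}$ where $\phi(y,z) = y + \sum_j z_j\, e_j(y)$ with $e_j(y)$ a smooth orthonormal frame for the normal bundle. Let $J(y,z)$ be the Jacobian of $\phi$; since $N$ is compact and $\phi$ is smooth, $J(y,z) = 1 + O(|z|)$ uniformly. Because $f$ vanishes on $N$ together with its tangential first derivatives (since $N$ is a manifold of minima), Taylor expansion along the normal fibre gives
\[
f(\phi(y,z)) = \tfrac{1}{2}\,z^T\,\widehat{D^2 f(y)}\,z + R(y,z), \qquad |R(y,z)|\leq C|z|^3,
\]
with $\widehat{D^2 f(y)}$ the restriction of $D^2f(y)$ to the normal fibre, which by assumption (2) is positive definite. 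For a test function $\psi\in C_c(\R^m)$, rescale $z = \sqrt\eta\, w$. The integral over the tube becomes
\[
\int_N \int_{|w|<\delta/\sqrt\eta} \psi(\phi(y,\sqrt\eta w))\,
\exp\!\Big(\!-\!\tfrac12 w^T \widehat{D^2 f(y)} w - \tfrac{R(y,\sqrt\eta w)}{\eta}\Big)\,J(y,\sqrt\eta w)\,\eta^{(m-n)/2}\,\d w\,\d\mathcal H^n(y).
\]
Dominated convergence (with the Gaussian envelope as majorant, valid once $\delta$ is small enough for $|R(y,z)|\leq \tfrac14 z^T\widehat{D^2f(y)}z$) sends $\eta\to 0$ and produces
\[
\eta^{(m-n)/2}\int_N \psi(y)\,(2\pi)^{(m-n)/2}\det\!\big(\widehat{D^2f(y)}\big)^{-1/2}\,\d\mathcal H^n(y) + o(\eta^{(m-n)/2}).
\]
Applying the same computation with $\psi\equiv 1$ on a neighbourhood of $N$ evaluates $1/c_\eta$ to the same order, so the ratio yields
\[
\int\psi\,\d\pi_\eta \;\longrightarrow\; \frac{\int_N \psi(y)\,\det(\widehat{D^2f(y)})^{-1/2}\,\d\mathcal H^n(y)}{\int_N \det(\widehat{D^2f(y)})^{-1/2}\,\d\mathcal H^n(y)},
\]
which combined with the exponential decay of the complement proves the weak convergence and identifies the density $\hat\rho^*$.

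The main obstacle is the Laplace approximation itself: controlling the remainder $R(y,z)/\eta$ uniformly in $y\in N$ and verifying that the rescaled integrand is dominated by an integrable Gaussian so that dominated convergence applies. Compactness of $N$ together with the maximal rank assumption on $D^2f$ give a uniform lower bound on the smallest normal eigenvalue $\lambda_{\min}(y)>0$, which makes this step routine but not automatic; the growth condition at infinity is needed only to kill the far-field contribution and to justify integrability of $\rho_\eta$. Once these are in place, the asymptotic computation above delivers the stated limit.
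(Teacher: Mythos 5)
Your argument is correct in substance and arrives at the stated limit, but it organizes the proof differently from the paper. The paper first proves tightness of $(\pi_\eta)$ and extracts a limit via Prokhorov's theorem, then works pointwise at $\theta_0\in N$: it flattens $N$ by a local $C^2$-diffeomorphism, runs the same Gaussian computation you do to evaluate $\lim_{r\to0}\pi^*(B_r(\theta_0))/(\omega_n r^n)$, and invokes a density theorem of Marstrand to conclude that $\pi^*$ is absolutely continuous with respect to $\mathcal{H}^n|_N$ with density proportional to $\det\big(\widehat{D^2f}\big)^{-1/2}$. You instead carry out the Laplace expansion globally in a tubular neighbourhood of $N$ and test against $C_c$ functions, so the limit of $\int\psi\,\d\pi_\eta$ is identified directly; this dispenses with both the compactness step and the differentiation-of-measures step, and it gives convergence of the whole family rather than of a subsequence, with the far-field contribution killed by $f\geq c(\delta)$ off the tube together with integrability of $e^{-f/2}$. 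The core computation (normal-fibre Gaussian integral producing $(2\pi)^{(m-n)/2}\eta^{(m-n)/2}\det\big(\widehat{D^2f(y)}\big)^{-1/2}$ in both numerator and normalizing constant) is shared by the two proofs.

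Two small repairs are needed in your write-up, neither of which changes the structure. First, $f$ is only assumed $C^2$, so the remainder bound $|R(y,z)|\leq C|z|^3$ is not available; what you actually get, and all you need, is $R(y,z)=o(|z|^2)$ uniformly in $y\in N$ (uniform continuity of $D^2f$ on a compact neighbourhood of $N$), which still gives the domination $|R(y,z)|\leq\tfrac14\,z^T\widehat{D^2f(y)}\,z$ for $|z|\leq\delta$ and the pointwise vanishing of $R(y,\sqrt{\eta}\,w)/\eta$ required for dominated convergence. Second, a smooth \emph{global} orthonormal frame $e_j(y)$ of the normal bundle need not exist, since the bundle may be topologically non-trivial; either cover $N$ by finitely many patches carrying local frames and glue with a partition of unity, or parametrize by the normal exponential map and observe that the fibre integral $\int\exp\big(-\tfrac12 w^TQ(y)w\big)\,\mathrm{d}w=(2\pi)^{(m-n)/2}\det Q(y)^{-1/2}$ is independent of the orthonormal basis chosen in each fibre, so the local computations patch consistently. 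With these adjustments your proof is complete.
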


\begin{remark}\label{remark comparing notions of flatness}
Note that the functions $g_1, g_2: (0,\infty)^{m-n}\to (0,\infty)$
\begin{align*}
g_1(\lambda_1,\dots,\lambda_{m-n}) &= \big(\det \big(\mathrm{diag}(\lambda_1,\dots,\lambda_{m-n})\big)^{-1/2}\\
g_2(\lambda_1,\dots,\lambda_{m-n}) &= \avint_{S^{n-m-1}} \big(\nu^T \,\mathrm{diag}(\lambda_1,\dots,\lambda_{m-n})\nu\big)^{-\frac{m-n}2}\,\d\H^{m-n-1}_\nu
\end{align*}
which measure the steepness of a local minimum both satisfy 
\[
g(1,\dots,1) = 1, \qquad g(\mu\,\lambda) = \mu^{-\frac{m-n}2} g(\lambda).
\]
In that sense, the notions of flatness are comparable, but they are not the same if $m-n\geq 2$. In particular, if $m-n=2$, we note that $g_1(\eps, 1) = \eps^{-1/2}$ while
\[
g_2(\eps,1) = \frac1{2\pi} \int_0^{2\pi} \frac1{\sqrt{\cos^2(\phi) + \eps\,\sin^2(\phi)}}\,\d\phi = \mathrm{agm}(1,\eps)^{-1}
\]
is a complete elliptic integrals of first type, which evaluates to the inverse of the {\em algebraic geometric mean} of $1$ and $\eps$, which interpolates between the geometric mean $\sqrt{1\cdot\eps}$ of $1$ and $\eps$ (which is small) and the algebraic mean $\frac{1+\eps}2$ of $1$ and $\eps$ (which is large). Since
\[
\lim_{\eps \to 0} \left[|\log\eps| \cdot \mathrm{agm}(1,\eps)\right] = \frac\pi 2,
\]
we find that $g_2(\eps,1) \ll g_1(\eps,1)$ if $\eps$ is small. Intuitively, $g_1$ classifies $f$ as flat at $\theta\in N$ if the Hessian has a single very small eigenvalue, while $g_2$ requires {\em all} eigenvalues to be small, or one to be extremely small.

\begin{figure}
\begin{center}
\includegraphics[width = 0.4\textwidth]{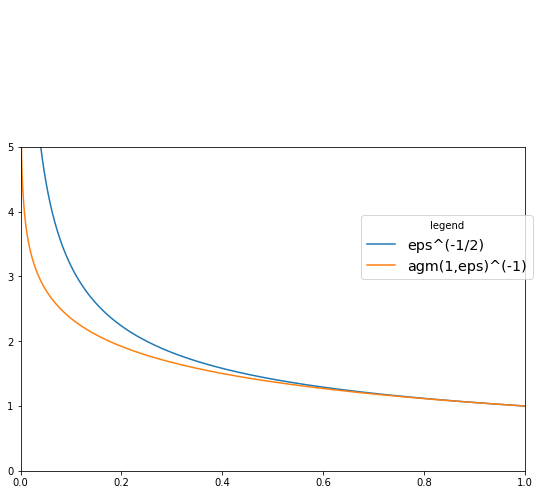}\hfill
\includegraphics[width = 0.4\textwidth]{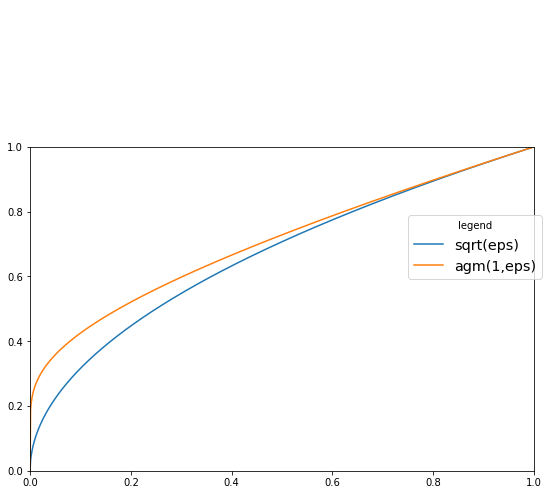}
\end{center}
\caption{\label{figure agm vs sqrt}The arithmetic geometric mean approaches zero much more slowly than the geometric mean, so its inverse diverges much more slowly at $0$.}
\end{figure}
\end{remark}

\begin{remark}
Theorem \ref{theorem invariant distribution critical sgd noise} states that if the effective noise parameter $\eta\sigma$ just exceeds the critical value where $-\alpha = \frac{1+\eta\sigma}{\eta\sigma} = \frac{m-n}2$, there exists an invariant distribution which concentrates close to $N$ and assigns highest probability to points at which $f$ is flat. We have reason to expect that the law of $\theta_t$ will approach this invariant distribution independently of how $\theta_0$ is initialized.

If the effective noise parameter $\eta \sigma$ is smaller than the critical threshold $\frac{m-n}2$, we expect the `terminal distribution' $\rho_\infty = \lim_{t\to\infty}\rho_t$ to depend on the initial condition $\theta_0$. For instance, if $N$ has two connected components which are separated by a significant potential barrier and $\theta_0$ is initialized close to one component, the probability of escape from either potential well is small. Once a particle reaches the domain where the objective function is small, the noise is also small, and the particle approaches a nearby point on $N$ with high probability. For small $\eta\sigma$, the noise is not strong enough to force $\theta_t$ to explore a significant portion of the parameter space, or even the entire set of minimizers $N$. Therefore, we expect the long time behaviour of $\theta_t$ to depend on how much probability the initial distribution assigned to the individual potential wells.

For any positive noise $\eta\sigma>0$, the invariant distribution of continuous time SGD \eqref{eq distribution gradient flow} is non-unique since {\em any} probability measure $\pi$ on $N$ is an invariant distribution.
\end{remark}

Note that the invariant distribution under isotropic homogeneous noise is invariant under shifts of the objective function $f\mapsto f+c$ since $\exp((f+c)/\eta) = \exp(c/\eta)\,\exp(f/\eta)$, so any objective shift is absorbed into the normalizing constant. The situation in the underparametrized regime with machine learning noise resembles the case of homogeneous noise more than that of ML noise in the overparametrized regime. This is not surprising, since either version of SGD is expected to spend most of its time close to the global minimum on a long enough time scale. Both noise models are isotropic, and the slight variation which $\Sigma = \sigma\,f I$ experiences in the set $\{\theta : f(\theta) < (1+\mu)\min f\}$ for small $\mu>0$ becomes negligible.

\begin{theorem}\label{theorem flatness ml underparametrized}
Let $f\in C^2(\R^m,(0,\infty)) $, and assume that
\begin{enumerate}
\item $\eps:= \inf_{\theta}f(\theta)>0$
\item There exists a compact $n$-dimensional manifold $N_\eps\subseteq\R^m$ such that $f(\theta) = \eps$ if and only if $\theta\in N_\eps$,
\item there exist $\gamma >0$, $R>0$ and $c_1>0$ such that $f(x) \geq c_1\,|\theta|^\gamma$ whenever $|\theta| \geq R$.
\end{enumerate}
Then for all $\eta>0$ if $\gamma>m$ and all $\eta \in \left(0, \,\frac{\gamma}{\sigma(m-\gamma)}\right)$ otherwise, there exists an invariant distribution $\rho_\eta := c_\eta\,f^{-\frac{1+\eta\sigma}{\eta\sigma}}$ such that $\int_{\R^m}\rho_\eta(x)\dx = 1$. The measures $\pi_\eta$ which have density $\rho_\eta$ with respect to Lebesgue measure converge to a probability measure $\pi^*$ supported on $N_\eps$ as $\eta\searrow 0$. The limiting distribution $\pi^*$ has a density which is proportional to
\[
\tilde \rho^*(\theta) = \det\big(\widehat{D^2f(\theta)}\big)^{-\frac{1}2}
\]
with respect to the uniform distribution on $N_\eps$.
\end{theorem}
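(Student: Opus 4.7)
I would mimic the three-step structure of Theorem \ref{theorem flat minima homogeneous}, replacing the small-noise asymptotics $\eta\searrow 0$ in $\exp(-f/\eta)$ by the asymptotics of $f^\alpha$ with $\alpha = -\tfrac{1+\eta\sigma}{\eta\sigma}\to -\infty$ as $\eta\searrow 0$. Because $f$ is bounded below by $\eps>0$, $f^\alpha$ has no interior singularity and integrability of $\rho_\eta = c_\eta f^\alpha$ is only an issue at infinity: the growth hypothesis yields $f^\alpha(\theta)\leq c_1^\alpha|\theta|^{\gamma\alpha}$ for $|\theta|\geq R$, so $f^\alpha\in L^1(\R^m)$ iff $\gamma\alpha<-m$, which is precisely the stated restriction on $\eta$ in the two cases $\gamma<m$ and $\gamma\geq m$. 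Stationarity of $c_\eta f^\alpha$ for the Fokker--Planck equation was already checked in \eqref{eq invariant measure special}. (As in Theorem~\ref{theorem flat minima homogeneous} I take the implicit maximal rank condition $\rk D^2f(\theta)=m-n$ along $N_\eps$, without which $\det(\widehat{D^2f})^{-1/2}$ would not be defined.)

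\emph{Step 2: localization around $N_\eps$.} For $\varphi\in C_c(\R^m)$, factor
\[
\int_{\R^m}\varphi(\theta)\,f(\theta)^\alpha\,\d\theta = \eps^\alpha\int_{\R^m}\varphi(\theta)\left(1+\tfrac{h(\theta)}{\eps}\right)^{\alpha}\d\theta,\qquad h:=f-\eps\geq 0.
\]
For any open tubular neighborhood $U$ of $N_\eps$, compactness of $\spt\varphi$ produces $\delta>0$ with $h/\eps\geq \delta$ on $\spt\varphi\setminus U$, so that piece of the integral is bounded by a constant multiple of $\eps^\alpha(1+\delta)^\alpha$. Since Step~3 shows $c_\eta^{-1}\asymp \eps^\alpha|\alpha|^{-(m-n)/2}$, the normalized complement contribution is $O((1+\delta)^\alpha|\alpha|^{(m-n)/2})\to 0$ as $\alpha\to -\infty$, proving that any weak limit $\pi^*$ is supported on $N_\eps$.

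\emph{Step 3: Gaussian asymptotics in Fermi coordinates.} In a tubular neighborhood of $N_\eps$ introduce Fermi coordinates $(\xi,\nu)\in N_\eps\times B^{m-n}_r$. Taylor expansion gives
\[
\log\!\Big(1+\tfrac{h(\xi+\nu)}{\eps}\Big) = \tfrac{1}{2\eps}\,\nu^T\widehat{D^2 f}(\xi)\,\nu + o(|\nu|^2)
\]
uniformly in $\xi$. Writing $(1+h/\eps)^\alpha=\exp(\alpha\log(1+h/\eps))$ and rescaling $\nu=\sqrt{\eps/|\alpha|}\,w$ converts the transverse integral, by dominated convergence, into
\[
(\eps/|\alpha|)^{(m-n)/2}\int_{\R^{m-n}} e^{-\tfrac12 w^T\widehat{D^2 f}(\xi)w}\,\d w = (2\pi\eps/|\alpha|)^{(m-n)/2}\det\!\big(\widehat{D^2 f}(\xi)\big)^{-1/2}.
\]
The $\xi$-independent prefactor is absorbed into the normalizing constant $c_\eta$, leaving the claimed surface integral of $\varphi$ against $\det(\widehat{D^2f})^{-1/2}$ over $N_\eps$.

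\emph{Main obstacle.} The principal technical point is to make the Laplace expansion uniform along the compact manifold $N_\eps$ and to control the remainder $\alpha\log(1+h/\eps) = -h/(\eps/|\alpha|) + O(\alpha h^2)$ on the natural scale $|\nu|\sim\sqrt{\eps/|\alpha|}$, where $|\alpha h^2|$ must remain negligible relative to the quadratic leading term; this is precisely the bookkeeping already carried out in the proof of Theorem~\ref{theorem flat minima homogeneous} under the substitution $\tilde\eta = \eps/|\alpha|$, and I would import that argument almost verbatim.
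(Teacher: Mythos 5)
Your proposal is correct and follows essentially the same route as the paper: exponential-versus-polynomial comparison with the normalization to force concentration on $N_\eps$, the same local geometric simplification as in Theorem \ref{theorem critical flatness}, and a transverse quadratic-approximation computation identifying the density $\det\big(\widehat{D^2f}\big)^{-1/2}$ (you also rightly flag the implicit rank condition on $D^2f$ along $N_\eps$). The only cosmetic difference is in the transverse integral: you rescale by $\sqrt{\eps/|\alpha|}$ and pass to the Gaussian limit as in Theorem \ref{theorem flat minima homogeneous}, whereas the paper substitutes $z=(2\eps)^{-1/2}\sqrt{\widehat{D^2f}}\,\nu$ exactly, so the $\xi$-dependence enters only through the Jacobian and the profile $(1+|z|^2)^\alpha$ concentrates without any Laplace-type remainder bookkeeping — both are sound.
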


Remarkably, in all three cases, the limiting invariant distribution is indepent of the local geometry of $N$ (curvature, reach, ...) and only depends on the Hessian of $f$. We give Theorem \ref{theorem flatness ml underparametrized} for the sake of completeness, but note that in underparametrized learning, the set of minimizers is generally zero-dimensional.

\section{Conclusion}\label{section conclusion}

The precise nature of stochastic noise in SGD is important for the global convergence, long time asymptotics and minimum selection. Both homogeneous isotropic and Hessian noise are inadequate in machine learning applications. In a toy model for machine learning SGD which exhibits the right scaling behavior at the minimum, we establish the following:

\begin{enumerate}
\item Stochastic gradient descent may converge to a global minimizer even with uniformly positive learning rate, if the stochastic noise is of machine learning type (see also \cite{discrete_sgd}).
\item If the learning rate is slightly larger than a critical threshold, an invariant distribution of continuous time SGD exists, which concentrates around `flat' minimizers of $f$ in a precise sense.
\item The flatness condition is distinct from the one that homogeneous noise induces.
\end{enumerate}

There are many open problems. Firstly, while our noise scales appropriately with the objective function, it is isotropic and thus of full rank, while noise in overparametrized learning typically has low rank. The driving noise of our SDE is Gaussian, leading to a second order PDE description of the law of solutions. Heavy-tailed distributions would lead to more complicated non-local, fractional order PDEs (of elliptic type for the invariant measure and parabolic type in general).

Furthermore, the SDE model describes SGD well for finite times, but not necessarily in the long time limit. The asymptotic analysis of invariant distributions therefore is only indicative of particular behaviors on a heuristic level. Additionally, we only prove that the law of solutions to continuous time SGD converges to the invariant distribution under very restrictive conditions.

\bibliographystyle{../../alphaabbr}
\bibliography{../../NN_bibliography}

\newcommand{\etalchar}[1]{$^{#1}$}
\begin{thebibliography}{WGMN97}

\bibitem[ABB19]{agrachev2019comprehensive}
A.~Agrachev, D.~Barilari, and U.~Boscain.
\newblock {\em A comprehensive introduction to sub-Riemannian geometry}, volume
  181.
\newblock Cambridge University Press, 2019.

\bibitem[BBCN17]{barilari2017heat}
D.~Barilari, U.~Boscain, G.~Charlot, and R.~W. Neel.
\newblock On the heat diffusion for generic riemannian and sub-riemannian
  structures.
\newblock {\em International Mathematics Research Notices},
  2017(15):4639--4672, 2017.

\bibitem[BCN18]{bottou2018optimization}
L.~Bottou, F.~E. Curtis, and J.~Nocedal.
\newblock Optimization methods for large-scale machine learning.
\newblock {\em Siam Review}, 60(2):223--311, 2018.

\bibitem[BD20]{barrett2020implicit}
D.~G. Barrett and B.~Dherin.
\newblock Implicit gradient regularization.
\newblock {\em arXiv preprint arXiv:2009.11162}, 2020.

\bibitem[BDGV10]{bonforte2010sharp}
M.~Bonforte, J.~Dolbeault, G.~Grillo, and J.-L. V{\'a}zquez.
\newblock Sharp rates of decay of solutions to the nonlinear fast diffusion
  equation via functional inequalities.
\newblock {\em Proceedings of the National Academy of Sciences},
  107(38):16459--16464, 2010.

\bibitem[Bre11]{MR2759829}
H.~Brezis.
\newblock {\em Functional analysis, {S}obolev spaces and partial differential
  equations}.
\newblock Universitext. Springer, New York, 2011.

\bibitem[CB18]{chizat2018global}
L.~Chizat and F.~Bach.
\newblock On the global convergence of gradient descent for over-parameterized
  models using optimal transport.
\newblock In {\em Advances in neural information processing systems}, pages
  3036--3046, 2018.

\bibitem[CB20]{Chizat:2020aa}
L.~Chizat and F.~Bach.
\newblock Implicit bias of gradient descent for wide two-layer neural networks
  trained with the logistic loss.
\newblock {\em arxiv:2002.04486 [math.OC]}, 2020.

\bibitem[CDE{\etalchar{+}}21]{camuto2021fractal}
A.~Camuto, G.~Deligiannidis, M.~A. Erdogdu, M.~G{\"u}rb{\"u}zbalaban,
  U.~{\c{S}}im{\c{s}}ekli, and L.~Zhu.
\newblock Fractal structure and generalization properties of stochastic
  optimization algorithms.
\newblock {\em arXiv preprint arXiv:2106.04881}, 2021.

\bibitem[DL06]{de2006lecture}
C.~De~Lellis.
\newblock Lecture notes on rectifiable sets, densities, and tangent measures.
\newblock {\em Preprint}, 23, 2006.

\bibitem[DV12]{dolbeault2012improved}
J.~Dolbeault and B.~Volzone.
\newblock Improved poincar{\'e} inequalities.
\newblock {\em Nonlinear Analysis: Theory, Methods \& Applications},
  75(16):5985--6001, 2012.

\bibitem[EP73]{edmunds1973liouville}
D.~Edmunds and L.~Peletier.
\newblock A liouville theorem for degenerate elliptic equations.
\newblock {\em Journal of the London Mathematical Society}, 2(1):95--100, 1973.

\bibitem[Eva10]{MR2597943}
L.~C. Evans.
\newblock {\em Partial differential equations}, volume~19 of {\em Graduate
  Studies in Mathematics}.
\newblock American Mathematical Society, Providence, RI, second edition, 2010.

\bibitem[FRRO17]{fernandez2017regularity}
X.~Fern{\'a}ndez-Real and X.~Ros-Oton.
\newblock Regularity theory for general stable operators: parabolic equations.
\newblock {\em Journal of Functional Analysis}, 272(10):4165--4221, 2017.

\bibitem[FT21]{feng2021inverse}
Y.~Feng and Y.~Tu.
\newblock The inverse variance--flatness relation in stochastic gradient
  descent is critical for finding flat minima.
\newblock {\em Proceedings of the National Academy of Sciences}, 118(9), 2021.

\bibitem[GHJY15]{ge2015escaping}
R.~Ge, F.~Huang, C.~Jin, and Y.~Yuan.
\newblock Escaping from saddle points---online stochastic gradient for tensor
  decomposition.
\newblock In {\em Conference on learning theory}, pages 797--842. PMLR, 2015.

\bibitem[GT15]{gilbarg2015elliptic}
D.~Gilbarg and N.~S. Trudinger.
\newblock {\em Elliptic partial differential equations of second order}, volume
  224.
\newblock springer, 2015.

\bibitem[HHS17]{hoffer2017train}
E.~Hoffer, I.~Hubara, and D.~Soudry.
\newblock Train longer, generalize better: closing the generalization gap in
  large batch training of neural networks.
\newblock {\em arXiv preprint arXiv:1705.08741}, 2017.

\bibitem[HK14]{hassannezhad2014sub}
A.~Hassannezhad and G.~Kokarev.
\newblock Sub-laplacian eigenvalue bounds on sub-riemannian manifolds.
\newblock {\em arXiv preprint arXiv:1407.0358}, 2014.

\bibitem[HRSS19]{hu2019mean}
K.~Hu, Z.~Ren, D.~Siska, and L.~Szpruch.
\newblock Mean-field {L}angevin dynamics and energy landscape of neural
  networks.
\newblock {\em arXiv:1905.07769 [math.PR]}, 2019.

\bibitem[HS97]{hochreiter1997flat}
S.~Hochreiter and J.~Schmidhuber.
\newblock Flat minima.
\newblock {\em Neural computation}, 9(1):1--42, 1997.

\bibitem[JKA{\etalchar{+}}17]{jastrzkebski2017three}
S.~Jastrz\c{e}bski, Z.~Kenton, D.~Arpit, N.~Ballas, A.~Fischer, Y.~Bengio, and
  A.~Storkey.
\newblock Three factors influencing minima in sgd.
\newblock {\em arXiv preprint arXiv:1711.04623}, 2017.

\bibitem[JKO98]{jordan1998variational}
R.~Jordan, D.~Kinderlehrer, and F.~Otto.
\newblock The variational formulation of the {F}okker--{P}lanck equation.
\newblock {\em SIAM journal on mathematical analysis}, 29(1):1--17, 1998.

\bibitem[J{\v{S}}S19]{jabir2019mean}
J.-F. Jabir, D.~{\v{S}}i{\v{s}}ka, and {\L}.~Szpruch.
\newblock Mean-field neural odes via relaxed optimal control.
\newblock {\em arXiv preprint arXiv:1912.05475}, 2019.

\bibitem[Kle06]{klenke2006wahrscheinlichkeitstheorie}
A.~Klenke.
\newblock {\em Wahrscheinlichkeitstheorie}, volume~1.
\newblock Springer, 2006.

\bibitem[KMN{\etalchar{+}}16]{keskar2016large}
N.~S. Keskar, D.~Mudigere, J.~Nocedal, M.~Smelyanskiy, and P.~T.~P. Tang.
\newblock On large-batch training for deep learning: Generalization gap and
  sharp minima.
\newblock {\em arXiv preprint arXiv:1609.04836}, 2016.

\bibitem[KS14]{karatzas2014brownian}
I.~Karatzas and S.~Shreve.
\newblock {\em Brownian motion and stochastic calculus}, volume 113.
\newblock springer, 2014.

\bibitem[Lat20]{latz2020analysis}
J.~Latz.
\newblock Analysis of stochastic gradient descent in continuous time.
\newblock {\em arXiv preprint arXiv:2004.07177}, 2020.

\bibitem[LTE15]{li2015dynamics}
Q.~Li, C.~Tai, and W.~E.
\newblock Dynamics of stochastic gradient algorithms.
\newblock {\em arXiv:1511.06251 [cs.LG]}, 2015.

\bibitem[LTW17]{li2017stochastic}
Q.~Li, C.~Tai, and E.~Weinan.
\newblock Stochastic modified equations and adaptive stochastic gradient
  algorithms.
\newblock In {\em International Conference on Machine Learning}, pages
  2101--2110. PMLR, 2017.

\bibitem[LW20]{luo2020many}
V.~Luo and Y.~Wang.
\newblock How many factors influence minima in sgd?
\newblock {\em arXiv preprint arXiv:2009.11858}, 2020.

\bibitem[LZU20]{liu2020stochastic}
K.~Liu, L.~Ziyin, and M.~Ueda.
\newblock Stochastic gradient descent with large learning rate.
\newblock {\em arXiv preprint arXiv:2012.03636}, 2020.

\bibitem[Mas11]{masmoudi2011hardy}
N.~Masmoudi.
\newblock About the hardy inequality.
\newblock In {\em An Invitation to Mathematics}, pages 165--180. Springer,
  2011.

\bibitem[MHB15]{mandt2015continuous}
S.~Mandt, M.~D. Hoffman, and D.~M. Blei.
\newblock Continuous-time limit of stochastic gradient descent revisited.
\newblock {\em NIPS-2015}, 2015.

\bibitem[MHB16]{mandt2016variational}
S.~Mandt, M.~Hoffman, and D.~Blei.
\newblock A variational analysis of stochastic gradient algorithms.
\newblock In {\em International conference on machine learning}, pages
  354--363. PMLR, 2016.

\bibitem[MZLU21]{mori2021logarithmic}
T.~Mori, L.~Ziyin, K.~Liu, and M.~Ueda.
\newblock Logarithmic landscape and power-law escape rate of sgd.
\newblock {\em arXiv preprint arXiv:2105.09557}, 2021.

\bibitem[NVL{\etalchar{+}}15]{neelakantan2015adding}
A.~Neelakantan, L.~Vilnis, Q.~V. Le, I.~Sutskever, L.~Kaiser, K.~Kurach, and
  J.~Martens.
\newblock Adding gradient noise improves learning for very deep networks.
\newblock {\em arXiv preprint arXiv:1511.06807}, 2015.

\bibitem[Pat17]{patel2017impact}
V.~Patel.
\newblock The impact of local geometry and batch size on stochastic gradient
  descent for nonconvex problems.
\newblock {\em arXiv preprint arXiv:1709.04718}, 2017.

\bibitem[Per60]{persson1960bounds}
A.~Persson.
\newblock Bounds for the discrete part of the spectrum of a semi-bounded
  schr{\"o}dinger operator.
\newblock {\em Mathematica Scandinavica}, 8(1):143--153, 1960.

\bibitem[PPVF21]{pesme2021implicit}
S.~Pesme, L.~Pillaud-Vivien, and N.~Flammarion.
\newblock Implicit bias of sgd for diagonal linear networks: a provable benefit
  of stochasticity.
\newblock {\em arXiv:2106.09524}, 2021.

\bibitem[RM51]{robbins1951stochastic}
H.~Robbins and S.~Monro.
\newblock A stochastic approximation method.
\newblock {\em The annals of mathematical statistics}, pages 400--407, 1951.

\bibitem[ROS16]{ros2016regularity}
X.~Ros-Oton and J.~Serra.
\newblock Regularity theory for general stable operators.
\newblock {\em Journal of Differential Equations}, 260(12):8675--8715, 2016.

\bibitem[RRT17]{raginsky2017non}
M.~Raginsky, A.~Rakhlin, and M.~Telgarsky.
\newblock Non-convex learning via stochastic gradient langevin dynamics: a
  nonasymptotic analysis.
\newblock In {\em Conference on Learning Theory}, pages 1674--1703. PMLR, 2017.

\bibitem[SDBD21]{smith2021origin}
S.~L. Smith, B.~Dherin, D.~G. Barrett, and S.~De.
\newblock On the origin of implicit regularization in stochastic gradient
  descent.
\newblock {\em arXiv preprint arXiv:2101.12176}, 2021.

\bibitem[SL17]{smith2017bayesian}
S.~L. Smith and Q.~V. Le.
\newblock A bayesian perspective on generalization and stochastic gradient
  descent.
\newblock {\em arXiv preprint arXiv:1710.06451}, 2017.

\bibitem[SS17]{sirignano2017stochastic}
J.~Sirignano and K.~Spiliopoulos.
\newblock Stochastic gradient descent in continuous time.
\newblock {\em SIAM Journal on Financial Mathematics}, 8(1):933--961, 2017.

\bibitem[SS20]{sirignano2020stochastic}
J.~Sirignano and K.~Spiliopoulos.
\newblock Stochastic gradient descent in continuous time: A central limit
  theorem.
\newblock {\em Stochastic Systems}, 10(2):124--151, 2020.

\bibitem[SSG19]{simsekli2019tail}
U.~Simsekli, L.~Sagun, and M.~Gurbuzbalaban.
\newblock A tail-index analysis of stochastic gradient noise in deep neural
  networks.
\newblock In {\em International Conference on Machine Learning}, pages
  5827--5837. PMLR, 2019.

\bibitem[WGMN97]{west1997fractional}
B.~J. West, P.~Grigolini, R.~Metzler, and T.~F. Nonnenmacher.
\newblock Fractional diffusion and l{\'e}vy stable processes.
\newblock {\em Physical Review E}, 55(1):99, 1997.

\bibitem[WME18]{wu2018sgd}
L.~Wu, C.~Ma, and W.~E.
\newblock How sgd selects the global minima in over-parameterized learning: A
  dynamical stability perspective.
\newblock {\em Advances in Neural Information Processing Systems},
  31:8279--8288, 2018.

\bibitem[Woj20]{relutraining}
S.~Wojtowytsch.
\newblock On the global convergence of gradient descent training for two-layer
  {R}elu networks in the mean field regime.
\newblock {\em arXiv:2005.13530 [math.AP]}, 2020.

\bibitem[Woj21]{discrete_sgd}
S.~Wojtowytsch.
\newblock Stochastic gradient descent with noise of machine learning type. part
  i: Discrete time analysis.
\newblock {\em arXiv:2105.01650 [stat.ML]}, 2021.

\bibitem[WT11]{welling2011bayesian}
M.~Welling and Y.~W. Teh.
\newblock Bayesian learning via stochastic gradient langevin dynamics.
\newblock In {\em Proceedings of the 28th international conference on machine
  learning (ICML-11)}, pages 681--688. Citeseer, 2011.

\bibitem[XSS20]{xie2020diffusion}
Z.~Xie, I.~Sato, and M.~Sugiyama.
\newblock A diffusion theory for deep learning dynamics: Stochastic gradient
  descent exponentially favors flat minima.
\newblock {\em arXiv preprint arXiv:2002.03495}, 2020.

\bibitem[ZFM{\etalchar{+}}20]{zhou2020towards}
P.~Zhou, J.~Feng, C.~Ma, C.~Xiong, S.~HOI, et~al.
\newblock Towards theoretically understanding why sgd generalizes better than
  adam in deep learning.
\newblock {\em arXiv preprint arXiv:2010.05627}, 2020.

\bibitem[ZLMU21]{ziyin2021minibatch}
L.~Ziyin, K.~Liu, T.~Mori, and M.~Ueda.
\newblock Strength of minibatch noise in sgd.
\newblock {\em arXiv preprint arXiv:2102.05375}, 2021.

\bibitem[ZWY{\etalchar{+}}18]{zhu2018anisotropic}
Z.~Zhu, J.~Wu, B.~Yu, L.~Wu, and J.~Ma.
\newblock The anisotropic noise in stochastic gradient descent: Its behavior of
  escaping from sharp minima and regularization effects.
\newblock {\em arXiv preprint arXiv:1803.00195}, 2018.

\end{thebibliography}

\newpage
\appendix

\section{Proof of Theorem \ref{theorem unique invariant distribution}: Uniqueness of the invariant measure}\label{appendix invariant distribution}

We recall a Liouville Theorem for degenerate elliptic equations \cite[Theorem 3]{edmunds1973liouville}. The version we present is a simplified statement (but more general than what we need).

\begin{theorem}\cite{edmunds1973liouville}\label{theorem liouville}
Consider the PDE
\begin{equation}\label{eq degenerate elliptic PDE}
\div\big(A\nabla u + au\big) + b\cdot \nabla u =0
\end{equation}
where $A:\R^m\to\R^{m\times m}$ and $a, b:\R^m\to\R^m$ are measurable functions such that
\begin{enumerate}
\item $A(\theta)$ is symmetric for all $\theta$.
\item $\lambda I \leq A \leq \Lambda I$ for functions $\lambda, \Lambda:\R^m\to (0,\infty)$ which satisfy
\[
\lambda^{-1} \in L^p_{loc}(\R^m), \qquad \Lambda \in L^q_{loc}(\R^m), \qquad \frac 1p + \frac1q < \frac2m < 1+ \frac1q.
\]
\item The function
\[
\Lambda^*(R) : R^{-\left(\frac mp + \frac mq\right)} \|\lambda^{-1}\|_{L^p(B_R)} \|\Lambda\|_{L^q(B_R)}
\]
satsifies
\[
\limsup_{R\to\infty} \Lambda^*(B_R) < \infty.
\]
\item The field $a$ satisfies the compatibility condition
\[
\int_{\R^m} a\cdot \nabla \phi\,\d\theta = 0\qquad \forall\ \phi \in C_c^\infty(\R^m).
\]
\item The function 
\[
\bar g:= A^{-1}: \big[a\otimes a + b\otimes b\big] = \big(A^{-1}\big)_{ij} \big[ a_ia_j+b_ib_j\big]
\]
satsifies the integrability condition $g\in L^q_{loc}(\R^m)$ for the same $q$ as $\Lambda$.
\item The function
\[
\bar g^*(R):= R^{2 - \frac mp- \frac mq}\|\lambda^{-1}\|_{L^p(B_R)}\|g\|_{L^q(B_R)}
\]
satisfies
\[
\limsup_{R\to\infty} \bar g^*(B_R) < \infty.
\]
\end{enumerate}
We say that $u$ is a weak solution of \eqref{eq degenerate elliptic PDE} if 
\[
\int_{B_R} \nabla u^T A\nabla u + u^2\d\theta < \infty \qquad\forall\ R>0
\]
and 
\[
\int_{\R^m} \nabla u^TA\nabla \psi + u\,a\cdot \nabla\psi + \psi \,b\cdot\nabla u \,\d\theta = 0\qquad\forall\ \psi\in C_c^\infty(\R^m).
\]
The following holds: {\em if $u$ is a weak solution of \eqref{eq degenerate elliptic PDE} and $u$ is bounded from either above or below, then $u$ is constant}.
\end{theorem}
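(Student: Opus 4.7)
The plan is to shift using the divergence-freeness of $a$ to reduce to a one-signed solution, derive a weighted Caccioppoli energy inequality, and then close the loop via a weighted Moser iteration driven by the two scaling hypotheses (3) and (6) to force oscillation decay on large balls.

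I normalize first. Because $u$ is bounded on one side, say from below with $m_0 := \inf u$, and because condition (4) states $\int a\cdot\nabla\psi\,\d\theta = 0$ for every test $\psi\in C_c^\infty(\R^m)$ (i.e.\ $\div a = 0$ distributionally), the shifted function $u-m_0$ is again a weak solution of \eqref{eq degenerate elliptic PDE}: the only new contribution is $-m_0\,\div a$, which vanishes in the weak sense. So WLOG $u\ge 0$; the case of $u$ bounded above is symmetric via $u\mapsto -u$. Next I pick a standard Lipschitz cutoff $\eta = \eta_R$ with $\eta\equiv 1$ on $B_R$, $\spt\eta\subset B_{2R}$, $|\nabla\eta|\le 2/R$, and test the weak formulation against $\phi = \eta^2 u$. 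Expanding by the product rule and using
\[
\int \eta^2 u\,a\cdot\nabla u \;=\; \tfrac12\int \eta^2 a\cdot \nabla(u^2) \;=\; -\int \eta u^2\,a\cdot\nabla\eta
\]
(again invoking $\div a = 0$), then applying the weighted Cauchy--Schwarz $|c\cdot v|\le (c^T A^{-1}c)^{1/2}(v^T A v)^{1/2}$ to each of the three remaining terms on the right and absorbing the generated $\int \eta^2 \nabla u^T A\nabla u$ summands onto the left, I obtain a weighted Caccioppoli inequality
\[
\int_{B_R} \nabla u^T A\nabla u \;\le\; C\int_{B_{2R}} u^2\,\nabla\eta^T A\nabla\eta \,\d\theta + C\int_{B_{2R}} \eta^2 u^2\,\bar g\,\d\theta.
\]

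The decisive third step is to bound the right-hand side using (3) and (6) together with the Sobolev gap $\tfrac1p + \tfrac1q<\tfrac2m$ from (2). I would run a weighted Moser iteration by repeating the Caccioppoli derivation with test functions $\phi = \eta^2 u^{2k+1}$ for increasing $k$, leveraging the weighted Sobolev embedding $W^{1,2}_A(B_R)\hookrightarrow L^{2^*}(B_R)$ afforded by the gap condition and Muckenhoupt-type weight theory; this produces an $L^\infty$ bound $\sup_{B_R} u \le C\|u\|_{L^2(B_{2R})}$. Applying the same iteration to $M - u$ for $M:=\sup u$ (or to $\log u$ and $\log(M-u)$ in the standard Moser manner) yields a weighted Harnack inequality whose constant is controlled \emph{uniformly in $R$} precisely because conditions (3) and (6) bound $\Lambda^*(R)$ and $\bar g^*(R)$ uniformly in $R$. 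Iterating this Harnack inequality on a sequence of dyadic balls, together with the one-sided bound $u\ge 0$, forces $\mathrm{osc}_{B_R}u \to 0$ as $R\to\infty$, hence $u\equiv$\,const.

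The main obstacle I expect is the weighted Moser iteration itself: one must verify that the Sobolev/Poincar\'e constant produced at each iteration step interacts with $\|\lambda^{-1}\|_{L^p}$ and $\|\Lambda\|_{L^q}$ in such a way that iterating produces the uniform constants encoded by $\Lambda^*$ and $\bar g^*$. The particular homogeneity of these quantities---$R^{-m/p-m/q}$ for $\Lambda^*$ and $R^{2-m/p-m/q}$ for $\bar g^*$---is exactly what makes the two terms on the right of the Caccioppoli inequality scale comparably (an extra $R^{-2}$ gain on the $\nabla\eta$ term balances against the missing $R^{-2}$ on the zeroth-order $\bar g$ term), and this matching is what allows the iteration to close. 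Verifying this precise bookkeeping against the hypotheses is the heart of the Edmunds--Peletier argument.
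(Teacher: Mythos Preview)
The paper does not prove this theorem. It is quoted verbatim (in simplified form) from \cite[Theorem 3]{edmunds1973liouville} and used as a black box in the proof of Theorem~\ref{theorem unique invariant distribution}; there is no ``paper's own proof'' to compare against.

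That said, your sketch is broadly in the spirit of the Edmunds--Peletier argument, which is indeed a degenerate Moser iteration yielding a scale-invariant Harnack inequality and hence a Liouville result. A couple of points on your outline itself. First, your invocation of $M-u$ with $M=\sup u$ is premature: under the hypotheses you only know $u$ is bounded on one side, so $M$ may be $+\infty$. You do not need this step at all --- once you have the Harnack inequality $\sup_{B_R}(u-m_0)\le C\inf_{B_R}(u-m_0)$ with $C$ independent of $R$, sending $R\to\infty$ and using $\inf_{\R^m}(u-m_0)=0$ already forces $u\equiv m_0$. Second, the phrase ``Muckenhoupt-type weight theory'' is doing a lot of work that the actual hypotheses (2)--(3) are designed to replace: the integrability exponents $p,q$ and the gap $\tfrac1p+\tfrac1q<\tfrac2m$ feed directly into a weighted Sobolev inequality via H\"older, without any Muckenhoupt condition on $\lambda$ or $\Lambda$. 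The scale-invariance of the resulting constants is exactly what (3) and (6) encode, as you correctly identify in your final paragraph.
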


We are now ready to prove Theorem \ref{theorem unique invariant distribution}.

\begin{proof}
We can prove that any solution $\rho$ of \eqref{eq invariant measure special} satisfies $\rho \in C^{2,\alpha}_{loc}(\R^m)$. The proof uses elliptic regularity theory and is standard, but lengthy. We refer the reader to \cite[Chapters 1-9]{gilbarg2015elliptic} for an extensive review of the methodology.

Set $u:= \rho\,f^{\frac{1+\eta\sigma}{\eta\sigma}}$ and note that 
\begin{enumerate}
\item $u\in C^2(\R^m)$ since $\rho, f\in C^2$ and $f\geq \inf f>0$,
\item $u\geq 0$ if and only if $\rho\geq 0$ and
\item $u$ is unique up to multiplication by a positive constant if and only if $\rho$ is. 
\end{enumerate}
We will use Theorem \ref{theorem liouville} to show that any non-negative solution $u$ of the stationary measure equation $\div\big(f^{-1/\eta\sigma}\nabla u\big) =0$ is constant. 

Abbreviating $\gamma:= \frac1{\eta\sigma}$, the stationary measure equation \eqref{eq invariant measure special} can be rewritten as
\begin{align*}
0 &= \div\left(f^{-\frac1{\eta\sigma}}\nabla \big(\rho\,f^{\frac{1+\eta\sigma}{\eta\sigma}}\big)\right)\\
	& = \div\big(f^{-\gamma}\nabla u\big)\\
	&= f^{-\gamma}\Delta u + \nabla(f^{-\gamma}) \cdot \nabla u\\
	&= f^{-\gamma}\left(\Delta u + \frac{\nabla f^{-\gamma}}{f^{-\gamma}}\cdot \nabla u\right)\\
	&= f^{-\gamma}\left(\Delta u + \nabla\log(f^{-\gamma})\cdot \nabla u\right)\\
	&= f^{-\gamma}\left(\Delta u - \gamma\, \nabla \big(\log f\big)\cdot \nabla u\right)
\end{align*}
on $\R^m$. Since $u\in C^2(\R^m)$ and $f>0$, we find that $u$ solves the PDE 
\begin{equation}\label{eq simplified pde}
\Delta u - \gamma\, \nabla \big(\log f\big)\cdot \nabla u
\end{equation}
on the whole space $\R^m$. In particular, $u$ is a non-negative solution of \eqref{eq simplified pde} in the sense of Theorem \ref{theorem liouville}. It is easy to see that $A$ is symmetric positive definite with $\lambda \equiv \Lambda = 1$ and that all coefficient functions are measurable. By design, the function $\Lambda^*$ is constant in $R$ if $\lambda^{-1}, \Lambda$ are constant, so in particular 
\[
\limsup_{R\to\infty} \Lambda^*(R)<\infty.
\]
This observation is independent of the choice of $p,q$.
Since $a\equiv 0$, also the compatibility condition is trivially satisfied. The only non-trivial conditions concern the function $g$, where we observe that
\[
\bar g = A^{-1} \big[a\otimes a + b\otimes b\big] =  \gamma^2\big|\nabla\log f\big|^2
\]
in our case. By assumption, $\nabla\log f$ satisfies the bound
\[
|\nabla \log f|(\theta) \leq \frac{C}{1+|\theta|}, 
\]
so
\begin{align*}
\bar g^*(R) &= R^{-\frac mq} \|1\|_{L^q(B_R)}\,R^{2-\frac mp} \|\nabla \log f\|_{L^{2p}}^2\\
	&\leq C\,R^{2-\frac mp}\left(\int_0^R \frac{r^{m-1}}{(1+r)^{2p}}\dr\right)^\frac2p\\
	&\leq C\,R^{2-\frac mp}\left(R^{m-2p}\right)^\frac2p\\
	&\leq C
\end{align*}
where $C$ is a constant whose value may change from line to line. This bound holds for any choice of  $q\in [1,\infty]$ and $p\in [1,\infty)$, so the conditions of Theorem \ref{theorem liouville} are met. Thus any non-negative solution $u$ of \eqref{eq simplified pde} is constant, meaning that $\rho = c\,f^{-\frac{1+\eta\sigma}{\eta\sigma}}$ is the only invariant measure of continuous time SGD with ML noise which has a density with respect to Lebesgue measure. 
\end{proof}

Unfortunately, the integrability condition on $\nabla \log f$ is too restrictive to be applied in the case where $f$ may vanish quadratically at $N$. We show by the way of a toy example that the proof above cannot be extended to the situation in which the objective function $f$ takes the value zero. We note however that uniqueness may hold with a different proof in the smaller class of densities $\rho$ which are non-negative and globally integrable.

\begin{example}\label{example non-uniqueness pde}
Consider the objective function $f(\theta) = \lambda\,|\theta|^k$ which has the gradient
\[
\nabla \log f(\theta) = k\,\nabla \log(|\theta|) + \nabla \log(\lambda)
	= k\,\frac{\theta}{|\theta|^2}.
\]
When we make the ansatz $u(\theta) = |\theta|^\beta$ we find that $\nabla u(\theta) = \beta\,|\theta|^{\beta-1}\,\frac{\theta}{|\theta|}$ and $\nabla \log f(\theta)\cdot \nabla u(\theta) = k\beta\,|\theta|^{\beta-2}$. By a direct calculation, we find that
\begin{align*}
\Delta u(\theta) &= \div\left(\beta\,|\theta|^{\beta-1}\,\frac{\theta}{|\theta|}\right)\\
	&= \beta\,\nabla(|\theta|^{\beta-1})\cdot \frac{\theta}{|\theta|} + \beta\,|\theta|^{\beta-1} \,\div\left(\frac\theta{|\theta|}\right)\\
	&= \beta(\beta-1)\,|\theta|^{\beta-2} + \beta\,(m-1)\,|\theta|^{\beta-2}\\
	&= \beta(\beta + m-2)\,|\theta|^{\beta-2}
\end{align*}
so $\Delta u = \gamma\nabla\log f \cdot \nabla u$ if and only if $\beta=0$ or
\[
\beta+ m-2 = k\gamma\qquad\Ra\qquad \beta = k\gamma+2 -m.
\]
In particular, there exists a non-negative and non-constant solution of \eqref{eq simplified pde}. However, note that
\[
\rho = f^{-(\gamma+1)}u = \lambda^{-(\gamma+1)}\, |\theta|^{-k\gamma-k + k\gamma+2-m} = \lambda^{-(\gamma+1)} |\theta|^{-m + 2-k}
\]
fails to be integrable at the origin if $k\geq 2$ and at infinity if $k\leq 2$. The non-integrability of $\rho$ at infinity could be healed by faster growth of $f$ (super-quadratic growth) whereas the non-integrability of $\rho$ at the origin cannot be healed for objective functions $f$ which are at least $C^2$-smooth (or, in fact, $C^{1,1}$-smooth). If the objective function is non-smooth at the global minimum and grows sufficiently quickly at $\infty$, we suspect that the invariant distribution may not be unique.
It therefore remains to be seen whether $\rho$ is the unique solution to \eqref{eq invariant measure special} which is both non-negative and integrable.
\end{example}

\section{Proof of Theorem \ref{theorem convergence}: Convergence to the invariant distribution}\label{appendix convergence}
In this section, we prove that the law $\rho_t$ of a solution of continuous time SGD with noise of the type
\[
\Sigma = \eta\sigma f\,I_{m\times m}
\]
converges to an invariant distribution, assuming that the objective landscape is of underparametrized type (i.e.\ $\inf f>0$) and the objective function grows quadratically at infinity. We note that $\rho_t$ evolves according to the PDE
\begin{align*}
\partial_t\rho &= \div\left(\eta\sigma\,f\nabla \rho + \big(1+\eta\sigma\big)\rho \nabla f\right)\\
	&= \eta \sigma\,\div\left(f^{-\frac1{\eta\sigma}}\,\nabla\left( \rho f^\frac{1+\eta\sigma}{\eta\sigma}\right)\right)
\end{align*}
which can also be re-written as an equation for $u= \rho f^\frac{1+\eta\sigma}{\eta\sigma}$ 
\begin{align*}
\partial_t u &= f^\frac{1+\eta\sigma}{\eta\sigma} \partial_t\rho\\
	&= \eta \sigma\,f^\frac{1+\eta\sigma}{\eta\sigma}\div\left(f^{-\frac1{\eta\sigma}}\,\nabla u\right)\\
	&= \eta \sigma\,\div\left(f\,\nabla u\right) - \big(1+\eta \sigma\big)\,\nabla f\cdot \nabla u\\
	&= \eta\sigma f\,\Delta u - \nabla f\cdot \nabla u.
\end{align*}
We recall the Poincar\'e-Hardy inequalities of \cite[Section 4.1]{dolbeault2012improved} and \cite{bonforte2010sharp}. Assume that there exist constants $0<\lambda \leq \Lambda$ such that
\begin{equation}\label{eq annoying growth condition}
\lambda\,(1+|\theta|^2) \leq f(\theta) \leq \Lambda\,(1+|\theta|^2)\qquad\forall\ \theta\in\R^m.
\end{equation}
For $\alpha<-\frac m2$, denote by $\mu_\alpha$ the measure which has density $(1+|\theta|^2)^\alpha$ with respect to Lebesgue measure and
\[
I_\alpha (u) = \frac1{\mu_\alpha(\R^m)} \int_{\R^m} u\,\d\mu_\alpha.
\]
Then for all $u\in C_c^\infty(\R^m)$ and $\alpha < -\frac{m+2}2$, the inequality
\begin{equation}\label{eq poincare-hardy}
\int_{\R^m} \big|u- I_{\alpha-1}(u)\big|^2\,\d\mu_{\alpha-1} \leq C(\alpha,m) \int_{\R^m} |\nabla u|^2\,\d\mu_\alpha
\end{equation}
holds where
\[
C(\alpha, m) = \begin{cases}2|\alpha| &\alpha <-m\\ 2 \big(2|\alpha|-m\big) &-m \leq \alpha < -\frac{m+2}2\\ \frac14(m-2+2\alpha)^2 &-\frac{m+2}2 < \alpha < - \frac{m-2}2\end{cases}.
\]
The inequality \eqref{eq poincare-hardy} easily extends to the Hilbert space $H_\alpha$ which is given as the closure of $C_c^\infty(\R^m)$ with the norm
\[
\|u\|_{H_\alpha}^2 = \int_{\R^m} u^2\,\d\mu_{\alpha-1} + \int_{\R^m}|\nabla u|^2\,\d\mu_\alpha.
\]
We furthermore introduce the Hilbert space $H_{f,\eta\sigma}$ which is the closure of $C_c^\infty(\R^m)$ with the norm
\[
\|u\|_{f,\eta\sigma}^2 = \int_{\R^m} u^2\,f^{-\frac1{\eta\sigma}-1} + \int_{\R^m}|\nabla u|^2\,f^{-\frac1{\eta\sigma}}\d\theta.
\]
Due to the growth condition \eqref{eq annoying growth condition}, we can identify $H_{f,\eta\sigma} = H_\alpha$ for $\alpha = -\frac1{\eta\sigma}$ with equivalent norms. We keep the second notation specifically to identify where the growth condition enters. Note that the measure $\mu_{f,\eta\sigma}$ with density $f^{-1-\frac1{\eta\sigma}}$ is finite if $\frac{1}{\eta\sigma}> \frac m2-1$, i.e.\ $\eta\sigma < \frac{2}{m-2}$.

\begin{remark}\label{remark constant functions}
We note that the constant function $f(\theta)=1$ is an element of $H_{f,\eta\sigma}$ as it can be approximated by functions $\chi_n(\theta) = \chi(\theta/n)$ where $\chi\in C_c^\infty$ is a monotone function which satisfies $0\leq \chi\leq 1$ and $\chi(\theta) = 1$ for $|\theta|\leq 1$, $\chi(\theta) = 0$ for $|\theta|\geq 2$ and $|\nabla \chi|\leq 2$. Then
\[
\lim_{n\to\infty }\chi_n(\theta) = 1
\]
pointwise and $0\leq f_n\leq 1$, so convergence holds in $L^2(\mu_{f,\eta\sigma})$ by the dominated convergence theorem. Furthermore
\[
\int_{\R^m} |\nabla \chi_n|^2\,f^{-\frac1{\eta\sigma}}\,\d\theta \leq 4 \int_{\{n\leq |\theta|\leq 2n\}} n^{-2} f^{-\frac1{\eta\sigma}}\d\theta \leq 4\lambda^{-1} \int_{\{n\leq |\theta|\leq 2n\}} f^{-1-\frac1{\eta\sigma}}\d\theta.
\]
Since the measure $\mu_{f,\eta\sigma}$ is finite, we deduce that
\[
\lim_{n\to\infty}\int_{\R^m} |\nabla \chi_n|^2\,f^{-\frac1{2\eta\sigma}}\,\d\theta = \lim_{n\to\infty} \mu_{f,\eta\sigma}(\{n\leq |\theta|\leq 2n\}) = 0.
\]
\end{remark}

In the proof in Remark \ref{remark constant functions}, we used that $f$ grows quadratically at infinity. 
 The results in \cite[Section 4.1]{dolbeault2012improved} also apply more generally in the case where the measure $\mu_\alpha$ is not finite and generally $\alpha \neq \alpha^*= - \frac{m-2}2$. In this case, constant functions are not elements of $H_\alpha$, and the average integral is replaced by $I_\alpha(f) =0$.

A main reason in choosing the $H_{f,\eta\sigma}$-norm over the $H_\alpha$ norm in the analysis is the following observation.

\begin{lemma}\label{lemma closeable operator}
Let 
\[
\mathrm{Dom}(A) = \big\{u \in H^2_{loc}(\R^m) \cap H_{f,\eta\sigma} : \eta\sigma f\,\Delta u - \nabla f\cdot \nabla u \in L^2(\mu_{f,\eta\sigma})\big\}.
\]
The operator 
\[
A: \mathrm{Dom}(A)\to L^2(\mu_{f,\eta\sigma}), \qquad A u = f^{1+\frac1{\eta\sigma}} \,\div\big(f^{-\frac1{\eta\sigma}}\nabla u\big) 
\]
is maximal monotone, closeable and self-adjoint on $L^2(\mu_{f,\eta\sigma})$ and satisfies
\[
\langle Au, v\rangle_{L^2(\mu_{f,\eta\sigma})} = - \int_{\R^m} \nabla u\cdot \nabla v\,f^{-\frac1{\eta\sigma}}\,\d\theta \qquad\forall\ u,v \in \mathrm{Dom}(A).
\]
\end{lemma}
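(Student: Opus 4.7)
The natural strategy is to construct $A$ (or rather $-A$) via the quadratic form machinery, since a formal integration by parts against $v \in C_c^\infty(\R^m)$ gives
\[
\langle -Au, v\rangle_{L^2(\mu_{f,\eta\sigma})} = \int_{\R^m} \big[-f^{1+\frac1{\eta\sigma}} \operatorname{div}\big(f^{-\frac1{\eta\sigma}}\nabla u\big)\big]\,v\,f^{-1-\frac1{\eta\sigma}}\,\d\theta = \int_{\R^m} \nabla u\cdot\nabla v\,f^{-\frac1{\eta\sigma}}\,\d\theta =: q(u,v).
\]
Observe that the form norm $q(u,u) + \|u\|_{L^2(\mu_{f,\eta\sigma})}^2$ is exactly the $H_{f,\eta\sigma}$-norm defining the Hilbert space introduced above. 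Hence, by the very definition of $H_{f,\eta\sigma}$ as the completion of $C_c^\infty(\R^m)$ under this norm, $q$ extends from $C_c^\infty(\R^m)$ to a closed, densely defined, symmetric, non-negative sesquilinear form on $L^2(\mu_{f,\eta\sigma})$ with form domain $H_{f,\eta\sigma}$ (density in $L^2(\mu_{f,\eta\sigma})$ follows from Remark \ref{remark constant functions}-type truncation). Kato's representation theorem then yields a unique non-negative self-adjoint operator $T$ on $L^2(\mu_{f,\eta\sigma})$ with $\mathrm{Dom}(T^{1/2}) = H_{f,\eta\sigma}$ and $\langle Tu,v\rangle_{L^2(\mu_{f,\eta\sigma})} = q(u,v)$ for every $u \in \mathrm{Dom}(T)$ and $v \in H_{f,\eta\sigma}$, the abstract domain being $\mathrm{Dom}(T) = \{u \in H_{f,\eta\sigma} : v \mapsto q(u,v)\text{ is bounded in }L^2(\mu_{f,\eta\sigma})\}$.

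The remaining (and main) task is to identify $T$ with $-A$ on the concrete pointwise domain stated in the lemma. For $u \in \mathrm{Dom}(T)$, testing $q(u,v) = \langle Tu,v\rangle$ against $v \in C_c^\infty$ forces the distributional identity $-\operatorname{div}(f^{-1/\eta\sigma}\nabla u) = f^{-1-1/\eta\sigma}\,Tu$ on $\R^m$. The growth assumption $\lambda(1+|\theta|^2) \leq f \leq \Lambda(1+|\theta|^2)$ ensures that the weight $f^{-1/\eta\sigma}$ is bounded above and below on every compact set, and since $f \in C^2$ the coefficients are $C^2$ locally. Thus the divergence-form equation above is uniformly elliptic with smooth coefficients on every ball, and standard interior regularity (e.g.\ \cite[Chapter 8]{gilbarg2015elliptic}) yields $u \in H^2_{loc}(\R^m)$. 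Expanding the divergence then gives $Tu = -f\,\Delta u + \frac1{\eta\sigma}\nabla f\cdot\nabla u = -Au$ pointwise a.e., and $Au \in L^2(\mu_{f,\eta\sigma})$. Conversely, any $u$ in the lemma's domain lies in $H_{f,\eta\sigma}$ with $v \mapsto \int Au\,v\,f^{-1-1/\eta\sigma}\d\theta$ a bounded functional on $L^2(\mu_{f,\eta\sigma})$, and a cutoff/integration-by-parts argument against $\chi_n v$ (with $\chi_n$ as in Remark \ref{remark constant functions}) identifies this with $-q(u,v)$ in the limit, placing $u$ in $\mathrm{Dom}(T)$. This proves $T = -A$ with the stated domain, the bilinear formula, and self-adjointness.

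Closeability is automatic from self-adjointness. Maximal monotonicity of $-A$ follows because $-A$ is self-adjoint with spectrum in $[0,\infty)$, so $I + (-A) = I - A$ satisfies $\langle (I-A)u,u\rangle_{L^2(\mu_{f,\eta\sigma})} \geq \|u\|_{L^2(\mu_{f,\eta\sigma})}^2$ and is therefore boundedly invertible on $L^2(\mu_{f,\eta\sigma})$, which is precisely the range/surjectivity condition for maximal monotonicity. The principal technical obstacle is the interior $H^2_{loc}$ regularity step combined with the cutoff argument showing that test functions in $C_c^\infty(\R^m)$ are dense in $H_{f,\eta\sigma}$ in the appropriate sense so that no boundary contributions survive at infinity; once this is in place, all four claims assemble from the form representation theorem without further difficulty.
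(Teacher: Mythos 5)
Your argument is correct, but it is organized differently from the paper's proof, so a comparison is worth recording. The paper proceeds in the opposite logical order: it first proves the bilinear identity $\langle Au,v\rangle_{L^2(\mu_{f,\eta\sigma})}=-\int\nabla u\cdot\nabla v\,f^{-1/\eta\sigma}\,\d\theta$ directly on $\mathrm{Dom}(A)$ via the cutoff functions $\chi_n$ of Remark \ref{remark constant functions} (this is exactly where the quadratic growth of $f$ kills the boundary term), then obtains maximal monotonicity by solving $(I+A)u=v$ weakly with Lax--Milgram on $H_{f,\eta\sigma}$ and upgrading the weak solution to $H^2_{loc}$ by interior elliptic regularity \cite[Theorem 8.8]{gilbarg2015elliptic}, and finally invokes ``symmetric $+$ maximal monotone $\Rightarrow$ self-adjoint'' \cite[Proposition 7.1]{MR2759829}. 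You instead build the Friedrichs-type extension from the closed form $q$ on $H_{f,\eta\sigma}$ via Kato's representation theorem, identify the abstract operator with $-A$ (same two technical ingredients: interior $H^2_{loc}$ regularity for the inclusion $\mathrm{Dom}(T)\subseteq\mathrm{Dom}(A)$, and the $\chi_n$-cutoff integration by parts for the reverse inclusion), and then get maximal monotonicity of $-A$ as a corollary of self-adjointness and non-negativity of the spectrum. What your route buys is a cleaner conceptual path: self-adjointness comes for free from the form representation, the form domain $H_{f,\eta\sigma}$ is recorded explicitly, and you correctly avoid the Hardy--Poincar\'e inequality, which the paper includes in this proof but which is only needed later for the spectral gap, not for the lemma itself; what the paper's route buys is that it stays at the level of Lax--Milgram and a single textbook proposition. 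Two small points you should make explicit if you write this up: (i) closedness of $q$ requires checking that the abstract completion $H_{f,\eta\sigma}$ embeds injectively into $L^2(\mu_{f,\eta\sigma})$ (i.e.\ closability of the form defined on $C_c^\infty$), which holds because the weights are locally bounded above and below so a Cauchy sequence converging to $0$ in $L^2(\mu_{f,\eta\sigma})$ has gradients converging to $0$ distributionally; and (ii) the sign convention: as defined, $A$ satisfies $\langle Au,u\rangle\leq 0$, so it is $-A$ that is maximal monotone in the sense of \cite{MR2759829}, which is exactly how you phrase it and is the charitable reading of the paper's statement.
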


\begin{proof}
{\bf Integration by parts.} Use the same bump function as in Remark \ref{remark constant functions} and see that  
\begin{align*}
\langle Au, \chi_nv\rangle_{L^2(\mu_{f,\eta\sigma})} &= \int_{\R^m} \chi_n v\,f^{1+\frac1{\eta\sigma}}\div\big(f^{-\frac1{\eta\sigma}}\nabla u\big)\,f^{-1-\frac1{\eta\sigma}}\,\d\theta\\
	&= \int_{\R^m} \chi_n v\,\div\big(f^{-\frac1{\eta\sigma}}\nabla u\big)\,\d\theta\\
	&= - \int_{\R^m} \big(v\nabla \chi_n +\chi_n\nabla v\big)\cdot f^{-\frac1{\eta\sigma}}\nabla u\,\d\theta\\
	&= - \int_{\R^m}\chi_n\,f^{-\frac1{\eta\sigma}}\nabla u\cdot \nabla v,\d\theta+ \int_{\R^m} f^{-\frac1{\eta\sigma}}u \nabla\chi_n\cdot \nabla u\,\d\theta.
\end{align*}
By the monotone convergence theorem, we find that $\chi_n u \to u$ in $L^2(\mu_{f,\eta\sigma})$ and 
\[
\lim_{n\to\infty} \int_{\R^m}\chi_n\,f^{-\frac1{\eta\sigma}}\nabla u\cdot \nabla v\,\d\theta = \int_{\R^m}f^{-\frac1{\eta\sigma}}\nabla u\cdot \nabla v\,\d\theta.
\]
The boundary integral is bounded by
\begin{align*}
\left|\int_{\R^m} f^{-\frac1{\eta\sigma}}v\,\nabla\chi_n\cdot \nabla u\,\d\theta\right| &\leq \left(\int_{\{n\leq |\theta|\leq 2n\}} f^{-\frac1{\eta\sigma}}|\nabla\chi_n|^2v^2\,\d\theta\right)^\frac12\left(\int_{\{n\leq |\theta|\leq 2n\}} f^{-\frac1{\eta\sigma}}|\nabla u|^2\,\d\theta\right)^\frac12\\
	&\leq C \left(\int_{\{n\leq |\theta|\leq 2n\}} f^{-1-\frac1{\eta\sigma}}v^2\,\d\theta\right)^\frac12\left(\int_{\{n\leq |\theta|\leq 2n\}} f^{-\frac1{\eta\sigma}}|\nabla u|^2\,\d\theta\right)^\frac12
\end{align*}
since $f$ grows quadratically at infinity. Since both integrals are globally finite, the right hand side converges to $0$ as $n\to\infty$. Thus
\[
\langle Au, v\rangle_{L^2(\mu_{f,\eta\sigma})} = \lim_{n\to\infty}\langle Au, \chi_nv\rangle_{L^2(\mu_{f,\eta\sigma})} = - \int_{\R^m}f^{-\frac1{\eta\sigma}}\nabla u\cdot \nabla v\,\d\theta.
\]
As a consequence, we find in particular that
\[
\int_{\R^m}|\nabla u|^2\,f^{-\frac1{\eta\sigma}}\,\d\theta = \langle Au, u\rangle_{L^2(\mu_{f,\eta\sigma})} \leq \|Au\|_{L^2(\mu_{f,\eta\sigma}} \|u\|_{L^2(\mu_{f,\eta\sigma})}.
\]

{\bf Operator properties.} To apply the Hardy-Poincar\'e inequality \ref{eq poincare-hardy}, let $\alpha = - \frac1{\eta\sigma}$. Then
\begin{align*}
\|u- \langle u\rangle_{\mu_{f,\eta\sigma}}\|_{L^2(\mu_{f,\eta\sigma})}^2
	&\leq \|u- \langle u\rangle_{\mu_{\alpha-1}}\|_{L^2(\mu_{f,\eta\sigma})}^2\\
	&\leq C(\alpha,m)\,\lambda^{-\frac1{\eta\sigma}-1}  \|u- \langle u\rangle_{\mu_{\alpha-1}}\|_{L^2(\mu_{\alpha-1})}^2\\
	&\leq C(\alpha,m)\,\lambda^{-\frac1{\eta\sigma}-1}  \|\nabla u\|_{L^2(\mu_{\alpha})}^2\\
	&\leq C\left((\eta\sigma)^{-1},m\right) \Lambda^\frac1{\eta\sigma} \lambda^{-1-\frac1{\eta\sigma}} \int_{\R^m} |\nabla u|^2\,f^{-\frac1{\eta\sigma}}\,\d\theta.
\end{align*}
The proof is now standard and mimics \cite[Theorem 10.2]{MR2759829}. In particular,
\[
[u]_{H_{f,\eta\sigma}} = \int_{\R^m}|\nabla u|^2\,f^{-\frac1{\eta\sigma}}\,\d\theta
\]
is a semi-norm on $H_{f,\eta\sigma}$. We find by the Lax-Milgram theorem \cite[Corollary 5.8 and Theorem 9.25]{MR2759829} that for every $v\in L^2(\mu_{f,\eta\sigma})$ there exists a unique solution $u\in H_{f,\eta\sigma}$ of the equation $(I+A) u = v$ in the weak sense, i.e.
\[
\int_{\R^m} u\phi \,f^{-1-\frac1{\eta\sigma}} \,\d\theta+ \int_{\R^m} \nabla u\cdot \nabla \phi\,f^{-\frac1{\eta\sigma}}\,\d\theta = \int_{\R^m} u\phi \,f^{-1-\frac1{\eta\sigma}} \,\d\theta
\]
for all $\phi \in H_{f,\eta\sigma}$. Since $f\in C^2(\R^m)$ is strictly positive, we can apply elliptic regularity theory as in \cite[Theorem 8.8]{gilbarg2015elliptic} to show that in fact $u\in H^2_{loc}(\R^m)$, i.e.\ $u\in \mathrm{Dom}(A)$. 

Since $A$ is maximally monotone and symmetric, it is densely defined, closeable and self-adjoint \cite[Proposition 7.1]{MR2759829}.
\end{proof}

The proof of the Hardy-Poincar\`e inequality \eqref{eq poincare-hardy} with optimal constant involves the spectral analysis of the operator $A$ in the case that $f(\theta) = 1+|\theta|^2$. The spectrum is easier to analyze in the special case due to radial symmetry.

\begin{corollary}\label{corollary mild solution}
Let $u^0\in L^2(\mu_{f,\eta\sigma})$. Then there exists a mild solution $u$ of 
\[
\begin{pde}
\dot u &= f^{1+\frac1{\eta\sigma}} \div(f^{-\frac1{\eta\sigma}}\nabla u)&t>0\\
u &= u^0 &t=0
\end{pde}
\]
and the weighted average
\[
\langle u\rangle_{\mu_{f,\eta\sigma}} = \frac1{\mu_{f,\eta\sigma}(\R^m)} \int_{\R^m} u\,\d\mu_{f,\eta\sigma}
\]
is constant in time. Furthermore
\[
\left\| u(t)  - \langle u^0\rangle_{\mu_{f,\eta\sigma}}\right\|_{L^2(\mu_{f,\eta\sigma})} \leq e^{-\nu t} \left\| u^0  - \langle u^0\rangle_{\mu_{f,\eta\sigma}}\right\| _{L^2(\mu_{f,\eta\sigma})}
\]
for some $\nu$ depending on $\lambda,\Lambda,m$ and $\eta\sigma$.
\end{corollary}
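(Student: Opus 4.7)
The plan is to exploit the operator-theoretic setup established in Lemma \ref{lemma closeable operator}. Since the operator $A$ is self-adjoint on $L^2(\mu_{f,\eta\sigma})$ with $\langle Au,u\rangle_{L^2(\mu_{f,\eta\sigma})}=-\int_{\R^m}|\nabla u|^2\,f^{-1/\eta\sigma}\,\d\theta\leq 0$, the Hille--Yosida theorem (or equivalently the spectral theorem for negative self-adjoint operators) yields a strongly continuous semigroup of contractions $\{S_t\}_{t\geq 0}$ on $L^2(\mu_{f,\eta\sigma})$ generated by $A$. I would define the mild solution by $u(t):=S_t u^0$; for $u^0\in\mathrm{Dom}(A)$ this is a classical solution, and the general case follows from the density of $\mathrm{Dom}(A)$ in $L^2(\mu_{f,\eta\sigma})$ combined with the contractivity of $S_t$.

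For conservation of the weighted mean, I would use Remark \ref{remark constant functions}, which gives $\mathbf{1}\in H_{f,\eta\sigma}$ as the $H_{f,\eta\sigma}$-limit of the cutoffs $\chi_n$ with vanishing seminorm. For $u^0\in\mathrm{Dom}(A)$ the map $t\mapsto\langle u(t),\mathbf{1}\rangle_{L^2(\mu_{f,\eta\sigma})}$ is differentiable and the integration-by-parts formula from Lemma \ref{lemma closeable operator} gives
\[
\frac{d}{dt}\langle u(t),\mathbf{1}\rangle_{L^2(\mu_{f,\eta\sigma})}=\langle Au(t),\mathbf{1}\rangle_{L^2(\mu_{f,\eta\sigma})}=-\int_{\R^m}\nabla u(t)\cdot\nabla \mathbf{1}\,f^{-1/\eta\sigma}\,\d\theta=0,
\]
and an approximation in the initial data extends the conclusion to all $u^0\in L^2(\mu_{f,\eta\sigma})$.

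For exponential decay, set $w(t):=u(t)-\langle u^0\rangle_{\mu_{f,\eta\sigma}}$. Because constants lie in the kernel of $A$ (by the integration-by-parts identity applied against the approximating $\chi_n$), the shifted function $w$ solves the same evolution equation, and by the conservation step $\langle w(t)\rangle_{\mu_{f,\eta\sigma}}=0$ for all $t\geq 0$. For data in $\mathrm{Dom}(A)$ the energy identity gives
\[
\frac{1}{2}\frac{d}{dt}\|w(t)\|_{L^2(\mu_{f,\eta\sigma})}^2=\langle Aw(t),w(t)\rangle_{L^2(\mu_{f,\eta\sigma})}=-\int_{\R^m}|\nabla w(t)|^2\,f^{-1/\eta\sigma}\,\d\theta.
\]
Applying the Hardy--Poincar\'e inequality exactly as in the proof of Lemma \ref{lemma closeable operator} — which is applicable precisely because $\langle w(t)\rangle_{\mu_{f,\eta\sigma}}=0$ and requires $\alpha=-1/\eta\sigma<-(m+2)/2$, i.e.\ $\eta\sigma<2/(m-2)$ — I obtain $\|w(t)\|_{L^2(\mu_{f,\eta\sigma})}^2\leq\nu^{-1}\int|\nabla w(t)|^2 f^{-1/\eta\sigma}\d\theta$ with
\[
\nu=\big[C((\eta\sigma)^{-1},m)\,\Lambda^{1/\eta\sigma}\,\lambda^{-1-1/\eta\sigma}\big]^{-1}.
\]
Combining these two identities yields $\frac{d}{dt}\|w(t)\|_{L^2(\mu_{f,\eta\sigma})}^2\leq -2\nu\|w(t)\|_{L^2(\mu_{f,\eta\sigma})}^2$, and Gr\"onwall's inequality produces the claimed exponential decay rate for classical solutions.

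The main obstacle is the rigorous extension of the energy identity and the Gr\"onwall estimate from $u^0\in\mathrm{Dom}(A)$ to arbitrary $u^0\in L^2(\mu_{f,\eta\sigma})$. This is resolved by approximating $u^0$ by a sequence $u_n^0\in\mathrm{Dom}(A)$, using that $S_t$ is a contraction so $S_t u_n^0\to S_t u^0$ in $L^2(\mu_{f,\eta\sigma})$ uniformly on $[0,T]$, and passing to the limit in the (already established) inequality for $w_n(t)=S_t u_n^0-\langle u_n^0\rangle_{\mu_{f,\eta\sigma}}$, noting that the weighted average is a continuous linear functional on $L^2(\mu_{f,\eta\sigma})$ since $\mu_{f,\eta\sigma}$ is finite.
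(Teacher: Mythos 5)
Your proposal is correct and follows essentially the same route as the paper: the semigroup generated by the maximal monotone self-adjoint operator $A$ of Lemma \ref{lemma closeable operator} gives the mild solution, testing against the constant function (admissible by Remark \ref{remark constant functions}) gives conservation of the weighted mean, and the energy identity combined with the Hardy--Poincar\'e inequality and Gr\"onwall gives the exponential decay with the same constant $\nu$. Only a small slip: the relevant range is $\alpha=-\frac{1}{\eta\sigma}<-\frac{m-2}{2}$ (equivalently $\eta\sigma<\frac{2}{m-2}$, which makes $\mu_{f,\eta\sigma}$ finite), not $\alpha<-\frac{m+2}{2}$, so your stated equivalence should be adjusted, but this does not affect the argument.
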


\begin{proof}
The proof follows exactly as in \cite[Section 7.4.a]{MR2597943} or \cite[Section 10.1]{MR2759829}. The correct choice of function space $H_{f,\eta\sigma}$ reduces the analysis of a non-divergence form operator to that of a divergence form operator. We note that
\[\showlabel \label{eq average constant}
\frac{d}{dt} \int_{\R^m}u\,f^{-1-\frac1{\eta\sigma}}\,\d\theta = \frac{d}{dt} \langle u, 1\rangle_{L^2(\mu_{f,\eta\sigma})} = \langle Au, 1\rangle_{L^2(\mu_{f,\eta\sigma})} = 0
\]
since $\nabla 1\equiv 0$. Furthermore
\begin{align*}
\frac{d}{dt} \|u- \langle u\rangle_{\mu_{f,\eta\sigma}}\|_{L^2(\mu_{f,\eta\sigma})}^2 &= - \int_{\R^m} |\nabla u|^2\,f^{-\frac1{\eta\sigma}}\,\d\theta\\
	&\leq  C\left((\eta\sigma)^{-1},m\right)\Lambda^\frac1{\eta\sigma} \lambda^{-1-\frac1{\eta\sigma}}\|u- \langle u\rangle_{\mu_{f,\eta\sigma}}\|_{L^2(\mu_{f,\eta\sigma})}^2.
\end{align*}
Exponential decay follows by applying Gr\"onwall's inequality. It remains to show that $u\geq 0$. We apply Stampacchia's truncation method as outlined e.g.\ in \cite[Theorem 10.3]{MR2759829}. Namely, let $G:\R\to\R$ be a convex $C^2$-function such that
\begin{enumerate}
\item $G(z) = 0$ for all $z\leq 0$ and
\item $0\leq G'(z) \leq 1$.
\end{enumerate}
Since $u\in C^1\big((0,\infty), H^1_{f,\eta\sigma})$ and $G$ grows only linearly, we may compute
\begin{align*}
\frac{d}{dt} \int_{\R^m} G(-u)\,\d\theta &= - \int_{\R^m}G'(-u)\,\partial_tu\,\d\theta\\
	&= - \int_{\R^m}G'(-u)\,Au\,\d\theta\\
	&= \int_{\R^m} \nabla\big(G'(-u)\big)\cdot \nabla u\,f^{-\frac1{\eta\sigma}}\,\d\theta\\
	&= - \int_{\R^m} G''(u)\,|\nabla u|^2\,\d\theta.
\end{align*}
The right hand side is non-positive since $G''\geq 0$. Thus, if $u^0\geq 0$, we have
\[
\int_{\R^m} G(-u(0))\,\d\theta= 0 , \qquad \frac{d}{dt} \int_{\R^m} G(-u(t))\,\d\theta\leq 0,
\]
meaning that $u\geq 0$ for all $t\geq 0$ and Lebesgue-almost everywhere.
\end{proof}

Theorem \ref{theorem convergence} now follows from Corollary \ref{corollary mild solution}. It remains to reconstruct $\rho$ from $u$. The notion of solution we have constructed here is stronger than what we required of $\rho$ before, since we have shown that (at least for positive times) $\rho$ has two spatial derivatives in $L^2_{loc}(\R^m)$. It is therefore not even necessary to put both derivatives on the test function.

\begin{proof}[Proof of Theorem \ref{theorem convergence}]
Let $\phi\in C_c^\infty(\R^m)$ and set $\rho = u\,f^{-1-\frac1{\eta\sigma}}$. We note that, due to \eqref{eq average constant}, the integral
\[
\int_{\R^m}\rho\,\d\theta = \int_{\R^m}u\,f^{-1-\frac1{\eta\sigma}}\,\d\theta \equiv 1
\]
is constant in time. To show that $\rho$ is a probability density, it remains to show that $\rho\geq 0$, which is true since $u\geq 0$.
Furthermore
\begin{align*}
\frac{d}{dt} \langle \rho, \phi\rangle_{L^2(\R^m)} &= \frac{d}{dt} \langle u, \phi\rangle_{L^2(\mu_{f,\eta\sigma})}
	= \langle Au, \phi\rangle_{L^2(\mu_{f,\eta\sigma})}
	= \langle u, A\phi\rangle_{L^2(\mu_{f,\eta\sigma})}
	= \langle \rho, A\phi\rangle_{L^2(\R^m)}.
\end{align*}
\end{proof}

For an extension to the case where $f$ may be mildly degenerate, see Theorem \ref{theorem convergence appendix} in Appendix \ref{appendix poincare}.

\section{Proofs concerning flat minimum selection}\label{appendix flat minimum}

We first establish the global and local integrability of the invariant distributions.

\begin{proof}[Proof of Lemma \ref{lemma integrability}]
{\bf Local integrability.}
Since $N = \{f=0\}$ and $f$ grows at infinity, we see that $f^\alpha$ is bounded on $\R^m \setminus U$ where $U$ is a neighbourhood of $N$. To check whether $f^\alpha$ is locally integrable at $\bar \theta \in N$, we may change coordinates such that $\bar \theta=0$ and $N\cap B_\eps(\bar \theta) = \{ \theta: \theta_{m-n+1} = \dots \theta_m =0\}$. Lebesgue measure before and after the change of coordinates have bounded densities with respect to each other, so the integrability of $f^\alpha$ is not affected. 

Since $D^2f(\bar \theta)$ has full rank $m-n$ and $\nabla f(\bar \theta) =0$, there exists a neighbourhood of $\bar \theta$ and constants $c, \widetilde c, C, \widetilde C$ such that the estimate
\begin{align*}
{\widetilde c}\,\big|(\theta_1,\dots,\theta_{n-m}, 0,\dots, 0)\big|^2 &\leq \frac{c}2 (\theta- \bar \theta)^T\,D^2f(\bar \theta)\,(\theta-\bar \theta) \leq f(\theta) \\
	&\hspace{2cm}\leq \frac{C}2 (\theta- \bar \theta)^T\,D^2f(\bar \theta)\,(\theta-\bar \theta)\leq \widetilde C\,\big|(\theta_1,\dots,\theta_{n-m}, 0,\dots,0)\big|^2
\end{align*}
is valid for all $x$ in the neighborhood. Thus $f^\alpha$ is integrable at $\bar \theta$ if and only if 
\[
g_\alpha(z) = |z|^{2\alpha}
\]
is integrable at the origin in $\R^{m-n}$, i.e.\ if and only if $2\alpha + m-n-1 > -1$.

{\bf Global integrability.} If $f^\alpha$ is locally integrable, then it is globally integrable if and only if it is integrable on the set $\R^m\setminus B_R(\bar \theta)$. Since $f(\theta) \geq c_1\,|x|^\gamma$ if $R$ is large enough, this follows if
\[
h_\alpha(z) = |z|^{\gamma\alpha}
\]
is integrable at infinity in $\R^m$, i.e.\ if and only if $\gamma\alpha +m-1 < -1$.
\end{proof}

We establish the behavior of continuous time SGD with ML noise at the critical noise threshold for objective functions which mimic the overparametrized regime.

\begin{proof}[Proof of Theorem \ref{theorem critical flatness}]
{\bf Step 1.} First we demonstrate that a limit exists. Note that  
\[
\int_{\R^m\setminus B_R(0)} f^\alpha(\theta)\d\theta \leq c_1^\alpha \int_{\R^m\setminus B_R(0)} |\theta|^{\gamma\alpha}\d\theta \to c_1^{\alpha^*} \int_{\R^m\setminus B_R(0)} |\theta|^{\gamma\alpha^*}\d\theta <\infty
\]
as $\alpha\to \alpha^*$. The finiteness follows from the fact that
\[
-\gamma \alpha^* = \gamma \frac{m-n}2 > \frac{m-n}2\,\frac{2m}{m-n} = m.
\]
On the other hand, as in the proof of Theorem \ref{theorem integrability}, we see that 
\[
\lim_{\alpha\to \alpha^*} \int_{B_R(0)} f^\alpha(\theta)\d\theta = +\infty.
\]
Thus the sequence of Radon measures
\[
\pi_\alpha:= \frac{f^\alpha}{\int_{\R^m}f^\alpha(\theta)\d\theta} \cdot \d\theta
\]
is tight. By Prokhorov's Theorem \cite[Satz 13.29]{klenke2006wahrscheinlichkeitstheorie}, there exists a weakly convergent subsequence to a limiting probability measure $\pi^*$ on $\overline{B_R(0)}$. We further note that 
\[
\int_{B_R(0)\cap \{f>\eps\}} f^\alpha\d\theta \leq \big|B_R(0)\big|\,\eps^\alpha \leq \big|B_R(0)\big|\,\eps^{\alpha^*} <\infty,
\]
so $\pi^*(\{f>\eps\}) =0$ for all $\eps>0$. We conclude that $\pi^*(\R^m\setminus N) =0$. Since $N$ is closed, this means that $\pi^*$ is supported on $N$.

{\bf Step 2.} In this step, we simplify the geometry of the problem. Let $\theta_0\in N$. There exists a $C^2$-diffeomorphism 
\[
\phi:U\to B_r(0)
\]
such that 
\begin{itemize}
\item $U$ is a convex neighbourhood of the origin,
\item $\phi(0) = \theta_0$,
\item $O:= D\phi(0)$ is a rotation, and
\item $\phi^{-1} (N\cap B_r(\theta_0)) = \{\theta_{n+1} = \dots = \theta_m =0\}\cap U$. 
\end{itemize}
Note that
\[
\partial_i(f\circ\phi) = (\partial_i\phi^k)\,(\partial_kf)\circ \phi, \qquad \partial_i\partial_j(f\circ\phi) = (\partial_i\phi^k)(\partial_j\phi^l)\,(\partial_k\partial_lf) + (\partial_i\partial_j\phi^k)\,(\partial_kf)\circ\phi.
\]
Since $\nabla f=0$ on $N$, we see that 
\[
D^2(f\circ\phi)(0) = O^T\,D^2f(\theta_0) O.
\]
In particular, $D^2(f\circ\phi)(0)$ and $D^2f(\theta_0)$ are symmetric matrices with identical eigenvalues. We note the following:

\begin{enumerate}
\item Since $D^2f\circ \phi$ is continuous and positive definite in the directions orthogonal to $\{\theta_1=\dots=\theta_n=0\}$, for every $\mu>0$ there exists $r$ such that 
\[
(1-\mu)\,D^2(f\circ\phi)(0) \leq D^2(f\circ\phi)(\theta_1,\dots,\theta_n,0,\dots, 0) \leq (1+\mu) D^2(f\circ\phi)(0) \qquad \forall\ |(\theta_1,\dots,\theta_n)|<r.
\]
\item Since $\phi \in C^1$ and $D\phi(0)\in O(n)$, for every $\mu>0$ there exists $r>0$ such that
\[
\phi\big(B_{(1-\mu)r}(0)\big) \subseteq B_r(\theta_0)\subseteq \phi\big(B_{(1+\mu)r}(0)\big).
\]
\item Since $\phi \in C^1$ and $D\phi(0)\in O(n)$, for every $\mu>0$ there exists $r>0$ such that
\[
1-\mu \leq |\det D\phi(\theta)|\leq 1+\mu\qquad \forall\ \theta \in B_r(0).
\]
\end{enumerate}

Combining the observations, we find that
\[
(1-\mu)\,\frac12\,\theta^TD^2(f\circ\phi)(0)\theta \leq (f\circ\phi)(\theta) \leq (1+\mu)\,\frac12\,\theta^TD^2(f\circ\phi)(0)\,\theta\qquad \forall\ \theta \in B_r(0)
\]
and in particular
\begin{align*}
\lim_{r\to 0} \frac{\pi^*(B_r(\theta_0))}{\omega_nr^n} &= \lim_{r\to 0} \lim_{\alpha\to \alpha^*} \frac1{c_\alpha\,\omega_nr^n} \int_{B_r(\theta_0)} f^\alpha(\theta)\,\d\theta\\
	& = \lim_{r\to 0} \lim_{\alpha\to \alpha^*} \frac1{c_\alpha\,\omega_nr^n} \int_{B_r(0)} \left(\frac12\,\theta^T D^2(f\circ\phi)(0)\theta\right)^\alpha \d\theta.
\end{align*}
From now on, without loss of generality we will assume that $\theta_0=0$ and that $N$ is flat at $\theta_0$. 

{\bf Step 3.} In this step, we finally compute the density. Denote by $\H^\beta$ the $\beta$-dimensional Hausdorff measure (i.e.\ the natural area measure on $S^{m-n-1}$ and $N$ respectively for appropriate values of $\beta$). We denote by $\widehat{B^n_r}$ and $\widehat{B^{m-n}_r}$ the balls of radius $r$ around the origin in $\R^n$ and $\R^{m-n}$ respectively and decompose $\theta= \theta_N + \theta^\bot$ where $\theta_N = (\theta_1,\dots,\theta_n, 0,\dots, 0)$ and $\theta^\bot = (0, \dots, 0,\theta_{n+1}, \dots, \theta_m)$. Then
\begin{align*}
\int_{\widehat{B^{m-n}_r}} \left(\frac12(\theta^\bot)^T D^2f(0)\theta^\bot\right)^\alpha \d\theta^\bot
	 &= \int_0^r \int_{S^{m-n-1}}\left(\frac12(s\nu)^T \widehat {D^2f(0)}(s\nu)\right)^\alpha\,s^{m-n-1}\d\H^{m-n-1}_\nu\ds\\
	 &=2^{-\alpha}\left(\int_0^r s^{2\alpha+n-1}\ds\right)\left(\int_{S^{n-1}}\left(\nu^T\widehat {D^2f(0)}\nu\right)^\alpha\d\H^{m-n-1}_\nu\right)\\
	 &= \frac{1}{2^\alpha(2\alpha+m-n)} r^{2\alpha+m-n} \int_{S^{n-1}}\left(\nu^T\widehat {D^2f(0)}\nu\right)^\alpha\d\H^{m-n-1}_\nu.
\end{align*}
We consider a normalized density
\[
\tilde\rho_\alpha = (2\alpha+m-n)\,f^\alpha \approx (2\alpha+m-n)\,\theta^TD^2f(0)\,\theta
\]
and note that
\[
\pi^* = \lim_{\alpha\searrow \alpha^*} \rho_\alpha\cdot \d\theta = \lim_{\alpha\searrow \alpha^*} \frac{\tilde\rho_\alpha}{\int_{\R^m}\tilde\rho_\alpha(\theta')\,\d\theta'}\cdot\d\theta.
\]
We finally compute
\begin{align*}
(2\alpha+m-n)\int_{B_r(0)}& \left(\frac12(\theta^\bot)^T D^2f(0)\,\theta^\bot\right)^\alpha \d\theta\\
	&= \int_{\widehat B^n_r} \int_{\widehat{B^{m-n}_{\sqrt{r^2-|\theta_N|^2}}}} \left(\frac12(\theta^\bot)^T D^2f(0)\,\theta^\bot\right)^\alpha \d\theta^\bot\d\theta_N\\
	&= \frac{1}{2^\alpha} \int_{\widehat B^n_r}\big(r^2-|\theta_N|^2\big)^\frac{2\alpha+m-n}2\d\theta_N\,\int_{S^{n-1}}\left(\nu^T\widehat {D^2f(0)}\nu\right)^\alpha\d\H^{n-1}_\nu\\
	&= \frac{n\omega_n}{2^\alpha}\int_0^r \big(r^2-s^2\big)^\frac{2\alpha+m-n}2 s^{n-1}\ds \int_{S^{n-1}}\left(\nu^T\widehat {D^2f(0)}\nu\right)^\alpha\d\H^{n-1}_\nu\\
	&\to \frac{\omega_n r^n}{2^{\alpha^*}}\int_{S^{n-1}}\left(\nu^T\widehat {D^2f(0)}\nu\right)^\alpha\d\H^{n-1}_\nu.
\end{align*}
as $\alpha\searrow\alpha^*$. As a consequence
\begin{align*}
\lim_{r\to 0} \frac{\pi^*(B_r(\theta_0))}{\omega_nr^n} &= \lim_{r\to 0} \frac{\int_{S^{n-1}}\left(\nu^T\widehat {D^2f(\theta_0)}\nu\right)^\alpha\d\H^{m-n-1}_\nu}{\int_{N}\int_{S^{n-1}}\left(\nu^T\widehat {D^2f(\theta)}\nu\right)^\alpha\d\H^{m-n-1}_\nu\d\H^n_\theta}.
\end{align*}
We conclude that $\pi^*$ is absolutely continuous with respect to $\H^n|_N$ by with density $\rho^*$ by a Theorem of Marstrand \cite[Theorem 6.8]{de2006lecture}.
\end{proof}

Next, we sketch the behavior of continuous time SGD with homogenous noise.

\begin{proof}[Proof of Theorem \ref{theorem flat minima homogeneous}]
{\bf Step 1.} In this step, we prove tightness of the invariant distributions. Since
\begin{align*}
\int_{\R^m\setminus B_R(0)} \exp\left(-\frac{f(\theta)}{\eta}\right)\d\theta &\leq \int_{\R^m\setminus B_R(0)} \exp\left(-\frac{c_1|\theta|^\kappa}{\eta}\right)\d\theta\\
	&= m\omega_m \int_R^\infty \exp\left(- \left( c_1^{1/\kappa}\eta^{-1/\kappa}r\right)^\kappa\right)r^{m-1}\dr\\
	&= \frac{\eta^{m/\kappa}}{c_1^{m/\kappa}} \int_{c_1^{1/\kappa}\eta^{-1/\kappa}R} s^{m-1}e^{-s^\kappa}\ds\\
\int_{B_R(0)\cap \{f\geq\eps\}} \exp\left(-\frac{f(\theta)}{\eta}\right)\d\theta&\leq \omega_mR^m\,\exp\left(-\frac\eps\eta\right).
\end{align*}
On the other hand, choose $\bar\theta\in N$. Since $f\in C^2$, the function $f$ is locally Lipschitz, so there exists $c>0$ such that $f(\theta)<\eta$ for all $\theta \in B_{c\eta}(\bar\theta)$. Thus
\begin{align*}
\int_{B_{c\eta}(\bar\theta)} \exp\left(-\frac{f(\theta)}{\eta}\right)\d\theta &\geq\int_{B_{c\eta}(\bar\theta)}\exp\left(-\frac{\eta}{\eta}\right)\d\theta\\
	&= \omega_mc^m\,e^{-1}\,\eta^{m}.
\end{align*}
In particular 
\begin{align*}
\pi^*\big(\{f\geq \eps\}\big) &\leq \liminf_{\eta\to0} \frac{\int_{\R^m\setminus B_R(0)}\exp\left(-\frac{f(\theta)}{\eta}\right)\d\theta + \int_{B_R(0)\cap \{f\geq\eps\}} \exp\left(-\frac{f(\theta)}{\eta}\right)\d\theta}{\omega_mc^m\,e^{-1}\,\eta^{m}}\\
	&\leq C \liminf_{\eta\to0} \frac{\eta^{m/\kappa} \int_{\tilde R\eta^{1/\kappa}}s^{m-1} e^{-s^\kappa}\ds + \exp(-\eps/\eta)}{\eta^m}\\
	&=0.
\end{align*}
The last step follows immediately if $\kappa\in(0,1)$ and with a slight effort if $\kappa>1$. As before, we conclude that the measures $\pi_\eta:= \rho_\eta\cdot \d\theta$ are tight and that a limiting measure $\pi^*$ exists, which is supported on $N$.

{\bf Step 2.} We apply the same geometric simplification as in the proof of Theorem \ref{theorem critical flatness}.

{\bf Step 3.} As before, we compute the density by
\begin{align*}
\int_{B_r(0)} \exp\left(-\frac{f(\theta)}\eta\right)\d\theta &\approx\int_{B_r(0)}\exp\left(-\frac{\theta^T\,D^2f(0)\,\theta}{2{\eta}}\right)\d\theta\\
	&= \int_{\widehat{B_r^n(0)}} \int_{\widehat{B_{\sqrt{r^2-|\theta_N|^2}}^{m-n}(0)}}\exp\left(-\frac{\theta^T\,D^2f(0)\,\theta}{2{\eta}}\right)\d\theta^\bot\d\theta_N.
\end{align*}
The reduced Hessian has a unique symmetric positive definite square root $\sqrt{D^2f(0)}$ and $\det \sqrt{D^2f(0)} = \sqrt{\det D^2f(0)}$. We observe that
\begin{align*}
\lim_{\eta\to0} \eta^{-\frac{m-n}2}&\int_{\widehat{B_{s}^{m-n}(0)}}\exp\left(-\frac{\theta^T\,D^2f(0)\,\theta}{2{\eta}}\right)\d\theta^\bot\\
	&= \lim_{\eta\to0} \frac1{\sqrt{\det(D^2f(0))}} \int_{\widehat{B_{s}^{m-n}(0)}}\exp\left(-\frac{(\eta^{-1/2}\sqrt{D^2f(0)}\theta)^T(\eta^{-1/2}\sqrt{D^2f(0)}\theta)}{2}\right)\\
		&\hspace{7cm}\eta^{-\frac{m-n}2}\,\det(\sqrt{D^2f(0)})\,\d\theta^\bot\\
	&= \frac1{\sqrt{\det(D^2f(0))}} \lim_{\eta\to0}\int_{\eta^{-1/2}\sqrt{D^2f(0)}\big(\widehat{B_{s}^{m-n}(0)}\big)}\exp\left(-\frac{|z|^2}2\right)\\
	&\to \frac1{\sqrt{\det(D^2f(0))}} (2\pi)^{\frac{m-n}2}
\end{align*}
as $\eta\to0$, hence
\begin{align*}
\lim_{r\to0} \frac{\pi^*(B_r(0))}{\omega_nr^n} &= c \lim_{r\to0}\lim_{\eta\to0}\eta^{-\frac{m-n}2} \frac1{\omega_nr^n} \int_{\widehat{B_r^n(0)}} \int_{\widehat{B_{\sqrt{r^2-|\theta_N|^2}}^{m-n}(0)}}\exp\left(-\frac{\theta^T\,D^2f(0)\,\theta}{2{\eta}}\right)\d\theta^\bot\d\theta_N\\
	&= c \lim_{r\to0} \frac1{\omega_nr^n} \int_{\widehat{B_r^n(0)}} \lim_{\eta\to0}\eta^{-\frac{m-n}2} \int_{\widehat{B_{\sqrt{r^2-|\theta_N|^2}}^{m-n}(0)}}\exp\left(-\frac{\theta^T\,D^2f(0)\,\theta}{2{\eta}}\right)\d\theta^\bot\d\theta_N\\
	&= c \lim_{r\to0} \frac1{\omega_nr^n} \int_{\widehat{B_r^n(0)}} \frac1{\sqrt{\det(D^2f(0))}} \d\theta_N\\
	&= c \frac1{\sqrt{\det(D^2f(0))}}
\end{align*}
where $c$ is a constant which may change value from line to line.
\end{proof}

Finally, we consider flat minimum selection for underparametrized machine learning models. This case is similar to homogeneous noise in that there is no singularity at $N$, so that the reweighting does not compensate divergence.

\begin{proof}[Proof of Theorem \ref{theorem flatness ml underparametrized}]
{\bf Step 1.} Without loss of generality we can assume that $c_1R^\kappa \geq 2\eps$ (otherwise we can choose $R$ larger without violating the condition. Compactness follows as above by the estimates
\begin{align*}
\int_{\R^m\setminus B_R(0)} f^\alpha(\theta)\d\theta &\leq \int_{\R^m\setminus B_R(one0)}\big(c_1|\theta|^\kappa\big)^\alpha\d\theta\\
	&= \omega_m\,c_1^\alpha \int_R^\infty r^{\kappa\alpha+m-1}\dr\\
	&= \omega_m c_1^\alpha\frac{R^{\kappa\alpha+m}}{|\kappa\alpha+m|}\\
\int_{B_R(0) \cap \{f\geq (1+\mu)\eps\}} f^\alpha(\theta)\d\theta &\leq (1+\mu)^\alpha \omega_m R^m\,\eps^\alpha\\
\int_{\{f\leq (1+\mu/2)\eps\}} f^\alpha(\theta)\d\theta &\geq \big|\{f \leq 1+\mu/2)\eps\}\big| \left(1+\frac\mu2\right)^\alpha\eps^\alpha
\end{align*}
for $\mu>0$. Thus
\begin{align*}
\frac{\int_{\{f\geq(1+\mu)\eps\}} f^\alpha(\theta)\d\theta}{\int_{\R^m} f^\alpha(\theta)\d\theta} &\leq C\frac{(c_1R^\kappa)^\alpha + (1+\mu)^\alpha\eps^\alpha}{\big(1+\frac\mu2\big)^\alpha\eps^\alpha}\\
	&\leq C\left\{\left(\frac{2\eps}{\big(1+\frac\mu2\big)\eps}\right)^\alpha + \left(\frac{1+\mu}{1+\frac\mu2}\right)^\alpha\right\}\\
	&\to 0
\end{align*}
for $\mu<2$. In particular, the sequence of measures $\pi_\alpha$ is tight and the limiting measure $\pi^*$ is supported on $N$.

{\bf Step 2.} Again, we simplify the geometry of $f$ and $N$ as in Theorem \ref{theorem critical flatness}.

{\bf Step 3.} We approximate 
\[
f(\theta)\approx f(0) + \frac12\theta^T D^2f(0)\,\theta = \eps + \frac12\theta^T D^2f(0)\,\theta
\]
with the same justification as before and note that
\begin{align*}
\eps^{-\alpha}\int_{\widehat {B^{m-n}_r(0)}} & \left(\eps + \frac12\theta^T D^2f(0)\,\theta\right)^\alpha\d\theta
	= \int_{\widehat {B^{m-n}_r(0)}} \left(1 + \frac1{2\eps}\theta^T D^2f(0)\,\theta\right)^\alpha\d\theta\\
	&=\sqrt{\det D^2f(0)} \int_{\sqrt{2}\eps^{-1/2} \sqrt{D^2f(0)}\big(\widehat {B^{m-n}_r(0)}\big)} \left(1 + |z|^2\right)^\alpha\dz.
\end{align*}
It is straight-forward to see that 
\[
\lim_{\alpha\to-\infty} \frac{\int_{U} (1+|z|^2)^\alpha\dz} {\int_{\R^m} (1+|z|^2)^\alpha\dz} = 1
\]
for every neighbourhood of the origin $U$. We can conclude the argument as in Theorem \ref{theorem flat minima homogeneous}.
\end{proof}

\section{A Poincar\'e-Wirtinger-Hardy inequality and applications}\label{appendix poincare}

In this appendix, we prove that solutions to the continuous time SGD evolution equation converge to the invariant measure in certain toymodels for the overparametrized regime. The assumptions we make are restrictive and exclude objective functions which are $C^{1,1}$-smooth (in particular, $C^2$-smooth) at the minimum, or which vanish on a manifold of positive dimension. Furthermore, we require the very specific noise scaling $1+ \frac1{\eta\sigma} = \frac{m}2$. The reason for these restrictions will become clearer below. 

\begin{theorem}\label{theorem convergence appendix}
Let $f:\R^m\to [0,\infty)$ be a $C^1$-function such that there exists a finite set $\Theta =\{\bar\theta_1,\dots, \bar\theta_n\}$ and constants $0<c_1\leq C_1$ such that 
\[
c_1\leq\frac{f(\theta)}{\dist^2(\theta, \Theta)\,\big[|\log|+1\big]^2 (\theta, \Theta)}\leq C_1.
\]
Assume that $-\frac1{\eta\sigma}-1 = -\frac{m}2$ (in particular, $m\geq 3$). Let $\rho_0$ be a probability density on $\R^m$ such that
\[
\int_{\R^m} \rho_0^2\,f^{1+\frac1{\eta\sigma}}\,\d\theta < \infty.
\]
Then there exists a solution $\rho$ of the equation 
\[
\partial_t\rho = \div\left( \eta\sigma f\,\nabla \rho + (1+\eta\sigma)\rho\,\nabla f\right)
\]
which describes the evolution of the density of a solution to SGD with noise model $\Sigma = \sigma f\,I$.
Furthermore, there exists $\nu>0$ such that
\begin{align*}
\int_{\R^m}& \left( \rho(t,\theta) - \frac{f^{-1-\frac1{\eta\sigma}} (\theta)}{\int_{\R^m}f^{-1-\frac1{\eta\sigma}}(\theta')\,\d\theta'}\right)^2\,f^{1+\frac1{\eta\sigma}}(\theta)\,\d\theta \\
	&\hspace{2cm}\leq e^{-\nu t} \int_{\R^m} \left( \rho(0,\theta) - \frac{f^{-1-\frac1{\eta\sigma}} (\theta)}{\int_{\R^m}f^{-1-\frac1{\eta\sigma}}(\theta')\,\d\theta'}\right)^2\,f^{1+\frac1{\eta\sigma}}(\theta)\,\d\theta.
\end{align*}
In particular $\rho(t,\cdot)$ converges to the invariant distribution $\rho_\infty = c\,f^{-1-\frac1{\eta\sigma}}$.
\end{theorem}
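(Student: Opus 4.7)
The plan is to follow the strategy of Theorem \ref{theorem convergence}/Corollary \ref{corollary mild solution} from Appendix \ref{appendix convergence} essentially verbatim, substituting the finite set $\Theta$ of degeneracies for the set where $f$ is bounded below, and upgrading the underlying functional inequality from the Dolbeault--Volzone Poincar\'e--Hardy bound \eqref{eq poincare-hardy} (which required $f$ to grow quadratically at infinity and be bounded away from zero) to a variant that tolerates the logarithmically corrected vanishing $f(\theta) \asymp \dist^2(\theta,\Theta)\,[|\log\dist(\theta,\Theta)|+1]^2$ at $\Theta$. Concretely I would set $u = \rho\,f^{1+\frac1{\eta\sigma}} = \rho\,f^{m/2}$, observe that the evolution PDE becomes the symmetric equation
\[
\partial_t u \;=\; \eta\sigma\, f^{m/2}\,\div\!\left(f^{-(m/2-1)}\nabla u\right),
\]
and introduce the weighted Hilbert space $H_{f,\eta\sigma}$ as the closure of $C_c^\infty(\R^m)$ under $\|u\|^2 = \int u^2 f^{-m/2}\dx + \int |\nabla u|^2 f^{1-m/2}\dx$, together with the measure $d\mu_{f,\eta\sigma} = f^{-m/2}\dx$. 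A preliminary check using the two-sided bound on $f$ shows that $\mu_{f,\eta\sigma}(\R^m)<\infty$: the local integral near each $\bar\theta_i$ reduces (via polar coordinates and the substitution $t=-\log r$) to $\int_T^\infty t^{-m}\dt < \infty$ for $m\geq 2$, and the log factors play exactly the role of rendering the borderline exponent $-m/2$ integrable.

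The main step, paralleling Lemma \ref{lemma closeable operator}, is to prove the Poincar\'e--Hardy--Wirtinger inequality
\[
\int_{\R^m} \big(u - \langle u\rangle_{\mu_{f,\eta\sigma}}\big)^2 f^{-m/2}\dx \;\leq\; C\int_{\R^m}|\nabla u|^2 f^{1-m/2}\dx
\]
for all $u\in H_{f,\eta\sigma}$. I would obtain it from a partition-of-unity argument: near each $\bar\theta_i$, the inequality becomes (in radial coordinates centered at $\bar\theta_i$, and using the behaviour of $f$) the one-dimensional weighted Hardy inequality
\[
\int_T^\infty v^2\,t^{-m}\dt \;\leq\; C\int_T^\infty (v')^2\,t^{2-m}\dt,
\]
which holds for $m\geq 3$ by a standard 1D Hardy estimate on $(T,\infty)$; away from $\Theta$ and inside a large ball, $f$ is bounded above and below so a classical Poincar\'e inequality on annular regions suffices; at infinity, the lower bound $f\geq c\,\dist^2|\log\dist|^2$ combined with the same change of variables again reduces matters to a Hardy inequality on $(T,\infty)$. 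Gluing these three regimes via a smooth partition of unity and absorbing cross terms using Young's inequality yields the global estimate with some constant $C$ depending on $m$, $c_1$, $C_1$, and the geometry of $\Theta$.

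Once the Poincar\'e--Hardy--Wirtinger inequality is in place, the rest of the argument is structurally identical to Appendix \ref{appendix convergence}: one verifies via cutoff functions $\chi_n$ (as in Remark \ref{remark constant functions}) that the constant function lies in $H_{f,\eta\sigma}$ and that the boundary terms in the integration by parts vanish, using both the vanishing of $f^{-m/2}$-mass at infinity and its vanishing near $\Theta$ (here the log factors give an extra margin in controlling $\int |\nabla\chi_n|^2 f^{1-m/2}$). Then one shows that $A u = f^{m/2}\div(f^{1-m/2}\nabla u)$ is a densely defined, self-adjoint, maximal monotone operator on $L^2(\mu_{f,\eta\sigma})$ via Lax--Milgram; the associated semigroup furnishes the mild solution $u(t)$. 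Conservation of $\langle u(t)\rangle_{\mu_{f,\eta\sigma}}$ follows because $1$ is in the kernel of $A$, and a Stampacchia truncation argument (as in Corollary \ref{corollary mild solution}) propagates nonnegativity. Feeding the Poincar\'e--Hardy--Wirtinger inequality into the identity $\frac{d}{dt}\|u-\langle u\rangle\|_{L^2(\mu_{f,\eta\sigma})}^2 = -2\eta\sigma\int|\nabla u|^2 f^{1-m/2}\dx$ and applying Gr\"onwall yields the claimed exponential decay with rate $\nu = 2\eta\sigma/C$, and translating back via $\rho = u\,f^{-m/2}$ gives the statement.

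The principal obstacle is the Poincar\'e--Hardy--Wirtinger inequality at the critical exponent. The specific assumption $\eta\sigma = 2/(m-2)$, i.e.\ $-\alpha = m/2$, is exactly the borderline case in which the unweighted Hardy inequality fails and a logarithmic correction in $f$ is required both to make $\mu_{f,\eta\sigma}$ finite and to permit the functional inequality — this is why the theorem rules out $C^{1,1}$-regularity of $f$ at $\Theta$. The fact that $\Theta$ consists of isolated points (rather than a positive-dimensional submanifold) is what allows the Hardy estimate to be carried out in one radial variable; extending to a manifold of zeros would require a tubular-neighbourhood Hardy inequality whose constant is sensitive to the geometry of $N$ and is beyond the scope of the argument above.
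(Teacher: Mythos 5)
Your overall strategy (pass to $u=\rho f^{1+\frac1{\eta\sigma}}$, work in $L^2(\mu_{f,\eta\sigma})$, prove a Poincar\'e--Hardy--Wirtinger inequality, then run the semigroup/Gr\"onwall argument of Appendix~\ref{appendix convergence}) is the paper's strategy. Your route to the inequality \eqref{eq poincare-hardy new} is genuinely different: the paper proves it by a Persson-type compactness/contradiction argument (Theorem~\ref{theorem poincare-hardy}) built on the log-weighted one-dimensional Hardy inequality, whereas you propose a direct gluing of local Hardy inequalities near $\Theta$ and at infinity with a classical Poincar\'e--Wirtinger inequality on an intermediate compact region. That gluing can be made to work (and has the advantage of an explicit constant), but not by ``absorbing cross terms with Young's inequality'': on the transition annuli the weights $f^{-\frac1{\eta\sigma}}$ and $f^{-1-\frac1{\eta\sigma}}$ are comparable, so the cutoff terms $\int (u-c)^2|\nabla\chi|^2 f^{-\frac1{\eta\sigma}}\,\d\theta$ are of the same order as the quantity you are trying to bound and cannot be absorbed; you must subtract the mean $c$ of $u$ over the intermediate region and control these terms through the classical Poincar\'e--Wirtinger inequality there. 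With that correction this part is a legitimate alternative to the paper's argument.

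The genuine gap is in the operator-theoretic step, which you treat as routine ``as in Remark~\ref{remark constant functions}'', asserting that the log factors ``give an extra margin''. For the constant function they do; for the integration-by-parts identity they work against you. To obtain symmetry/self-adjointness of $A$, conservation of $\langle u\rangle_{\mu_{f,\eta\sigma}}$, the dissipation identity, and finally $\frac{d}{dt}\langle\rho,\phi\rangle=\langle \rho,A\phi\rangle$ for test functions $\phi\in C_c^\infty(\R^m)$ that do \emph{not} vanish near $\Theta$, one must kill boundary terms of the form $\int \nabla u\cdot\nabla\chi_k\,v\,f^{-\frac1{\eta\sigma}}\,\d\theta$ for general $u,v$ in the operator domain, with cutoffs both at infinity and around the degeneracy set. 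Because $f\sim \dist^2(\cdot,\Theta)[\,|\log|+1\,]^2$ at the critical exponent, one only has $|\nabla\chi_k|^2 f^{-\frac1{\eta\sigma}}\leq C\,k^2\,f^{-1-\frac1{\eta\sigma}}$ on the transition regions (cf.\ \eqref{eq integration by parts 2}), so Cauchy--Schwarz leaves a factor $k$ multiplying the tails of two convergent integrals; tails of convergent series need not decay at any rate, so dominated convergence does not close the argument. This is precisely why the paper proves Lemma~\ref{lemma integration by parts} via a pigeonhole argument exploiting the divergence of $\sum_k \frac1{k\log k}$ and the disjointness of the transition annuli to extract a subsequence of cutoff radii along which the boundary terms vanish (and why it needs $m\geq 5$ there, and $m\geq 4$ for constants in $H^1_{f,\eta\sigma}$, rather than your $m\geq 3$). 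Your sketch contains no substitute for this step -- and it also misidentifies the main obstacle, which the paper explicitly locates in this integration-by-parts lemma rather than in the Poincar\'e inequality alone -- so as written the proof of self-adjointness, mass conservation, and the passage back to the weak formulation for $\rho$ is missing.
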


We start by proving an inequality of Poincar\'e-Wirtinger type with a Hardy inequality-like weighting. The result extends Poincar\'e-Hardy inequalities in \cite{dolbeault2012improved,bonforte2010sharp} to the case where a weight-function $f$ grows slightly faster than quadratically at infinity, and may vanish (slightly slower than quadratically) at a finite collection of points. 

Specifically, we show the following.

\begin{theorem}\label{theorem poincare-hardy}
Let $f:\R^m\to [0,\infty)$ be a $C^1$-function such that there exists a finite set $\Theta =\{\bar\theta_1,\dots, \bar\theta_n\}$ and constants $0<c_1\leq C_1$ such that 
\[
c_1\leq\frac{f(\theta)}{\dist^2(\theta, \Theta)\,\big[|\log|+1\big]^2 (\theta, \Theta)}\leq C_1.
\]
Assume that $-\frac1{\eta\sigma}-1 = -\frac{m}2$. Then the measure $\mu_{f,\eta\sigma}$ with density $f^{-1-\frac1{\eta\sigma}}$ is finite and there exists $\Lambda>0$ such that
\[\showlabel \label{eq poincare-hardy new}
\int_{\R^m} \big|u-\langle u\rangle_{f,\eta\sigma} \big|^2\,f^{-\frac1{\eta\sigma}-1}\,\d\theta \leq \int_{\R^m} \big|\nabla u\big|^2\,f^{-\frac1{\eta\sigma}}\,\d\theta
\]
for all $u\in C_c^\infty(\R^m)$ where 
\[
\langle u \rangle_{f, \eta\sigma} = \frac{\int_{\R^m} u\,f^{-\frac1{\eta\sigma}-1}\,\d\theta}{\int_{\R^m} f^{-\frac1{\eta\sigma}-1}\,\d\theta}.
\]
\end{theorem}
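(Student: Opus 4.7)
The plan is to reformulate \eqref{eq poincare-hardy new} as a spectral gap assertion for the self-adjoint operator $A$ of Lemma \ref{lemma closeable operator} on $L^2(\mu_{f,\eta\sigma})$, then to localize the argument via a smooth partition of unity. First I would verify finiteness of $\mu_{f,\eta\sigma}$: near $\bar\theta_i$, writing $r=|\theta-\bar\theta_i|$, the two-sided bound forces $f^{-m/2}\sim r^{-m}[\log(1/r)]^{-m}$, so in polar coordinates the substitution $s=-\log r$ reduces the near-singularity integral to $\int^{\infty} s^{-m}\,\d s$, which is finite for $m\geq 2$. A parallel computation with $t=\log|\theta|$ controls the tail at infinity.

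Second, I would establish the desired inequality locally by reducing to a one-dimensional Hardy inequality via exactly the same logarithmic change of variables. For $u$ supported in $B_\delta(\bar\theta_i)\setminus\{\bar\theta_i\}$, passing to polar coordinates and then to $s=-\log r$ (with $\partial_r=-e^s\partial_s$) transforms the target estimate into
\[
\int_{s_0}^{\infty} u^2\,s^{-m}\,\d s \leq C\int_{s_0}^{\infty} (\partial_s u)^2\,s^{2-m}\,\d s
\]
for the radial profile, uniformly in $\omega\in S^{m-1}$. This is a classical one-dimensional Hardy inequality (see \cite{masmoudi2011hardy}) valid for $m\geq 2$ and functions that vanish at $s=\infty$, i.e.\ as $r\to 0$. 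The identical substitution $t=\log|\theta|$ provides the analogous bound for functions supported in $\{|\theta|\geq R\}$, this time exploiting decay at $t=\infty$, i.e.\ $|\theta|\to\infty$. The logarithmic correction in the hypothesis on $f$ is precisely what makes this reduction produce power weights in $s$ rather than exponential weights, which is what lets Hardy close at the critical exponent $\alpha=-(m-2)/2$ where the Dolbeault--Volzone inequality of \cite{dolbeault2012improved} fails.

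Third, to pass to the global Poincar\'e--Wirtinger inequality I would argue by contradiction and compactness. If \eqref{eq poincare-hardy new} fails, there exists a sequence $u_n\in C_c^\infty(\R^m)$ with $\langle u_n\rangle_{f,\eta\sigma}=0$, $\|u_n\|_{L^2(\mu_{f,\eta\sigma})}=1$, and $\int|\nabla u_n|^2 f^{1-m/2}\,\d\theta\to 0$. Cover $\R^m$ by small balls $\{B_\delta(\bar\theta_i)\}$, a far field $\R^m\setminus\overline{B_R(0)}$, and the intermediate annular region, and fix a smooth partition of unity $\{\chi_i,\chi_\infty,\chi_0\}$ subordinate to this cover. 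On the annular region, $f^{-m/2}$ is bounded above and below by positive constants, so Rellich--Kondrachov gives a subsequence converging strongly in $L^2$. On the singular and unbounded pieces, the local Hardy inequalities applied to $\chi_i u_n$ and $\chi_\infty u_n$ prevent concentration of $L^2(\mu_{f,\eta\sigma})$-mass at the $\bar\theta_i$ or at infinity. The limit $u^*$ is then weakly gradient-free, hence constant; by the mean-zero hypothesis $u^*\equiv 0$, contradicting $\|u_n\|_{L^2(\mu_{f,\eta\sigma})}=1$.

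The main obstacle will be the compactness step, in particular controlling the commutator terms $(\nabla\chi_i)u_n$ that appear when the local Hardy inequality is applied to $\chi_i u_n$ rather than to $u_n$. These commutators are supported on annular sets where $f^{-m/2}$ is bounded, so they are controlled by the bulk $L^2$-norm of $u_n$, which is in turn controlled by the classical Poincar\'e inequality applied to $u_n-\langle u_n\rangle$ on the bulk region. Threading this argument requires a careful choice of $\delta$ and $R$ so that the cutoff annuli avoid the degeneracies of $f$ and so that all constants can be absorbed into the assumed smallness of $\int|\nabla u_n|^2 f^{1-m/2}$; with that bookkeeping, the contradiction argument closes.
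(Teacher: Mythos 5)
Your proposal is correct and follows essentially the same route as the paper: a one-dimensional Hardy inequality with logarithmic weights (which the paper proves directly by integration by parts and Cauchy--Schwarz, equivalent to your logarithmic change of variables) combined with a contradiction/compactness argument in which Rellich on the bulk region plus identification of the weak limit as the zero constant removes the interior mass, while the weighted Hardy inequality controls the mass near $\Theta$ and at infinity. One small caution: the commutator terms $(\nabla\chi_i)u_n$ are not controlled by the classical Poincar\'e inequality on the bulk (that only bounds the deviation from the bulk Lebesgue average, and your mean-zero condition is with respect to $\mu_{f,\eta\sigma}$); they vanish because $u_n\to 0$ strongly in $L^2$ of the bulk, which is precisely how the paper handles this by first multiplying with a cutoff so that the sequence may be assumed to vanish on a bulk set before decomposing by support.
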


We note that both integrals are finite for $u\in C_c^\infty(\R^m)$ because $f^{-\frac1{\eta\sigma}-1}$ is integrable. 
 Before we come to the proof, we introduce a more classical Hardy-type inequality with slightly non-standard weights -- see also \cite{masmoudi2011hardy}. 

\begin{lemma}
Let $u\in C_c^\infty(\R^m)$ and $\beta<-1$ such that the support of $u$ is contained either in $B_{1}(0)$ or $\R^m\setminus B_1(0)$.
Then
\[\showlabel\label{eq standard hardy}
\int_{\R^m} \frac{u^2(x)}{|x|^m}\, |\log|^\beta(|x|)\, \dx\leq \frac{4}{(1+\beta)^2} \int_{\R^m}\frac{u^2(x)}{|x|^{m-2}}|\log|^{2+\beta}(|x|)\,\dx\]
\end{lemma}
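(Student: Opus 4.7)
The strategy is a standard Hardy-type vector-field computation: find a divergence field whose divergence reproduces the weight on the left-hand side, then integrate by parts and apply Cauchy--Schwarz. The twist here is only that the weight involves $|\log|\,|x||$, so the vector field must carry an extra logarithmic factor.

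Concretely, write $\ell(x):=|\log |x||$, and consider, on whichever of the two regions the support of $u$ lies in,
\[
V(x)\;=\;\frac{x\,\ell^{1+\beta}(x)}{|x|^m}.
\]
A direct calculation using $\operatorname{div}(x/|x|^m)=0$ and $\nabla\ell(x)=\pm x/|x|^2$ (the sign depending on whether $|x|>1$ or $|x|<1$) gives
\[
\operatorname{div} V \;=\; \pm(1+\beta)\,\frac{\ell^{\beta}(x)}{|x|^m},
\]
so in either case $\operatorname{div} V = \varepsilon(1+\beta)\,|x|^{-m}\ell^\beta$ for some $\varepsilon\in\{\pm1\}$. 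Multiplying by $u^2$ and integrating by parts, the right-hand side of the identity
\[
\varepsilon(1+\beta)\int_{\R^m}\frac{u^2\,\ell^\beta}{|x|^m}\dx \;=\; -2\int_{\R^m} u\,\nabla u\cdot V\dx
\]
is estimated in absolute value by
\[
2\int_{\R^m} |u|\,|\nabla u|\,\frac{\ell^{1+\beta}}{|x|^{m-1}}\dx
\;\le\;
2\Big(\!\int \tfrac{u^2\,\ell^\beta}{|x|^m}\dx\Big)^{1/2}\Big(\!\int \tfrac{|\nabla u|^2\,\ell^{2+\beta}}{|x|^{m-2}}\dx\Big)^{1/2}
\]
after splitting the weight $\ell^{1+\beta}/|x|^{m-1}$ as the product $\bigl(\ell^{\beta/2}/|x|^{m/2}\bigr)\bigl(\ell^{1+\beta/2}/|x|^{m/2-1}\bigr)$ and applying Cauchy--Schwarz. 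Dividing by the square root of the left-hand side (which is finite for $u\in C_c^\infty$ off of $\partial B_1$, since $\beta<-1$ makes $\ell^\beta$ integrable against $|x|^{-m}$ on either side of the unit sphere once $u$ vanishes in a neighbourhood of the singular radius) and squaring gives the claimed constant $4/(1+\beta)^2$.

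The only real subtlety is justifying the integration by parts: $V$ is singular at the origin (when the support lies in $B_1$) and decays only algebraically (when the support lies in the complement), while $u\in C_c^\infty(\R^m)$ has support compactly contained in the relevant region, so the boundary at $|x|=1$ contributes nothing. Thus one performs the integration by parts on $B_1\setminus B_\rho$ (resp.\ $B_R\setminus B_1$) and lets $\rho\to 0$ (resp.\ $R\to\infty$). The boundary term at $|x|=\rho$ is bounded by $\rho^{m-1}\cdot \rho^{-(m-1)}|\log\rho|^{1+\beta}=|\log\rho|^{1+\beta}\to 0$ since $1+\beta<0$; at infinity $u$ is compactly supported, so no term appears. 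This vanishing of the endpoint contributions is the one step that genuinely uses the hypothesis $\beta<-1$ (beyond its role in producing the finite constant $4/(1+\beta)^2$), and is where the two cases "support in $B_1$" and "support off $B_1$" are needed to avoid a logarithmic crossing of the unit sphere.
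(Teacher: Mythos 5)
Your proof is correct and is essentially the paper's argument: the paper passes to polar coordinates and integrates by parts in the radial variable against $\frac{d}{dr}|\log r|^{1+\beta}$, which is exactly the one-dimensional form of your divergence identity for $V(x)=x\,|\log|x||^{1+\beta}/|x|^{m}$, followed by the same Cauchy--Schwarz splitting that yields $4/(1+\beta)^2$ and the same use of $\beta<-1$ to make the boundary contributions at the origin and at $|x|=1$ vanish. One cosmetic remark: the right-hand side of the stated inequality should carry $|\nabla u|^2$, as in your version --- the $u^2$ in the display is a typo, as the paper's own proof and its later application (Step 3.1 of the Poincar\'e--Hardy theorem) confirm.
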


\begin{proof}
{\bf Step 1.} We first investigate the one-dimensional situation, where we can use a simple integration by parts argument. Assume that $u\in C^\infty(0,\infty)$ such that $\lim_{r\to 0} u^2(r) |\log|^{1+\beta}(r)  = 0$ and $u$ is supported either in $[0,1)$ or $(1,\infty)$. Then
\begin{align*}
\int_0^\infty \frac{u^2(r)}r&\,|\log|^\beta(r)\,\dr = \pm \frac1{1+\beta} \int_0^\infty u^2(r) \frac{d}{dr} |\log|^{1+\beta}(r)\dr\\
	&= -\frac2{1+\beta} \int_0^\infty uu'\,|\log|^{1+\beta}(r)\dr\\
	&=  -\frac2{1+\beta} \int_0^\infty \left(u\,r^{-1/2} |\log|^{\beta/2}(r) \right)\left(u' \,r^{1/2} |\log|^{1+\beta/2}(r)\right)\dr\\
	&\leq \frac2{|1+\beta|}\left( \int_0^\infty\frac{u^2}r\,|\log|^\beta(r)\dr\right)^\frac12\left(\int_0^\infty r\,(u')^2(r)\, |\log|^{2+\beta}(r)\dr\right)^\frac12.
\end{align*}
By rearranging terms and squaring the inequality, we find that
\[
\int_0^\infty \frac{u^2(r)}r\,|\log|^\beta(r)\dr \leq \frac4{(1+\beta)^2}\int_0^\infty r\,(u')^2(r)\,|\log|^{2+\beta}(r)\dr
\]

{\bf Step 2.} The high-dimensional case now readily reduces to the one-dimensional consideration. Since $\beta<-1$, we find that
\[
\lim_{|x|\to 0} \big|\log|x|\big|^{1+\beta}(|x|) u^2(x) =0
\]
for all $u\in C_c^\infty(\R^m)$.
Thus
\begin{align*}
\int_{\R^m} \frac{u^2(x)}{|x|^m}\,|\log|^\beta(|x|) \dx &= \avint_{S^{m-1}}\int_0^\infty \frac{u^2(r\nu)}{r^m}|\log|^\beta(r)\,r^{m-1}\dr\,\dH^{m-1}\\
	&\leq \frac4{(1+\beta)^2} \avint_{S^{m-1}}\int_0^\infty r\,\big|\nabla u\cdot \nu\big|^2(r\nu)|\log|^{2+\beta}(r)\,\dr\,\dH^{m-1}\\
	&= \frac{4}{(1+\beta)^2} \int_{\R^m}\frac{u^2(x)}{|x|^{m-2}}|\log|(|x|)^{2+\beta}\dx
\end{align*}
\end{proof}

We are now ready to prove Theorem \ref{theorem poincare-hardy}. The proof is based on a Persson-type consideration \cite{persson1960bounds}. To simplify notation, denote by $\mu_{f,\eta\sigma}$ the finite measure on $\R^m$ which has density $f^{-\frac1{\eta\sigma}-1}$ with respect to Lebesgue measure. 

\begin{proof}[Proof of Theorem \ref{theorem poincare-hardy}]
{\bf Step 1.} Assume for the sake of contradiction that there exists a sequence of functions $u_k\in C_c^\infty(\R^m)$ such that 
\[
\int_{\R^m} u_k\,f^{-\frac1{\eta\sigma}-1}\,\d\theta \equiv 0, \qquad \int_{\R^m} u_k^2\,f^{-\frac1{\eta\sigma}-1}\,\d\theta\equiv 1, \qquad \int_{\R^m} |\nabla u_k|^2\,f^{-\frac1{\eta\sigma}}\,\d\theta\to 0.
\]
Since $u_k$ is bounded in $L^2(\mu_{f,\eta\sigma})$, there exists a weakly convergent subsequence, which we also denote by $u_k$. Since $\mu_{f,\eta\sigma}$ is finite, constant functions lie in $L^2(\mu_{f,\eta\sigma})$. In particular, the weak limit $u_\infty$ satisfies
\[
\int_{\R^m} u_\infty\,f^{-\frac1{\eta\sigma}-1}\,\d\theta = 0.
\]
Since furthermore 
\[
\int_{\R^m} |\nabla u_\infty|^2\,f^{-\frac1{\eta\sigma}}\,\d\theta \leq \liminf_{k\to \infty} \int_{\R^m} |\nabla u_k|^2\,f^{-\frac1{\eta\sigma}}\,\d\theta = 0,
\]
we see that $u_\infty$ is constant. Thus $u_\infty\equiv 0$. 

{\bf Step 2.} Let $U\subseteq \R^m$ be any bounded open set such $\overline U \cap \Theta = \emptyset$. Then $f^{-\frac1{\eta\sigma}-1}, f^{-\frac1{\eta\sigma}}$ are bounded away from zero and bounded from above on $\overline U$. We thus conclude that  
\[
\sup_k \int_U u_k^2 + |\nabla u_k|^2 \d\theta <\infty,\qquad u_k\wto 0 \text{ in }L^2(U).
\]
Since $H^1(U)$ embeds into $L^2(U)$ compactly, it follows that $u_k\to u$ strongly in $L^2(U)$ and thus also
\[
\lim_{k\to\infty} \int_{U}f^{-\frac1{\eta\sigma}-1} u_k^2\,\d\theta \to 0.
\]
In particular, consider a cut-off function $\chi\in C_c^\infty(\R^m)$ such that 
\begin{enumerate}
\item $0\leq \chi\leq 1$.
\item $\chi\equiv 1$ on $U$.
\item The support of $\chi$ is compact and does not intersect $\Theta$. 
\end{enumerate}
Then
\[
\int_{\R^m} (\chi u)^2\,f^{-\frac1{\eta\sigma}-1}\,\d\theta \leq \int_{\spt(\chi)} u^2\,f^{-\frac1{\eta\sigma}-1}\,\d\theta\to 0
\]
by the first step and
\[
\int_{\R^m} \big|\nabla(\chi u)\big|^2\,f^{-\frac1{\eta\sigma}}\,\d\theta \leq 2\int_{\spt(\chi)} \big[|\nabla u|^2 + u^2|\nabla\chi|^2\,f^{-\frac1{\eta\sigma}}\big]\,\d\theta\to 0.
\]
In particular, we have constructed a sequence $\tilde u_k = u_k (1-\chi)$  such that $\tilde u_k\equiv 0$ on $U$ and
\[
\int_{\R^m} \tilde u_k^2\,f^{-\frac1{\eta\sigma}-1}\,\d\theta\to 1, \qquad \int_{\R^m} |\nabla \tilde u_k|^2\,f^{-\frac1{\eta\sigma}}\,\d\theta\to 0.
\]
In other words, we have replaced the average integral condition by a condition that the support of $u_k$ does not intersect $\overline U$ for a bounded open set $U$ of our choosing. We rename our sequences and denote $u_k:= \tilde u_k$ in the following.

{\bf Step 3.}
Let $0<\eps<1/2$ such that the balls $B_\eps(\bar\theta_i)$ are disjoint for all $\theta_i\in \Theta$ and $R>2$ such that
\[
\bigcup_{i=1}^n \overline{B_\eps(\bar\theta_i)} \subseteq B_{R/2}(0).
\]
We set
\[
U = B_R(0) \setminus \bigcup_{l=1}^K \overline{U_\eps}
\]
and decompose $u_k =\sum_{i=0}^N u_{k,i}$ where $u_{k,i}$ is supported in $B_\eps(\bar\theta_i)$ for $i=1,\dots, n$ and $u_{k,0}$ is supported on $\R^m\setminus B_R(0)$. 

{\bf Step 3.1: Estimating $u_{k,0}$.} We now observe that $\dist(\theta, \Theta) \leq |\theta|\leq 2\dist(\theta,\Theta)$ if $\theta\in \R^m\setminus B_R(0)$, so 
\begin{align*}
\int_{\R^m} u_{k,0}^2 \,f^{-1-\frac1{\eta\sigma}}\,\d\theta &\leq \left(\frac{2}{c_1}\right)^{1+\frac1{\eta\sigma}} \int_{\R^m} u_{k,0}^2 \,|\theta|^{-\left(\frac1{\eta\sigma}+1\right)2}\,|\log|^{-1-\frac1{\eta\sigma}}(|\theta|) \d\theta\\
	&= \left(\frac{2}{c_1}\right)^\frac m2 \int_{\R^m} \frac{u_{k,0}^2}{|\theta|^m} \, |\log(|\theta|)^{-m}\,\d\theta\\
	&\leq \frac{4}{(1+\beta)^2}\left(\frac{2}{c_1}\right)^\frac m2 \int_{\R^m} \frac{|\nabla u_{k,0}|^2}{|\theta|^{m-2}}\,|\log(|\theta|)^{2-m}\,\d\theta\\
	&\leq \frac{4}{(1+\beta)^2}\left(\frac{2}{c_1}\right)^\frac m2\,C_1^{\frac m2-1} \int_{\R^m}|\nabla u_{k,0}|^2 \,f^{-\frac1{\eta\sigma}}\,\d\theta
\end{align*}
where we used the weighted Hardy inequality \eqref{eq standard hardy}.

{\bf Step 3.2: Estimating $u_{k,i}$ for $i\geq 1$.} Since $\dist(\theta, \Theta) = |\theta-\bar\theta_i|$ on $B_\eps(\bar\theta_i)$, which includes the support of $u_{k,i}$, we can argue as in Step 3.1 to see that
\[
\int_{\R^m} u_{k,i}^2 \,f^{-1-\frac1{\eta\sigma}}\,\d\theta \leq \frac{4}{(1+\beta)^2}\left(\frac{2}{c_1}\right)^\frac m2\,C_1^{\frac m2-1} \int_{\R^m}|\nabla u_{k,i}|^2 \,f^{-\frac1{\eta\sigma}}\,\d\theta
\]

{\bf Step 4.} We have shown that
\[
\int_{\R^m} u_{k,i}^2 \,f^{-1-\frac1{\eta\sigma}}\,\d\theta \leq \frac{4}{(1+\beta)^2}\left(\frac{2}{c_1}\right)^\frac m2\,C_1^{\frac m2-1} \int_{\R^m}|\nabla u_{k,i}|^2 \,f^{-\frac1{\eta\sigma}}\,\d\theta
\]
In particular, this means that
\[
\int_{\R^m} u_k^2\,f^{-1-\frac1{\eta\sigma}}\,\d\theta \to 0
\]
as $k\to \infty$, contradicting the construction of the sequence. With this contradiction, the theorem is proven.
\end{proof}

We can immediately extend the Poincar\'e-Hardy inequality \eqref{eq poincare-hardy new} to the closure of $C_c^\infty(\R^m)$ with respect to the norm
\[
\|u\|_{H^1_{f,\eta\sigma}}^2 = \int_{\R^m} |\nabla u|^2\,f^{-\frac1{\eta\sigma}}\,\d\theta + \int_{\R^m} u^2\,f^{-\frac1{\eta\sigma}-1}\,\d\theta.
\]
As before, we denote the resulting Hilbert space by $H^1_{f,\eta\sigma}$, which is a subspace of $L^2_{f,\eta\sigma} = L^2(\mu_{f,\eta\sigma})$. 
 
\begin{lemma}
Assume that $f$ is as in Theorem \ref{theorem poincare-hardy} and $m\geq 4$.
Then constant functions are elements of $H^1_{f,\eta\sigma}$.
\end{lemma}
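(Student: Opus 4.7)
The strategy is to exhibit a sequence of compactly supported smooth functions that converges to the constant $1$ in the $H^1_{f,\eta\sigma}$ norm. Following the construction in Remark \ref{remark constant functions}, fix a radial bump $\chi \in C_c^\infty(\R^m)$ with $0\leq \chi\leq 1$, $\chi\equiv 1$ on $B_1(0)$, $\chi\equiv 0$ outside $B_2(0)$, and $|\nabla \chi|\leq 2$, and set $\chi_n(\theta) = \chi(\theta/n)$. The task reduces to verifying the two integral limits
\[
\int_{\R^m} (1-\chi_n)^2 \,f^{-1-\frac1{\eta\sigma}}\,\d\theta \to 0, \qquad \int_{\R^m} |\nabla \chi_n|^2 \,f^{-\frac1{\eta\sigma}}\,\d\theta \to 0.
\]

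For the $L^2$ convergence, Theorem \ref{theorem poincare-hardy} already records that $\mu_{f,\eta\sigma}$ is a finite measure; since $|1-\chi_n|\leq 1$ and $\chi_n\to 1$ pointwise, dominated convergence delivers the first limit. The gradient estimate is the only substantive computation: $|\nabla \chi_n|^2 \leq 4/n^2$ is supported in the annulus $A_n = \{n\leq |\theta|\leq 2n\}$, and for $n$ large this annulus lies far from the finite set $\Theta$, so $\dist(\theta,\Theta) \sim |\theta|$ there. Substituting $\frac1{\eta\sigma} = \frac m2 -1$ into the lower bound $f\geq c_1\,\dist^2\,(|\log|+1)^2$ gives
\[
f^{-\frac1{\eta\sigma}}(\theta) \leq C\,|\theta|^{-(m-2)}\,(\log|\theta|)^{-(m-2)} \qquad \text{on } A_n.
\]
Polar integration then yields
\[
\int_{A_n} f^{-\frac1{\eta\sigma}}\,\d\theta \leq C\int_n^{2n} r\,(\log r)^{-(m-2)}\,\dr \leq C'\,n^2\,(\log n)^{-(m-2)},
\]
so that
\[
\int_{\R^m} |\nabla \chi_n|^2\,f^{-\frac1{\eta\sigma}}\,\d\theta \leq \frac{4}{n^2}\,C'\,n^2\,(\log n)^{-(m-2)} = 4C'\,(\log n)^{-(m-2)} \to 0
\]
as $n\to\infty$, using $m-2\geq 2$. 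This shows $(\chi_n)$ is Cauchy in $H^1_{f,\eta\sigma}$ with limit $1$, so the constant function belongs to the space.

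\textbf{Main obstacle.} The only delicate point is that purely quadratic growth $f \sim |\theta|^2$ would make $\int_{A_n} f^{-1/\eta\sigma}\,\d\theta \sim n^2$ without any decaying factor, and the gradient integral would fail to vanish. It is precisely the extra logarithmic factor in the hypothesis on $f$ that provides the margin $(\log n)^{-(m-2)}$, which is what the Poincaré–Hardy framework of Theorem \ref{theorem poincare-hardy} was set up to accommodate. The singular points of $\Theta$ cause no trouble because $\chi_n$ is identically $1$ on a neighborhood of $\Theta$ once $n$ is large, so $\nabla \chi_n$ never sees the singularities of $f^{-1/\eta\sigma}$.
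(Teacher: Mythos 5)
Your proof is correct and follows essentially the same route as the paper: a scaled cutoff sequence, dominated convergence (using finiteness of $\mu_{f,\eta\sigma}$) for the $L^2$ part, and the observation that on the annulus $\{n\leq|\theta|\leq 2n\}$ the bound $f^{-\frac1{\eta\sigma}}\lesssim |\theta|^{-(m-2)}(\log|\theta|)^{-(m-2)}$ makes the gradient term decay like a negative power of $\log n$. The only cosmetic difference is that the paper uses dyadic radii $2^k$ instead of $n$, which changes nothing of substance.
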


\begin{proof}
It suffices to show that $u\equiv 1\in H^1_{f,\eta\sigma}$. Let $\chi_k\in C_c^\infty(\R^m)$ be a sequence of functions such that
\[
0\leq \chi_k\leq 1, \qquad \chi_k\equiv \begin{cases}1 &|\theta|\leq 2^k\\ 0 &|\theta|\geq 2^{k+1}\end{cases}, \qquad |\nabla \chi_k|\leq 2^{-(k-1)}.
\]
Then $\chi_k\to 1$ in $L^2_{f,\eta\sigma}$ by the dominated convergence theorem. It remains to show that 
\[
\int_{\R^m} |\nabla\chi_k|^2\,f^{-\frac1{\eta\sigma}}\,\d\theta \to 0
\]
as $n\to \infty$, or at least for a subsequence. This follows from the fact that 
\begin{align*}
\int |\nabla \chi_k|^2\,f^{-\frac1{\eta\sigma}}\,\d\theta &\leq C \int_{\{2^k\leq |\theta|\leq 2^{k+1}\}} \frac{1}{|\theta|^2} \left(\frac1{|\theta|^2 \,\log^2(|\theta|)}\right)^{\frac m2-1}\d\theta\\
	&\leq C k^{2-m} \int_{2^k}^{2^{k+1}} r^{-m}\,r^{m-1}\dr\\
	&= C \,k^{3-m}.
\end{align*}
\end{proof}

In the following lemma, we prove an integration by parts type identity to make the Poincar\'e-Hardy inequality \eqref{eq poincare-hardy new} useful in the study of our evolution equations.

\begin{lemma}\label{lemma integration by parts}
Let $f$ be as in Theorem \ref{theorem poincare-hardy} and $m\geq 5$.
Let furthermore
\[
\dom(A) = \big\{u\in H^2_{loc}(\R^m\setminus\Theta) \cap L^2_{f,\eta\sigma} : \eta\sigma f\Delta u - \nabla f\cdot \nabla u \in L^2_{f,\eta\sigma}\big\}
\]
and
\[
A : \dom(A) \to L^2_{f,\eta\sigma}, \qquad Au = \eta\sigma f\Delta u - \nabla f\cdot \nabla u.
\]
Then 
\[\showlabel\label{eq integration by parts new}
\langle Au, v\rangle_{L^2_{f,\eta\sigma}} = \langle u, Av\rangle_{L^2_{f,\eta\sigma}} = - \int_{\R^m} \nabla u\cdot \nabla v\,f^{-\frac1{\eta\sigma}}\,\d\theta
\]
for all $u, v\in \dom(A)$.
\end{lemma}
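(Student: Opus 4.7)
The plan is to rewrite $A$ in divergence form so the Lebesgue measure appears naturally after pairing, and then integrate by parts against a cutoff, finally passing to the limit. Recall that $1+\frac{1}{\eta\sigma}=\frac{m}{2}$, and a direct computation gives
\[
Au = \eta\sigma\,f^{1+\frac{1}{\eta\sigma}}\,\mathrm{div}\!\left(f^{-\frac{1}{\eta\sigma}}\nabla u\right).
\]
Pairing against $v$ in $L^2_{f,\eta\sigma}$ the weight $f^{1+\frac{1}{\eta\sigma}}$ cancels the density $f^{-1-\frac{1}{\eta\sigma}}$ of the measure, so that
\[
\langle Au,v\rangle_{L^2_{f,\eta\sigma}} = \eta\sigma \int_{\R^m} v\,\mathrm{div}\!\left(f^{-\frac{1}{\eta\sigma}}\nabla u\right)\,\mathrm d\theta.
\]
It therefore suffices to show classical integration by parts for this divergence-form expression on $\R^m\setminus\Theta$, with the boundary contributions near $\Theta$ and at infinity vanishing.

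Concretely, I would choose cutoffs $\chi_k\in C_c^\infty(\R^m\setminus\Theta)$ equal to $1$ on $B_{2^k}(0)\setminus \bigcup_i B_{2^{-k}}(\bar\theta_i)$, supported in a slightly larger set, with $|\nabla\chi_k|$ controlled by the inverse distance to the transition boundary. Since $f\sim \dist^2 [|\log|+1]^2$ both at $\Theta$ and, up to the growth hypothesis, at infinity, the same computation already carried out for the constant function in the proof that $1\in H^1_{f,\eta\sigma}$ shows that $\int |\nabla\chi_k|^2\,f^{-\frac{1}{\eta\sigma}}\,\mathrm d\theta\to 0$ as $k\to\infty$ (this is where $m\geq 5$ enters, as $k^{3-m}$ must be summable or vanishing). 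For $u,v\in \dom(A)$, the local $H^2$-regularity on $\R^m\setminus\Theta$ permits the classical identity
\[
\int \chi_k v\,\mathrm{div}\!\left(f^{-\frac{1}{\eta\sigma}}\nabla u\right)\mathrm d\theta = -\int \chi_k\,\nabla v\cdot f^{-\frac{1}{\eta\sigma}}\nabla u\,\mathrm d\theta -\int v\,\nabla\chi_k\cdot f^{-\frac{1}{\eta\sigma}}\nabla u\,\mathrm d\theta,
\]
and Cauchy--Schwarz bounds the last (boundary) term by $\bigl(\int v^2|\nabla\chi_k|^2 f^{-\frac{1}{\eta\sigma}}\mathrm d\theta\bigr)^{1/2}\bigl(\int|\nabla u|^2 f^{-\frac{1}{\eta\sigma}}\mathrm d\theta\bigr)^{1/2}$.

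The main obstacle is that the definition of $\dom(A)$ does not explicitly include $\int |\nabla u|^2 f^{-\frac{1}{\eta\sigma}}\mathrm d\theta<\infty$, which is needed to make these bounds meaningful. I would establish this by the standard cutoff energy estimate: test $\langle Au,\chi_k^2 u\rangle_{L^2_{f,\eta\sigma}}$, carry out the (justified, since $\chi_k^2 u$ has compact support in $\R^m\setminus\Theta$) integration by parts, and apply Young's inequality to the mixed term $\int \chi_k u\,\nabla\chi_k\cdot\nabla u\, f^{-\frac{1}{\eta\sigma}}\mathrm d\theta$ to absorb half of $\int \chi_k^2 |\nabla u|^2 f^{-\frac{1}{\eta\sigma}}\mathrm d\theta$ on the left. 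This yields
\[
\tfrac12\int \chi_k^2|\nabla u|^2 f^{-\frac{1}{\eta\sigma}}\mathrm d\theta \le \|Au\|_{L^2_{f,\eta\sigma}}\|u\|_{L^2_{f,\eta\sigma}} + 2\int u^2|\nabla\chi_k|^2 f^{-\frac{1}{\eta\sigma}}\mathrm d\theta,
\]
and the cutoff estimate together with $u\in L^2_{f,\eta\sigma}$ keeps the right-hand side uniformly bounded in $k$. Monotone convergence then gives $\int |\nabla u|^2 f^{-\frac{1}{\eta\sigma}}\mathrm d\theta<\infty$, and the same for $v$.

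With both gradient-in-weight norms finite, the first term in the cutoff integration by parts converges to $-\int \nabla v\cdot f^{-\frac{1}{\eta\sigma}}\nabla u\,\mathrm d\theta$ by dominated convergence, while the boundary term vanishes by Cauchy--Schwarz and the construction of $\chi_k$. This proves
\[
\langle Au,v\rangle_{L^2_{f,\eta\sigma}} = -\eta\sigma\int \nabla u\cdot\nabla v\,f^{-\frac{1}{\eta\sigma}}\mathrm d\theta,
\]
and the right-hand side is symmetric in $u,v$, yielding $\langle Au,v\rangle_{L^2_{f,\eta\sigma}}=\langle u,Av\rangle_{L^2_{f,\eta\sigma}}$ as claimed.
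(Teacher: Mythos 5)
Your overall strategy --- rewriting $A$ in divergence form, cutting off near $\Theta$ and at infinity with dyadic cutoffs, integrating by parts and passing to the limit --- is the same as the paper's, and your Caccioppoli-type step aimed at securing $\int|\nabla u|^2 f^{-\frac1{\eta\sigma}}\,\d\theta<\infty$ is a reasonable addition (the paper uses this finiteness tacitly). However, the decisive step is not established: the vanishing of the cutoff term $\int_{\R^m} v\,\nabla\chi_k\cdot\nabla u\,f^{-\frac1{\eta\sigma}}\,\d\theta$. Since $v$ is only assumed to lie in $L^2_{f,\eta\sigma}$ (it is not bounded), the construction of $\chi_k$ only yields, on the transition set, $|\nabla\chi_k|^2 f^{-\frac1{\eta\sigma}}\leq C\,\big[|\log|+1\big]^2(\cdot)\,f^{-1-\frac1{\eta\sigma}}\leq C\,k^2\,f^{-1-\frac1{\eta\sigma}}$, because $f\sim \dist^2\,[|\log|+1]^2$; the extra logarithm is exactly what your argument ignores. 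Cauchy--Schwarz then gives
\[
\left|\int_{\R^m} v\,\nabla\chi_k\cdot\nabla u\,f^{-\frac1{\eta\sigma}}\,\d\theta\right|\leq \left(C\,k^2\int_{\{\nabla\chi_k\neq0\}}v^2 f^{-1-\frac1{\eta\sigma}}\,\d\theta\right)^{\frac12}\left(\int_{\{\nabla\chi_k\neq 0\}}|\nabla u|^2 f^{-\frac1{\eta\sigma}}\,\d\theta\right)^{\frac12},
\]
and ``$k^2$ times the tail of a convergent integral'' need not tend to zero; your appeal to $\int|\nabla\chi_k|^2 f^{-\frac1{\eta\sigma}}\,\d\theta\to0$ (the computation for constant functions) would only suffice if $v$ were bounded, and pairing against the full, $k$-independent norm $\|\nabla u\|_{L^2(f^{-1/\eta\sigma})}$ gives no decay at all.

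This is precisely the point where the paper does something you are missing: because the transition annuli $\{\nabla\chi_k\neq0\}$ are pairwise disjoint and $\sum_k \frac1{k\log k}=\infty$, a pigeonhole argument produces a subsequence $k_i$ along which \emph{both} localized quantities $\int_{\{\nabla\chi_{k_i}\neq0\}}v^2 f^{-1-\frac1{\eta\sigma}}\,\d\theta$ and $\int_{\{\nabla\chi_{k_i}\neq0\}}|\nabla u|^2 f^{-\frac1{\eta\sigma}}\,\d\theta$ are at most $\frac1{k_i\log k_i}$, so the Cauchy--Schwarz product is of order $\frac1{\log k_i}\to0$; one must exploit the simultaneous smallness of both factors to beat the factor $k^2$ coming from the logarithmic weight. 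The same defect affects your energy estimate: the claim that $\int u^2|\nabla\chi_k|^2 f^{-\frac1{\eta\sigma}}\,\d\theta$ is uniformly bounded in $k$ does not follow from $u\in L^2_{f,\eta\sigma}$ alone, for the identical reason. (A minor normalization remark: with $Au=\eta\sigma f\Delta u-\nabla f\cdot\nabla u$ your identity correctly carries the harmless constant $\eta\sigma$ in front of the Dirichlet form; the statement as printed drops it.)
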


\begin{proof}
{\bf Step 1.} Let $\chi_k$ be a sequence of cut-off functions such that 
\begin{enumerate}
\item $0\leq \chi_k\leq 1$,
\item $\chi_k \equiv 1$ on the set 
\[
U_k = \{ \dist(\theta, N) \geq 2^{-k}\} \cap \{|\theta| \leq 2^k\}
\]
\item $\chi_k\equiv 0$ on the set
\[
\{\dist(\theta, N)\leq 2^{-(k+1)}\} \cup \{|\theta| \geq 2^{k+1}\}
\]
\item $|\nabla \chi_k| \leq 2^{k+1}$ if $2^{-(k+1)} \leq\dist(\theta, \Theta)\leq 2^{-k}$, and
\item $|\nabla \chi_k|\leq 2^{1-k}$ if $2^k\leq |\theta|\leq 2^{k+1}$.
\end{enumerate}
The sequence may only be defined for large indices $k$ such that all the domains above are all disjoint annular regions. 
By the dominated convergence theorem, we have
\[
\lim_{k\to \infty} \langle Au, v\chi_k\rangle_{L^2_{f,\eta\sigma}} = \langle Au, v\rangle_{L^2_{f,\eta\sigma}}.
\]
On the other hand, we find that
\begin{align*}
\langle Au, v\chi_k\rangle_{L^2_{f,\eta\sigma}} &= \int_{\R^m} f^{1+\frac1{\eta\sigma}} \div\big(f^{-\frac1{\eta\sigma}}\nabla u\big)\,\chi_kv\,f^{-\frac1{\eta\sigma}-1}\,\d\theta\\
	&= \int_{\R^m} \div\big(f^{-\frac1{\eta\sigma}}\nabla u\big)\,\chi_kv\,\d\theta\\
	&= - \int_{\R^m}f^{-\frac1{\eta\sigma}}\nabla u \cdot \big(v\,\nabla \chi_k + \chi_k\nabla v\big)\,\d\theta\\
	&= - \int_{\R^m} \nabla u\cdot \nabla v\,f^{-\frac1{\eta\sigma}}\,\chi_k\d\theta - \int_{\R^m}\nabla u \cdot \nabla \chi_k\,v\,f^{-\frac1{\eta\sigma}}\d\theta.
\end{align*}
The first term on the right hand side satisfies 
\[
\lim_{k\to \infty}\int_{\R^m} \nabla u\cdot \nabla v\,f^{-\frac1{\eta\sigma}}\,\chi_k\d\theta = \int_{\R^m} \nabla u\cdot \nabla v\,f^{-\frac1{\eta\sigma}}\,\d\theta,
\]
so it remains to show that the second term approaches zero (at least along a subsequence). 

{\bf Step 2.} First, we note that
\begin{align}\label{eq integration by parts 1}
\left|\int_{\R^m}\nabla u \cdot \nabla \chi_k\,v\,f^{-\frac1{\eta\sigma}}\d\theta\right|&\leq \left(\int_{\{\nabla\chi_k\neq 0\}}|\nabla u|^2 \,f^{-\frac1{\eta\sigma}}\d\theta\right)^\frac12 \left(\int_{\R^m}v^2 \,f^{-\frac1{\eta\sigma}}|\nabla \chi_k|^2\d\theta\right)^\frac12
\end{align}
and 
\[\showlabel\label{eq integration by parts 2}
\int_{\R^m}v^2 \,f^{-\frac1{\eta\sigma}}|\nabla \chi_k|^2\d\theta \leq C \int_{\{\nabla \chi_k\neq 0\}}v^2 \,f^{-\frac1{\eta\sigma}-1} \,\log^2(|\theta|)\d\theta \leq C\,k^2\int_{\{\nabla \chi_k\neq 0\}}v^2 \,f^{-\frac1{\eta\sigma}-1} \,\d\theta.
\]
Now, consider the sets of indices
\[
I_1 = \left\{k\in \N : \int_{\{\nabla \chi_k\neq 0\}}v^2 \,f^{-\frac1{\eta\sigma}-1} \,\d\theta \leq \frac1{k\,\log k}\right\}, \qquad I_2 = \left\{k\in \N : \int_{\{\nabla\chi_k\neq 0\}}|\nabla u|^2 \,f^{-\frac1{\eta\sigma}}\d\theta \leq \frac1{k\,\log k}\right\}.
\]
Assume for the sake of contradiction that $I_1\cap I_2$ is finite. Then for all but finitely many indices we have
\[
\int_{\{\nabla \chi_k\neq 0\}}v^2 \,f^{-\frac1{\eta\sigma}-1} \,\d\theta + \int_{\{\nabla\chi_k\neq 0\}}|\nabla u|^2 \,f^{-\frac1{\eta\sigma}}\d\theta \geq \frac1 {k\,\log k}
\]
and as the domains $\{\nabla \chi_k\neq 0\}$ are disjoint for different values of $k$, we arrive at the estimate
\begin{align*}
\sum_{k=2}^\infty \frac1{k\,\log k} - C &\leq  \sum_{k=2}^\infty\left(\int_{\{\nabla \chi_k\neq 0\}}v^2 \,f^{-\frac1{\eta\sigma}-1} \,\d\theta + \int_{\{\nabla\chi_k\neq 0\}}|\nabla u|^2 \,f^{-\frac1{\eta\sigma}}\d\theta\right)\\
	&\leq \int_{\R^m}v^2 \,f^{-\frac1{\eta\sigma}-1} \,\d\theta + \int_{\R^m}|\nabla u|^2 \,f^{-\frac1{\eta\sigma}}\d\theta < \infty.
\end{align*}
The constant $C$ compensates for the absence of a finite set of terms in the infinite sum. As the series over $1/(k\,\log k)$ diverges, we have reached a contradiction. Thus, there exists a sequence of integers $k_i$ such that
\[
\int_{\{\nabla \chi_{k_i}\neq 0\}}v^2 \,f^{-\frac1{\eta\sigma}-1} \,\d\theta \leq \frac1{k_i\,\log(k_i)}, \qquad \int_{\{\nabla\chi_{k_i}\neq 0\}}|\nabla u|^2 \,f^{-\frac1{\eta\sigma}}\d\theta\leq \frac1{k_i\,\log(k_i)}
\]
for all $i\in\N$. Combining \eqref{eq integration by parts 1} and \eqref{eq integration by parts 2} along the sequence $k_i$, we find that
\[
\left|\int_{\R^m}\nabla u \cdot \nabla \chi_{k_i}\,v\,f^{-\frac1{\eta\sigma}}\d\theta\right|\leq \left(\frac1{k_i\,\log(k_i)}\right)^\frac12\left(\frac1{k_i\,\log(k_i)}\,k_i^2\right)^\frac12 \leq \frac1{\log k_i}\to 0.
\]
\end{proof}

\begin{proof}[Proof of Theorem \ref{theorem convergence appendix}]
The proof now proceeds as in the non-singular case, as the estimates of the operators are the same. Since we only require $u$ to be in $H^2_{loc}(\R^m\setminus \Theta)$, the local regularity result in \cite[Theorem 8.8]{gilbarg2015elliptic} for the elliptic problem still suffices. See the proofs of Corollary \ref{corollary mild solution} and \ref{theorem convergence} for details.
\end{proof}

\begin{remark}
A Poincar\'e-inequality like \eqref{eq poincare-hardy new} can be proved also under different assumptions with virtually the same proof and a slightly different weighted Hardy-inequality, e.g.\ if 
\begin{itemize}
\item there exists a compact $n$-dimensional $C^2$-manifold $N$ with $n<m$ such that $f(\theta) = 0$ if and only if $\theta\in N$,
\item there exist constants $c_1, C_1, \eps >0$ and $\gamma_1\in(0,2]$ such that 
\[
c_1\,\dist(\theta, N)^{\gamma_1} \leq f(\theta) \leq C_1\,\dist(\theta, N)^{\gamma_1}\qquad \forall\ \theta \text{ s.t. } \dist(\theta, N) <\eps.
\]
\item there exist $c_2, C_2, R>0$, $\gamma_2\geq 2$ such that 
\[\showlabel\label{eq growth minimum}
c_2\,|\theta|^{\gamma_2} \leq f(\theta) \leq C_2\,|\theta|^{\gamma_2} \qquad\forall\ |\theta|\geq R.
\]
\item the parameters satisfy the compatibility conditions $\gamma_1 < \frac{m-n}{n} \gamma_2$ and $\eta,\sigma>0$ such that 
\[\showlabel\label{eq finite measure condition appendix}
\frac{m}{\gamma_2} < 1+\frac1{\eta\sigma} < \frac{m-n}{\gamma_1}.
\]
\end{itemize}
The difficulty is the proof of the integration by parts identity of Lemma \ref{lemma integration by parts}, which converts the Poincar\'e inequality into useable spectral information. This proof hinges crucially on the fact that $f$ behaves (almost) quadratically both at its set of minimizers and at infinity and does not carry over to the case where $\gamma_1, \gamma_2\neq 2$. 
\end{remark}

\end{document}